\documentclass[10pt, oneside]{article}  
\usepackage{subfiles}
\usepackage{xcolor}
\usepackage{amsmath,amsfonts,amssymb,amsthm, mathtools,xfrac,bm}
\usepackage{makecell}
\newtheorem{theorem}{Theorem}[section]

\newtheorem{proposition}[theorem]{Proposition}
\newtheorem*{proposition*}{Proposition}

\newtheorem*{remark}{Remark}

\usepackage{textgreek}
\usepackage{pifont}
\usepackage{float}
\newcommand{\cmark}{\text{\ding{51}}}
\newcommand{\xmark}{\text{\ding{55}}}
\usepackage{algorithmic}
\usepackage{graphicx}
\usepackage{booktabs}
\usepackage[utf8]{inputenc}
\usepackage{textgreek}
\usepackage[english]{babel}
\usepackage[mathlines]{lineno}
\usepackage{listings}
\usepackage{textcomp}
\usepackage{stmaryrd}
\usepackage{booktabs}
\usepackage{siunitx}[=v2]
\usepackage[linesnumbered,ruled,vlined]{algorithm2e}
\usepackage[affil-it]{authblk}
\usepackage[outdir=./Figures/epstopdf/]{epstopdf}
\numberwithin{equation}{section}
\usepackage{siunitx}
\usepackage[numbers]{natbib}
\usepackage[unicode, pdfpagelabels,bookmarks,hyperindex,hyperfigures]{hyperref}
\usepackage{doi}
\usepackage{cleveref}

\usepackage{multicol}
\usepackage{multirow}
\usepackage{commath2}
\usepackage{nicefrac,xfrac}
\usepackage{graphicx}
\usepackage{subcaption}
\usepackage[margin=1in]{geometry}
\usepackage{placeins}
\usepackage{booktabs}
\usepackage{xcolor}

\definecolor{darkred}{RGB}{180,30,30}
\usepackage{tikz}
\usetikzlibrary{positioning,arrows.meta}
\usepackage{tcolorbox}
\tcbuselibrary{theorems, skins, breakable}
\usepackage{enumitem}
\newtcolorbox{keybox}[1][]{
  enhanced, breakable,
  colback=blue!4!white, colframe=blue!50!black,
  fonttitle=\bfseries, title={#1},
  left=6pt, right=6pt, top=4pt, bottom=4pt
}
\usepackage{tikz}
\usetikzlibrary{positioning, arrows.meta}
\usetikzlibrary{decorations.pathreplacing, positioning}
\usetikzlibrary{math,quotes,arrows,arrows.meta,3d,shapes,positioning}
\setcitestyle{numbers} 
 
\graphicspath{{figs},{files/figs},{files/figs},{figs}}
\usepackage{cleveref}

\author[1]{Abhishek Srivastava \thanks{\,Equal contribution. Email:\texttt{abhiresearch24@gmail.com
}}}
\author[1]{Arijit Hazra \thanks{\,Equal contribution. \,Corresponding author. Email:\texttt{ahazra@iitpkd.ac.in}}}
\author[2]{Rajesh Dubbaku \thanks{Email:\texttt{rajeshdubbaku95@gmail.com}}}

\affil[1]{\small Department of Mechanical Engineering, Indian Institute of Technology Palakkad}
\affil[2]{\small Department of Mathematics, Indian Institute of Technology Palakkad}
\title{Variational objectives for amortized Bayesian inference in inverse problems: The role of posterior conditioning}

\date{}

\begin{document}
\sloppy
\maketitle
\tableofcontents

\begin{abstract}
	Variational autoencoders (VAEs) offer an efficient approach to amortized
	Bayesian inference for inverse problems, but posterior accuracy can depend
	strongly on the choice of variational regularization, particularly when
	the inverse problem contains weakly identified parameter directions. This
	study investigates three objectives: a reverse Kullback--Leibler
	formulation (VAE-KL), an asymmetric Jensen--Shannon formulation (VAE-JS),
	and a Jensen--Shannon--Wasserstein formulation (VAE-JSWA), which replaces
	the reverse Kullback--Leibler regularizer with the squared 2-Wasserstein
	distance while retaining forward-Kullback--Leibler posterior supervision.
	A full-covariance Gaussian encoder and a pre-trained physics-based
	surrogate are used for amortized posterior inference. A local
	linear--Gaussian analysis in the generalized Fisher basis is developed to
	characterize the variance-dependent gradients of the three objectives.
	The formulations are first evaluated using linear--Gaussian benchmarks
	with known posterior solutions and subsequently tested on nonlinear
	physics-based inverse problems, including an inverse problem governed by a
	linear ODE and two PDE-constrained problems. VAE-KL performs slightly better than the other formulations in the well-conditioned benchmark, where all three approaches yield comparable posterior approximations, whereas VAE-JSWA provides substantially lower posterior errors in the strongly ill-conditioned benchmark. The nonlinear physics-based problems exhibit a similar conditioning-dependent trend, with JS-based formulations providing greater benefit as posterior ill-conditioning increases. These results indicate that posterior conditioning is an important factor in selecting variational objectives and motivate geometry-adaptive variational inference for Bayesian inverse problems.
\end{abstract}
\textbf{Keywords:} Bayesian inverse problems, Uncertainty quantification, Physics-based surrogate; Variational autoencoders, Wasserstein distance, Jensen--Shannon divergence, Posterior conditioning


\section{Introduction}
\label{sec:introduction}

Inverse problems arise in a wide range of scientific and engineering disciplines where the objective is to infer unknown model parameters from observed measurements. Given a forward model that maps parameters and independent variables to observable quantities, the inverse problem seeks to recover the parameters from typically noisy observations. Such problems are encountered in medical imaging modalities including MRI \cite{uecker_image_2008, lustig_sparse_2007, narnhofer_bayesian_2022} and CT \cite{sidky_image_2008}, subsurface characterization in geophysics \cite{virieux_overview_2009, fichtner_adjoint_2006b}, image deblurring \cite{hansen_deblurring_2006, babacan_variational_2009}, non-destructive testing \cite{shull_nondestructive_2002}, elastography \cite{doyley_modelbased_2012}, inverse acoustic and electromagnetic scattering \cite{colton_inverse_2019}, data assimilation \cite{evensen_data_2009}, climate modeling \cite{lorenc_analysis_1986}, structural health monitoring and damage detection \cite{friswell_damage_2007}, electrical impedance tomography \cite{borcea_electrical_2002a, kaipio_statistical_2000}, biomechanics \cite{gokhale_solution_2008}, and thermal systems~\cite{ozisik_inverse_2021, higdon_bayesian_2008}, among many others.

Inverse problems are almost always ill-posed in the sense of Hadamard --- solutions may fail to exist, may not be unique, or may be unstable with respect to a small amounts of measurement noise, making naive inversion unreliable. To overcome these challenges, two broad strategies have been developed. Regularization-based methods~\cite{engl_regularization_1996, vogel_computational_2002} transform the ill-posed problem into an approximately well-posed one by introducing penalty terms that encode prior knowledge or smoothness assumptions. While computationally efficient, these approaches yield a single point estimate without any estimate of uncertainty in the recovered parameters. The Bayesian approach~\cite{dashti_bayesian_2017, stuart_inverse_2010, kaipio_statistical_2005, higdon_bayesian_2008}, instead produces a full probability distribution over the parameters, combining prior knowledge with observed data. That distribution is what makes rigorous uncertainty quantification possible --- essential for risk assessment, decision-making, and model validation in engineering.

Despite its theoretical appeal, Bayesian inference for inverse problems faces significant computational challenges. The posterior distribution is rarely available in closed form, and standard sampling methods such as Markov Chain Monte Carlo (MCMC)~\cite{robert_monte_2004}--including the Metropolis--Hastings algorithm~\cite{metropolis_equation_1953, hastings_monte_1970}, Hamiltonian Monte Carlo~\cite{neal_mcmc_2011}, and adaptive variants like the Delayed Rejection Adaptive Metropolis (DRAM) algorithm~\cite{haario_dram_2006}--require repeated evaluation of the forward model. This can be prohibitively expensive when the forward model involves the numerical solution of partial differential equations. Sampling methods
also struggle in high-dimensional parameter spaces, where mixing is slow and
convergence requires careful tuning and extensive diagnostic analysis ~\cite{gelman_inference_1992, brooks_general_1998}.
Variational inference (VI)~\cite{blei_variational_2017, jordan_introduction_1998} addresses this by recasting posterior computation as an
optimization problem, searching for a tractable
distribution that best approximates the true posterior ---
and is the approach pursued in this paper.



The past decade has produced a wide range of data-driven approaches to inverse
problems. Physics-informed neural networks (PINNs)~\cite{raissi_physicsinformed_2019}  embed governing equations directly into the training loss, enabling simultaneous solution of forward and inverse problems. Neural operators -- DeepONet~\cite{lu_deeponet_2021} and the Fourier Neural Operator (FNO)~\cite{li_fourier_2021a}, learn mappings between function spaces and serve as surrogate forward models that make likelihood evaluations orders of magnitude cheaper.

For uncertainty-aware inference, several deep generative approaches have been explored. Normalizing flows~\cite{rezende_variational_2015, ardizzone_analyzing_2019} provide flexible posterior approximations through invertible transformations. Score-based generative models, including diffusion
models~\cite{song_scorebased_2021} and flow
matching~\cite{lipman_flow_2023}, have recently been applied to
Bayesian inverse problems~\cite{song_solving_2022, wildberger_flow_2023}.  Conditional generative adversarial
networks have been used for posterior sampling in imaging inverse
problems~\cite{adler_deep_2018}. Combined generative adversarial network (GAN)-and-normalizing-flow priors have similarly been used to solve large-scale, physics-based Bayesian inverse problems via variational inference in the generator's latent space~\cite{patel_solution_2022a}, and related neural variational inference approaches have been applied, without an explicit forward model, to elastography~\citep{scholz_weak_2025}. Simulation-based inference methods using
neural density estimators~\cite{cranmer_frontier_2020} offer a general
framework for likelihood-free Bayesian inference.

The Variational Autoencoder (VAE) ~\citep{kingma_autoencoding_2014, rezende_stochastic_2014} combines
variational inference with deep neural networks and
employs amortized inference: a shared encoder network
maps any new observation directly to the posterior
parameters in a single forward pass, eliminating the
need to re-run optimization for each new observation. The standard VAE regularizes this posterior with the reverse Kullback--Leibler divergence, which appears naturally in the evidence lower bound (ELBO). This choice is computationally convenient but mode-seeking, unbounded ~\cite{blei_variational_2017, murphy_probabilistic_2023}, and prone to posterior collapse~\cite{he_lagging_2019}: three pathologies that are particularly damaging for nonlinear inverse problems.

The modified VAE framework by~\citet{goh_solving_2022} adapts this architecture for inverse problems in computational mechanics and physics by reinterpreting the latent space as the parameter space of interest, replacing the standard decoder with a pre-trained surrogate of the forward model. Crucially, they derive the training objective directly from an asymmetric Jensen--Shannon divergence ~\citep{sutter_multimodal_2020a} rather than the standard ELBO, thereby mitigating the mode-seeking behaviour associated with the reverse-KL formulation. The resulting encoder learns a posterior distribution over the unknown parameters conditioned on the observations, providing both parameter estimates and their associated uncertainty through the learned posterior mean and covariance.

While this framework provides an attractive approach for amortized Bayesian inference, an important question remains: does the choice of the variational objective matter for different types of inverse problems? In particular, inverse problems can have very different levels of identifiability, leading to posterior distributions that range from relatively well conditioned to strongly anisotropic and ill conditioned. We therefore investigate how the choice of divergence affects posterior approximation across these different regimes.

In this work, we investigate this question within a $\beta$-weighted VAE framework, in which the relative contribution of the divergence-based regularization term is controlled by a weighting parameter $\beta$. We consider three formulations based on the reverse Kullback–Leibler (KL) divergence, the asymmetric Jensen–Shannon (JS) divergence, and a Jensen–Shannon–Wasserstein (JSWA) regularization in which the corresponding regularization term is replaced by a squared 2-Wasserstein distance. This formulation allows us to systematically examine how the choice and weighting of the regularization influence posterior approximation across inverse problems with different conditioning.

The contributions of this work are as follows. We develop a Jensen–Shannon–Wasserstein variational autoencoder (JSWA-VAE) for amortized Bayesian inference in PDE-governed inverse problems, replacing the regularization term in the modified JS VAE formulation with a squared 2-Wasserstein distance. Beyond this methodological modification, we provide a systematic analysis of how the choice of variational objective interacts with the information geometry and conditioning of the inverse problem. Using a linear-Gaussian benchmark with an analytically known posterior, we show that the relative performance of KL, JS, and JSWA depends strongly on the distribution of information across parameter directions: KL provides the most accurate approximation for well-conditioned posteriors, whereas JS and JSWA become advantageous as the posterior becomes increasingly ill-conditioned and weakly informed directions dominate the uncertainty. We then examine whether this behaviour persists for nonlinear, physics-based inverse problems by comparing the three formulations against MCMC reference solutions using posterior mean, covariance, and Wasserstein-distance metrics. Collectively, these results suggest that the choice of divergence should be viewed as problem-dependent rather than universally optimal, with posterior conditioning providing a useful indicator for selecting an appropriate variational objective.

The remainder of the paper is structured as follows. \Cref{sec:background} provides the mathematical background on Bayesian inverse problems, variational inference, the modified VAE formulation, and Fisher information and posterior geometry. \Cref{sec:vae_framework} presents the computational formulation and implementation of the KL, JS, and proposed JS--Wasserstein formulations, including their implementable loss functions and training procedure.. \Cref{sec:theory} provides a comparative theoretical analysis of the three formulations, including posterior approximation error bounds and modal gradient behavior in the generalized Fisher basis. \Cref{sec:numerical_exp} reports numerical experiments comparing the three formulations on four test problems and validating the results against MCMC. \Cref{sec:conclusions} concludes with remarks and directions for future work.

\section{Mathematical background}
\label{sec:background}

This section develops the mathematical framework underlying the proposed approach. We first formulate the Bayesian inverse problem and introduce variational inference as an alternative to sampling-based methods. The standard and modified VAE formulations are then described, followed by a comparison of the KL, JS, and JSWA distance measures. We next examine the role of Fisher information in determining posterior conditioning and use this framework to analyze the behavior of the different regularizers.

\subsection{Forward model and Bayesian formulation}
\label{sec:forward_model}

Consider a physical system governed by a deterministic forward model
\begin{equation}\label{eq:forward_model}
	u = \mathcal{F}(\chi,q),
\end{equation}
where $q\in\mathbb{R}^n$ denotes the unknown parameters of interest,
$\chi$ represents the independent variables (e.g., spatial coordinates or
time), and $u\in\mathbb{R}^m$ denotes the observable output. The forward
operator $\mathcal{F}$ represents the governing physics, for example through
the solution operator of a partial differential equation.

The observations are modeled as
\begin{equation}\label{eq:obs_model}
	U = \mathcal{F}(\chi,Q) + \eta,
\end{equation}
where $Q$ and $\eta$ denote the random parameter and measurement-noise
variables, respectively, and $u_{\mathrm{obs}}$ is a realization of $U$.
Throughout this work, the measurement noise is assumed to be additive
Gaussian,
\begin{equation}
	\eta \sim \mathcal{N}(\mu_\eta,\Sigma_\eta).
\end{equation}
Given a prior distribution $P_{\mathrm{pr}}(q)$, Bayes' theorem gives the
posterior distribution
\begin{equation}\label{eq:bayes}
	P_{\mathrm{post}}(q\mid u_{\mathrm{obs}})
	\propto
	P_{\mathrm{lklh}}(u_{\mathrm{obs}}\mid q)\,
	P_{\mathrm{pr}}(q),
\end{equation}
where, under the Gaussian noise model,
\begin{equation}\label{eq:likelihood}
	P_{\mathrm{lklh}}(u_{\mathrm{obs}}\mid q)
	\propto
	\exp\!\left[
	-\frac{1}{2}
	\left\|
	u_{\mathrm{obs}}-\mathcal{F}(\chi,q)-\mu_\eta
	\right\|_{\Sigma_\eta^{-1}}^2
	\right],
\end{equation}
with
\[
\|v\|_{\Sigma_\eta^{-1}}^2
=
v^\top\Sigma_\eta^{-1}v.
\]
The posterior therefore provides a full probabilistic characterization of
the parameters, including both point estimates and uncertainty.

The Bayesian formulation regularizes inference for ill-posed inverse problems through the prior while retaining a probabilistic description of the
remaining uncertainty. For computationally expensive forward models,
however, evaluating the posterior remains challenging. Sampling-based
methods such as Markov chain Monte Carlo (MCMC) may require a large number
of forward-model evaluations, which becomes prohibitive when
$\mathcal{F}$ involves the repeated solution of a PDE. This motivates the
variational and amortized inference framework developed below.

\paragraph{Well-posedness.}
For the Bayesian formulation to be meaningful, the posterior must define a
well-defined probability measure and depend continuously on the observed
data. Following the framework of \citet{stuart_inverse_2010}, these
properties can be established under suitable regularity assumptions on the
negative log-likelihood potential
\begin{equation}
	\Phi(q;u)
	=
	\frac{1}{2}
	\left\|
	u-\mathcal{F}(\chi,q)
	\right\|_{\Sigma_\eta^{-1}}^2.
\end{equation}
For the bounded neural-network surrogate considered here, the required
local boundedness and continuity conditions follow from the regularity of
the forward map. Under these assumptions, the posterior is well defined and
depends continuously on the observations in the Hellinger metric
\citep[Theorem~4.2]{stuart_inverse_2010}:
\begin{align}
	d_{\mathrm{Hell}}\!\left(
	P_{\mathrm{post}}(\cdot\mid u_1),
	P_{\mathrm{post}}(\cdot\mid u_2)
	\right)
	\leq
	C_s\|u_1-u_2\|,
	\label{eq:hellinger}
\end{align}
where $C_s>0$ depends on the prior and the forward operator.
\subsection{Variational inference and variational autoencoders}
\label{sec:vi_vae}

Variational inference (VI) reformulates Bayesian inference as a deterministic
optimization problem. Instead of drawing samples from the posterior
$P_{\mathrm{post}}(q\mid u)$, VI seeks an approximation from a tractable
family of distributions,
\begin{equation}
	Q_\phi(q\mid u), \qquad \phi\in\Phi,
\end{equation}
by minimizing a divergence between the approximate and target posteriors:
\begin{equation}
	Q_\phi^*
	=
	\arg\min_{\phi\in\Phi}
	D\!\left(
	Q_\phi(q\mid u)
	\,\|\, 
	P_{\mathrm{post}}(q\mid u)
	\right).
	\label{eq:vi_general}
\end{equation}
The choice and direction of the divergence are important, since different
divergences impose different notions of similarity between probability
distributions and can therefore lead to different posterior approximations.

A variational autoencoder (VAE)~\cite{Kingma_2013_AutoEncodingVB,
	rezende_stochastic_2014} combines variational inference with neural networks
to construct an amortized posterior approximation. The encoder
$\Psi_e$ maps an observation $u$ to the parameters of an approximate
posterior distribution,
\begin{equation}\label{eq:encoder}
	\Psi_e(u;\omega_e)
	\longrightarrow
	\bigl(\mu_{\mathrm{post}},\Sigma_{\mathrm{post}}\bigr),
\end{equation}
while the decoder $\Psi_d$ maps a latent variable $q$ back to the observation
space,
\begin{equation}\label{eq:decoder}
	\Psi_d(q;\omega_d)
	\longrightarrow
	\hat{u}.
\end{equation}
In the standard VAE, the encoder and decoder are trained by minimizing the
negative evidence lower bound (ELBO),
\begin{equation}\label{eq:vae_loss_standard}
	\mathcal{L}_{\mathrm{VAE}}
	=
	\mathbb{E}_{Q_\phi(q\mid u)}
	\left[-\log P(u\mid q)\right]
	+
	D_{\mathrm{KL}}
	\bigl(
	Q_\phi(q\mid u)\,\|\,P_{\mathrm{pr}}(q)
	\bigr).
\end{equation}
The first term measures the reconstruction error, while the second
regularizes the approximate posterior toward the prior.

To enable gradient-based optimization through the stochastic sampling step,
the reparameterization trick~\cite{Kingma_2013_AutoEncodingVB} expresses a
sample from the approximate posterior as
\begin{equation}\label{eq:reparam}
	q_{\mathrm{draw}}
	=
	\mu_{\mathrm{post}} + L\epsilon,
	\qquad
	\epsilon\sim\mathcal{N}(0,I),
\end{equation}
where $L$ denotes a factor of the posterior covariance satisfying
$\Sigma_{\mathrm{post}}=LL^\top$. This formulation transfers the
stochasticity to the auxiliary variable $\epsilon$ and allows gradients to
propagate through the parameters of the approximate posterior.

The standard VAE is designed as a generative model in which the latent
variable provides a representation of the observed data. In Bayesian inverse
problems, however, the latent variable can instead be identified directly
with the unknown physical parameters $q$. The encoder then provides an
amortized approximation to the posterior $P_{\mathrm{post}}(q\mid u)$,
while the decoder represents the forward mapping from parameters to
observables. This interpretation provides the basis for the modified VAE
framework for Bayesian inverse problems introduced in the next subsection.

\subsection{Modified VAE for Bayesian inverse problems}
\label{sec:modified_vae}
\begin{figure}[ht]
	\centering
	\includegraphics[width=0.65\textwidth]{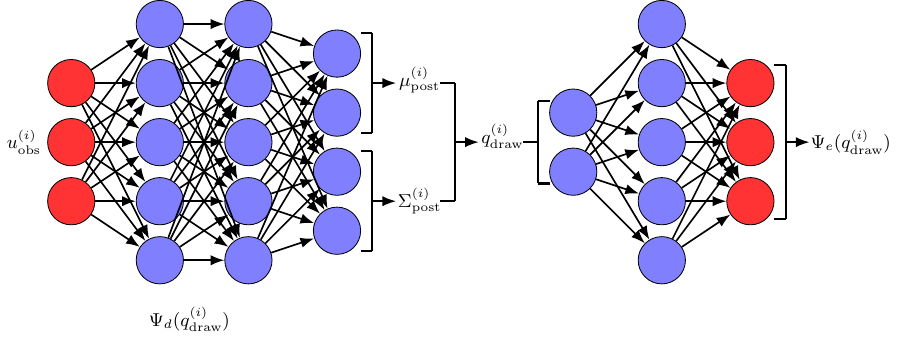}	
    \caption{Architecture of the modified VAE for Bayesian inverse problems. The encoder $\Psi_e$ maps the observed data $u_{\mathrm{obs}}$ to the posterior parameters $(\mu_{\mathrm{post}}, \Sigma_{\mathrm{post}})$. Latent samples $q_{\mathrm{draw}}$ are generated via the reparameterization trick~\Cref{eq:reparam} and passed through the pre-trained, frozen decoder $\Psi_d$, which acts as a surrogate for the forward model $\mathcal{F}$. The loss function comprises a reconstruction term (comparing $\Psi_d(q_{\mathrm{draw}})$ with $u_{\mathrm{obs}}$) and a regularization term (measuring the divergence between the approximate posterior and the prior).}
    \label{fig:vae_architecture}
\end{figure}
In the context of parameter estimation and inverse problems, the standard VAE framework admits a physically meaningful reinterpretation, as proposed by Goh et al.~\cite{goh_solving_2022}. The key modifications are:

\paragraph{Latent space as parameter space.} The latent variable $q$ is identified with the parameter of interest (POI). The encoder thus learns a mapping from the observation space to the parameter space, directly providing the posterior distribution over the physical parameters:
\begin{equation}\label{eq:encoder_inverse}
    \Psi_e(u_{\mathrm{obs}};\, \omega_e) \longrightarrow Q_\phi(q \mid u_{\mathrm{obs}}) = \mathcal{N}(\mu_{\mathrm{post}}, \Sigma_{\mathrm{post}}).
\end{equation}
Here, $\mu_{\mathrm{post}}$ serves as the point estimate of the parameters (analogous to the maximum a posteriori estimate), and $\Sigma_{\mathrm{post}}$ quantifies the associated uncertainty.

\paragraph{Decoder as forward model surrogate.} The decoder $\Psi_d$ approximates the forward operator $\mathcal{F}$, serving as a parameter-to-observation map:
\begin{equation}\label{eq:decoder_inverse}
    \Psi_d(q;\, \omega_d) \approx \mathcal{F}(\chi, q).
\end{equation}
This embedding of the governing physics into the decoder architecture is the defining feature of the modified VAE. By replacing the standard learned decoder with a physics-informed surrogate, the framework ensures that the latent space has a clear physical interpretation and that the reconstruction loss directly measures the discrepancy between predicted and observed physical quantities.

\paragraph{Mean-field vs.\ full covariance.} In many standard VAE implementations, the approximate posterior is restricted to a diagonal (mean-field) covariance: $$\Sigma_{\mathrm{post}} = \mathrm{diag}(\sigma_1^2, \ldots, \sigma_n^2)$$ This reduces the number of learnable parameters from $\mathcal{O}(n^2)$ to $\mathcal{O}(n)$ but assumes independence among the latent dimensions. For inverse problems where the model parameters are often correlated (e.g., stiffness and loading in a structural problem), this assumption can lead to poor posterior approximations. In this work, we employ a \emph{full covariance} parameterization, outputting the Cholesky factor $L$ directly from the encoder. This enables the approximate posterior to capture parameter correlations, which is important for accurately representing posterior uncertainty.

\paragraph{Surrogate decoder and prior specification.} Unlike a conventional VAE, the decoder in the present framework represents a surrogate of the physical forward operator rather than a jointly trained generative model. The surrogate is first trained offline using parameter--observation pairs generated from the forward model over the prescribed parameter domain and is subsequently frozen during encoder training. We assume a Gaussian prior for the parameters,
\begin{equation}
	P_{\mathrm{pr}}(q)
	=
	\mathcal{N}(\mu_{\mathrm{pr}},\Sigma_{\mathrm{pr}}),
\end{equation}
which provides the reference distribution for the variational regularization. The same prior is used consistently in the KL-, JS-, and JSWA-based formulations. This separation allows the encoder to focus on approximating the posterior while avoiding repeated evaluations of the computationally expensive forward model during inference.

The complete architecture of the modified VAE is illustrated in \Cref{fig:vae_architecture}. During inference, a new observation $u_{\mathrm{obs}}$ is passed through the trained encoder, which outputs $\mu_{\mathrm{post}}$ and $\Sigma_{\mathrm{post}}$ in a single forward pass. This \emph{amortized inference} capability is a major practical advantage: once trained, the encoder provides instantaneous posterior estimates without any iterative sampling, making it orders of magnitude faster than MCMC for repeated inference tasks.
\subsubsection{Distance measures and their properties}
\label{sec:distances}

The choice of divergence is important because different divergences induce different approximation behaviour and optimization properties. We therefore compare the three distance measures underlying the KL-, JS-, and JSWA-based formulations.

\paragraph{Limitations of the reverse KL divergence.}
The standard VAE minimises $D_{\mathrm{KL}}(Q_\phi \| P_{\mathrm{post}})$, which
suffers from three structural pathologies that make it poorly suited to
nonlinear inverse problems:
\begin{itemize}
    \item \emph{Support sensitivity.} When $\operatorname{supp}(Q_\phi)
    \not\subseteq \operatorname{supp}(P_{\mathrm{post}})$, the divergence is
    $+\infty$ and returns no gradient ~\citep{arjovsky_wasserstein_2017}. For general distributions, the reverse KL can become infinite when the approximate posterior assigns mass outside the support of the target distribution. This issue is less relevant for the nondegenerate Gaussian distributions considered in the present study, which have common support.
    \item \emph{Mode-seeking behaviour.} The reverse KL penalises mass that
    $Q_\phi$ places outside the support of $P_{\mathrm{post}}$ heavily, but
    is indifferent to regions of high posterior density that $Q_\phi$ fails
    to cover~\citep{murphy_probabilistic_2023}. The reverse KL is mode-seeking and may therefore favour approximations concentrated around a dominant mode when the target posterior is multimodal.
    \item \emph{Posterior collapse.} 
    Strong KL regularization can also drive the approximate posterior toward the prior, reducing the information carried by the latent variables. This phenomenon is commonly referred to as posterior collapse in VAE models. In the present inverse-problem setting, the decoder is a frozen, pre-trained physics surrogate rather than a jointly optimized generative decoder. Thus, the role of KL regularization differs from that in a conventional VAE with a jointly trained decoder. Nevertheless, the strength of the KL regularization remains controlled through the $\beta$-VAE formulation, and its effect on the posterior approximation is considered in the subsequent analysis~\citep{he_lagging_2019}.
\end{itemize}
Together, these pathologies motivate replacing $D_{\mathrm{KL}}(Q_\phi \|
P_{\mathrm{post}})$ with a divergence that remains finite under support
mismatch and does not privilege a single mode of the posterior -- the properties examined next.
\paragraph{The Jensen--Shannon divergence.}
The $\alpha$-JSD~\citep{nielsen_family_2011} is defined via the mixture
$M_\alpha = \alpha P + (1-\alpha)Q$ as
\begin{align}
    D_{\mathrm{JS},\alpha}(P \| Q)
    = \alpha\, D_{\mathrm{KL}}(Q \| M_\alpha)
    + (1-\alpha)\, D_{\mathrm{KL}}(P \| M_\alpha).
    \label{eq:jsd}
\end{align}
Three properties of $D_{\mathrm{JS},\alpha}$ are relevant here:
\begin{itemize}
    \item\emph{Boundedness.} The $\alpha$-Jensen--Shannon divergence satisfies
    \[
    0 \leq D_{\mathrm{JS},\alpha}(P\|Q)
    \leq H(\alpha),
    \]
    where
    \[
    H(\alpha)
    =
    -\alpha\log\alpha-(1-\alpha)\log(1-\alpha)
    \]
    is the binary entropy. Thus, unlike the reverse KL divergence, the
    $\alpha$-JSD remains finite even when the distributions have disjoint
    support. This boundedness prevents the divergence itself from becoming
    arbitrarily large during optimization, although boundedness alone does not
    guarantee non-vanishing or well-behaved gradients.
    \item \emph{Mode coverage.} The forward-KL component $D_{\mathrm{KL}}(P \| M_\alpha)$ encourages the mixture $M_\alpha$, and hence $Q$ to account for regions of posterior mass that may be neglected by a purely reverse-KL objective.
\end{itemize}
The residual limitation of $D_{\mathrm{JS},\alpha}$ is that it saturates at
$H(\alpha)$ under large support separation, providing no information on
geometric distance between the distributions.

\paragraph{The $2$-Wasserstein distance.}
The $2$-Wasserstein distance
\begin{align}
	W_2(P, Q)
	= \left(\inf_{\gamma \in \Pi(P,Q)}
	\int \|q - q'\|^2\, d\gamma(q,q')\right)^{1/2}
	\label{eq:w2_def}
\end{align}
provides a complementary notion of discrepancy that retains sensitivity to
the geometric displacement between probability distributions. For
distributions with finite second moments, $W_2$ is finite. In particular, for
$P = \mathcal{N}(0,\sigma^2 I)$ and
$Q = \mathcal{N}(\mu,\sigma^2 I)$,
\[
W_2(P,Q)=\|\mu\|,
\]
and therefore continues to increase as the distributions become increasingly
separated, whereas the Jensen--Shannon divergence remains bounded. For
Gaussian distributions, $W_2$ admits the Bures--Wasserstein closed form
\citep{dowson_frechet_1982,bhatia_bures_2019a}
\begin{align}
	\Aboxed{
		W_2^2\!\bigl(\mathcal{N}(\mu_P,\Sigma_P),
		\mathcal{N}(\mu_Q,\Sigma_Q)\bigr)
		=
		\|\mu_P-\mu_Q\|_2^2
		+
		\operatorname{tr}\!\left(
		\Sigma_P+\Sigma_Q
		-2\bigl(
		\Sigma_P^{1/2}\Sigma_Q\Sigma_P^{1/2}
		\bigr)^{1/2}
		\right),
	}
	\label{eq:W2_gaussian}
\end{align}
which can be evaluated directly for the Gaussian posterior and prior used
in the present framework. This makes the Wasserstein regularizer tractable
for the moderate-dimensional problems considered here.

Moreover, Wasserstein distance provides direct control over differences in
expectations of Lipschitz functions. For any Lipschitz function $f$ with
constant $L_f$~\citep[Remark~6.5]{villani_optimal_2008},
\begin{align}
	\bigl|\mathbb{E}_P[f] - \mathbb{E}_Q[f]\bigr|
	\leq L_f\, W_1(P,Q)
	\leq L_f\, W_2(P,Q).
	\label{eq:wass_expectation}
\end{align}
For the posterior quantities of interest here, this implies that the
Wasserstein distance also controls discrepancies in the posterior mean and
variance,
\begin{align}
    \left\|\mathbb{E}_{P_{\mathrm{post}}}[q]
    - \mathbb{E}_{Q_\phi}[q]\right\|_2
    &\leq W_2(P_{\mathrm{post}},Q_\phi),
    \label{eq:mean_control} \\[3pt]
    \bigl|\mathrm{Var}_{P_{\mathrm{post}}}[q_i]
    - \mathrm{Var}_{Q_\phi}[q_i]\bigr|
    &\leq C_{\mathrm{var}}\,
    W_2(P_{\mathrm{post}},Q_\phi),
    \label{eq:var_control}
\end{align}
where $C_{\mathrm{var}}$ is a finite constant depending on suitable moment
bounds for the two distributions. Thus, reducing the Wasserstein distance
provides a distribution-level criterion that also controls errors in the
posterior mean and variance used for uncertainty quantification.

Table~\ref{tab:divergence_comparison} summarizes the key properties of the
three distance measures. No single divergence possesses all of the desired
properties; the JSWA objective therefore combines complementary features of
the forward-KL and Wasserstein terms.

\begin{table}[htbp]
	\centering
	\caption{Comparison of distance measures used in the variational
		inference framework. \emph{Support mismatch}: remains finite when the
		distributions have non-overlapping support. \emph{Geometric sensitivity}:
		retains information about the spatial displacement between distributions.
		\emph{Mode coverage}: encourages the approximation to account for multiple
		regions of posterior mass. \emph{Closed form}: analytic expression for
		Gaussian arguments.}
	\label{tab:divergence_comparison}
	\setlength{\tabcolsep}{8pt}
	\renewcommand{\arraystretch}{1.25}
	\begin{tabular}{@{} l c c c @{}}
		\toprule
		\textbf{Property}
		& $D_{\mathrm{KL}}(Q_\phi \| P)$
		& $D_{\mathrm{JS},\alpha}$
		& $W_2$ \\
		\midrule
		Bounded above
		& \xmark
		& \cmark
		& \xmark \\
		Finite under support mismatch
		& \xmark
		& \cmark
		& \cmark\textsuperscript{$\dagger$} \\
		Geometric sensitivity
		& \xmark
		& Limited
		& \cmark \\
		Mode coverage
		& \xmark
		& \cmark
		& --- \\
		Closed form for Gaussians
		& \cmark
		& ---
		& \cmark \\
		\bottomrule
		\multicolumn{4}{@{}l}{\footnotesize
			$\dagger$~For distributions with finite second moments; $W_2$ is finite on
			$\mathcal{P}_2$.}
	\end{tabular}
\end{table}

\paragraph{Abstract loss functions.}
The three formulations considered in this work share a common reconstruction
term and differ in the regularization or posterior-supervision terms applied
to the encoder distribution $Q_\phi(q\mid u)$. We state the three objectives
here in their abstract form; their closed-form Gaussian implementations are
derived in \cref{app:derivations}, while their posterior approximation and modal gradient behavior are analyzed in \Cref{subsec:modal_gradient}. Let $\beta_1,\beta_2>0$ denote the
corresponding regularization weights.

The baseline VAE formulation uses the reverse KL divergence between the
encoder posterior and the parameter-space prior:
\begin{align}
	\mathcal{L}_{\mathrm{KL}}
	&=
	\underbrace{
		\mathbb{E}_{Q_\phi}
		\!\left[-\ln P(u\mid q)\right]
	}_{\text{reconstruction}}
	+
	\underbrace{
		\beta_1
		D_{\mathrm{KL}}
		\!\left(
		Q_\phi(q\mid u)\|P_{\mathrm{pr}}(q)
		\right)
	}_{\text{reverse-KL regularisation}}.
	\label{eq:loss_kl_abstract}
\end{align}

Following Goh et al.~\cite{goh_solving_2022}, the JS formulation augments
the reconstruction objective with a forward-KL term that compares the
encoder posterior with the parameter-conditioned target distribution. When
paired parameter--observation data are available, this term can be evaluated
using the corresponding known parameter values:
\begin{align}
	\mathcal{L}_{\mathrm{JS}}
	&=
	\underbrace{
		\beta_1
		D_{\mathrm{KL}}
		\!\left(
		P(q\mid u)\|Q_\phi(q\mid u)
		\right)
	}_{\text{forward-KL term}}
	+
	\underbrace{
		\mathbb{E}_{Q_\phi}
		\!\left[-\ln P(u\mid q)\right]
	}_{\text{reconstruction}}
	+
	\underbrace{
		\beta_2
		D_{\mathrm{KL}}
		\!\left(
		Q_\phi(q\mid u)\|P_{\mathrm{pr}}(q)
		\right)
	}_{\text{reverse-KL regularisation}}.
	\label{eq:loss_js_abstract}
\end{align}

The proposed JSWA formulation retains the forward-KL and reconstruction
terms of $\mathcal{L}_{\mathrm{JS}}$, but replaces the reverse-KL
regularization with the squared $2$-Wasserstein distance:
\begin{align}
	\mathcal{L}_{\mathrm{JSWA}}
	&=
	\underbrace{
		\beta_1
		D_{\mathrm{KL}}
		\!\left(
		P(q\mid u)\|Q_\phi(q\mid u)
		\right)
	}_{\text{forward-KL term}}
	+
	\underbrace{
		\mathbb{E}_{Q_\phi}
		\!\left[-\ln P(u\mid q)\right]
	}_{\text{reconstruction}}
	+
	\underbrace{
		\beta_2
		W_2^2
		\!\left(
		Q_\phi(q\mid u),P_{\mathrm{pr}}(q)
		\right)
	}_{\text{Wasserstein regularisation}}.
	\label{eq:loss_jswa_abstract}
\end{align}

The three terms play distinct roles. The reconstruction term enforces
consistency with the observations through the likelihood, while the
forward-KL term provides parameter-space supervision when paired
training data are available. The final regularization term controls the
departure of the encoder posterior from the prior. The KL and JS
formulations therefore retain the reverse-KL regularizer, whereas JSWA
replaces it with the squared Wasserstein distance.
\subsection{Fisher information and posterior geometry}
\label{sec:fisher}
The posterior distribution reflects the combined influence of the prior and
the information provided by the observations. The extent to which the data
constrain different parameter directions depends on the sensitivity of the
forward operator to those directions. The Fisher information matrix provides
a local measure of this data-driven sensitivity and therefore offers a natural
way to characterize the geometry and conditioning of the posterior. For further discussion of Fisher information, parameter sensitivity, and identifiability, see ~\citep{walter_identification_1997} and the references therein.

\paragraph{Sensitivity.}
For a Gaussian likelihood
$P(u_{\mathrm{obs}} \mid \mathrm{q})
= \mathcal{N}(\mathcal{F}(\mathrm{q}), \Sigma_\eta)$,
the local Fisher information matrix associated with the data is
\begin{align}
	\mathcal{I}(\mathrm{q})
	= J^\top \Sigma_\eta^{-1} J,
	\qquad
	J =
	\frac{\partial \mathcal{F}}
	{\partial \mathrm{q}}
	\bigg|_{\hat{\mathrm{q}}}
	\in \mathbb{R}^{m\times d},
	\label{eq:fisher_def}
\end{align}
where $J$ is the Jacobian of the forward operator evaluated at the
reference parameter $\hat{\mathrm{q}}$, taken here to be the posterior
mode. The matrix $\mathcal{I}$ therefore quantifies the local information
provided by the observations about different parameter directions.

For isotropic noise
$\Sigma_\eta = \sigma_\eta^2 I_m$ and the linear model
$\mathcal{F}(\mathrm{q}) = A\mathrm{q}$, this reduces to
\begin{equation}
	\mathcal{I}
	= \frac{1}{\sigma_\eta^2}A^\top A.
\end{equation}
For a nonlinear forward model, $\mathcal{I}(\hat{\mathrm{q}})$ provides
the local data-information matrix obtained by linearizing the forward
operator about the posterior mode. It therefore forms the
data-information component of the local Laplace approximation to the
posterior.
\paragraph{Generalized Fisher basis.}
For a Gaussian prior
$\mathcal{N}(\mu_{\mathrm{pr}},\Sigma_{\mathrm{pr}})$, let
$L_{\mathrm{pr}}$ denote the lower-triangular Cholesky factor of the prior
precision,
\begin{align}
	\Sigma_{\mathrm{pr}}^{-1}
	=
	L_{\mathrm{pr}}L_{\mathrm{pr}}^\top. 
\end{align}
Introducing the whitened coordinates
\begin{equation}
	\tilde{\mathrm q}
	=
	L_{\mathrm{pr}}^\top
	(\mathrm q-\mu_{\mathrm{pr}}),
	\qquad
	\tilde J=J L_{\mathrm{pr}}^{-\top},\label{eq:whitening}
\end{equation}
transforms the prior to $\mathcal{N}(0,I_d)$ and gives the generalized
Fisher information matrix
\begin{equation}
	\tilde{\mathcal I}
	=
	\tilde J^\top\Sigma_\eta^{-1}\tilde J
	=
	L_{\mathrm{pr}}^{-1}
	\mathcal I
	L_{\mathrm{pr}}^{-\top}.
	\label{eq:fisher_whitened}
\end{equation}
Its eigendecomposition,
\begin{align}
	\tilde{\mathcal I}
	=
	\tilde V
	\operatorname{diag}
	\left(
	\tilde\lambda_1^{\mathrm{Fisher}},\ldots,
	\tilde\lambda_d^{\mathrm{Fisher}}
	\right)
	\tilde V^\top,
	\qquad
	\tilde\lambda_1^{\mathrm{Fisher}}\geq\cdots\geq
	\tilde\lambda_d^{\mathrm{Fisher}}, \label{eq:fisher_eigen_decomp}
\end{align}
defines the generalized Fisher basis. Large eigenvalues correspond to
strongly data-informed parameter directions, whereas small eigenvalues
identify weakly informed directions.

Its eigen-decomposition is
\begin{align}
	\tilde{\mathcal{I}}
	= \tilde{V}\,\tilde{\Lambda}^{\mathrm{Fisher}}\,\tilde{V}^\top,
	\qquad
	\tilde{\Lambda}^{\mathrm{Fisher}}
	= \mathrm{diag}\!\left(
	\tilde{\lambda}_1^{\mathrm{Fisher}},
	\ldots,
	\tilde{\lambda}_d^{\mathrm{Fisher}}
	\right),
	\label{eq:fisher_eig}
\end{align}
with
$\tilde{\lambda}_1^{\mathrm{Fisher}}\geq\cdots\geq
\tilde{\lambda}_d^{\mathrm{Fisher}}\geq0$.
The columns $\{\tilde{\mathbf v}_k\}$ of $\tilde V$ define the
\emph{generalized Fisher basis}, with the corresponding eigenvalues
quantifying the information supplied by the observations along these
directions. For isotropic observation noise,
$\Sigma_\eta=\sigma_\eta^2 I_m$, the eigenvalues satisfy
\begin{align}
	\tilde{\lambda}_k^{\mathrm{Fisher}}
	=
	\frac{\tilde{\sigma}_{J,k}^2}{\sigma_\eta^2},
	\label{eq:snr}
\end{align}
where $\tilde{\sigma}_{J,k}$ is the $k$-th singular value of
$\tilde J$. Thus, $\tilde{\lambda}_k^{\mathrm{Fisher}}$ can be interpreted
as a squared effective signal-to-noise ratio for parameter direction
$\tilde{\mathbf v}_k$: large values correspond to strongly informed
directions, whereas small values indicate weakly informed directions.

For the isotropic prior $\Sigma_{\mathrm{pr}}=I_d$, the whitening is
trivial, so that $\tilde J=J$, $\tilde{\mathbf v}_k=\mathbf v_k$, and
$\tilde{\lambda}_k^{\mathrm{Fisher}}=\lambda_k^{\mathrm{Fisher}}$,
recovering the standard Fisher basis.
\paragraph{Posterior covariance in the generalized Fisher basis.}
For the linear--Gaussian inverse problem
\begin{align}
    u &= Aq+\eta,
    \qquad
    q\sim\mathcal{N}(0,\Sigma_{\mathrm{pr}}),
    \qquad
    \eta\sim\mathcal{N}(0,\Sigma_\eta),
    \label{eq:linear_gaussian}
\end{align}
the posterior precision is
\begin{align}
	\Lambda_{\mathrm{post}}
	=
	\mathcal{I}(\mathrm{q})
	+
	\Sigma_{\mathrm{pr}}^{-1},
	\label{eq:post_precision}
\end{align}
while for a nonlinear forward model the same expression gives the local
Laplace approximation about the reference parameter. In the whitened
coordinates introduced above, this becomes
\begin{align}
	\tilde{\Lambda}_{\mathrm{post}}
	=
	\tilde{\mathcal{I}}+I_d
	=
	\tilde{V}\,
	\mathrm{diag}\!\left(
	1+\tilde{\lambda}_k^{\mathrm{Fisher}}
	\right)\tilde{V}^{\top}.
	\label{eq:post_precision_whitened}
\end{align}
Consequently, the posterior covariance in the whitened generalized Fisher
basis is
\begin{align}
	\Aboxed{
		\widetilde{\Sigma}_{\mathrm{post}}^*
		=
		\mathrm{diag}\!\left(
		\frac{1}
		{1+\tilde{\lambda}_k^{\mathrm{Fisher}}}
		\right)_{k=1}^{d}.
	}
	\label{eq:post_cov_fisher}
\end{align}
Thus, for the linear-Gaussian problem, the posterior variance along
generalized Fisher mode $k$ is
\begin{align}
	\tilde{\sigma}_k^{*2}
	=
	\frac{1}
	{1+\tilde{\lambda}_k^{\mathrm{Fisher}}},
	\label{eq:posterior_variance}
\end{align}
providing a direct reference for assessing the learned mode-wise
variances. 

The full derivation is given in \cref{app:fisher_derivation}. The result
shows that, in the whitened Fisher basis, strongly informed directions
with large $\tilde{\lambda}_k^{\mathrm{Fisher}}$ have substantially reduced
posterior variance, whereas weakly informed directions with small
$\tilde{\lambda}_k^{\mathrm{Fisher}}$ remain closer to the prior variance.
\paragraph{Shrinkage and the MAP estimate.}
In the whitened generalized Fisher basis, the prior is
$\mathcal{N}(0,I_d)$, so each parameter direction is regularized toward
zero. For the linear-Gaussian problem, the posterior mean in mode $k$
exhibits mode-dependent shrinkage, with the relative weight assigned to
the data determined by the corresponding generalized Fisher eigenvalue
$\tilde{\lambda}_k^{\mathrm{Fisher}}$. Since the posterior is Gaussian,
its mean coincides with its MAP estimate. In particular,
\begin{align}
	\tilde{\mu}_{\mathrm{post},k}^*
	=
	\frac{\tilde{\lambda}_k^{\mathrm{Fisher}}}
	{1+\tilde{\lambda}_k^{\mathrm{Fisher}}}
	\tilde q_k^{\mathrm{data}},
	\label{eq:shrinkage}
\end{align}
where $\tilde q_k^{\mathrm{data}}$ denotes the data-informed estimate in
the $k$th generalized Fisher direction. Thus, the factor
$\tilde{\lambda}_k^{\mathrm{Fisher}}/
(1+\tilde{\lambda}_k^{\mathrm{Fisher}})$ determines the degree to which
the posterior mean follows the data rather than the prior.

For $\tilde{\lambda}_k^{\mathrm{Fisher}}\gg1$, the data dominate and
$\tilde{\mu}_{\mathrm{post},k}^*$ approaches the data-informed estimate.
When $\tilde{\lambda}_k^{\mathrm{Fisher}}=1$, the data contribution has
weight $1/2$. For $\tilde{\lambda}_k^{\mathrm{Fisher}}\ll1$, the posterior
mean is strongly shrunk toward the prior mean, which is zero in the
whitened coordinates. This behaviour is not a failure of inference:
when the observations provide little information about a parameter
direction, the Bayesian estimator appropriately retains the prior
information rather than producing an artificially precise estimate.

The corresponding posterior standard deviation,
\begin{align}
	\tilde{\sigma}_k^*
	=
	\frac{1}
	{\sqrt{1+\tilde{\lambda}_k^{\mathrm{Fisher}}}},
	\label{eq:posterior_std_shrinkage}
\end{align}
quantifies the uncertainty remaining in that direction. Hence, strongly
data-informed directions exhibit both weak shrinkage of the posterior
mean and strong posterior contraction, whereas weakly informed directions
remain close to the prior in both their mean and variance.
\paragraph{Identifiability and ill-conditioning.}
The generalized Fisher eigenvalue spectrum characterizes the local
identifiability structure of the inverse problem. A parameter direction is
strongly informed when
$\tilde{\lambda}_k^{\mathrm{Fisher}}\gg1$ and weakly informed when
$\tilde{\lambda}_k^{\mathrm{Fisher}}\ll1$. A large spread in these
eigenvalues produces an anisotropic posterior covariance and, when
sufficiently large, an ill-conditioned posterior. From
\eqref{eq:post_cov_fisher}, the posterior condition number is
\begin{align}
	\kappa(\Sigma_{\mathrm{post}}^*)
	=
	\frac{1+\tilde{\lambda}_{\max}^{\mathrm{Fisher}}}
	{1+\tilde{\lambda}_{\min}^{\mathrm{Fisher}}},
	\label{eq:kappa}
\end{align}
so that large $\kappa$ corresponds to a highly elongated posterior
ellipsoid in the whitened Fisher coordinates, with narrow directions
associated with strong data information and broad directions associated
with weak information.

\section{Computational formulation and implementation}
\label{sec:vae_framework}

This section specifies the computational realization of the three variational formulations introduced in \cref{sec:distances}. We first give the implementable loss functions used for training, followed
by the encoder parameterization and training procedure. Detailed derivations of the
implementable loss functions are provided in \cref{app:derivations}.

\subsection{Implementable loss functions} \label{subsec:loss}
The abstract objectives introduced in \cref{sec:distances} are
specialized here to the Gaussian encoder and the paired training data
used in this work. The reconstruction term is estimated by Monte Carlo
sampling from the encoder posterior, while the KL and Wasserstein terms
admit closed-form expressions for Gaussian distributions.

For each training pair $(q^{(i)},u_{\mathrm{obs}}^{(i)})$, the encoder
produces the Gaussian posterior
\begin{align}
	Q_\phi^{(i)}
	&=
	\mathcal N\!\left(
	\mu_{\mathrm{post}}^{(i)},
	\Sigma_{\mathrm{post}}^{(i)}
	\right).
	\label{eq:encoder_posterior}
\end{align}
A reparameterized draw from this posterior is given by
\begin{align}
	q_{\mathrm{draw}}^{(i)}
	&=
	\mu_{\mathrm{post}}^{(i)}
	+
	L^{(i)}\epsilon^{(i)},
	\qquad
	\epsilon^{(i)}\sim\mathcal N(0,I),
	\qquad
	L^{(i)}(L^{(i)})^\top
	=
	\Sigma_{\mathrm{post}}^{(i)}.
	\label{eq:reparameterization}
\end{align}

All three formulations share the same reconstruction term, evaluated through the frozen surrogate decoder: 
\begin{align}
	\ell_{\mathrm{rec}}^{(i)}
	&=
	\frac{1}{2}
	\left\|
	u_{\mathrm{obs}}^{(i)}
	-
	\Psi_d(q_{\mathrm{draw}}^{(i)})
	-
	\mu_\eta
	\right\|_{\Sigma_\eta^{-1}}^2 .
	\label{eq:loss_rec}
\end{align}
The formulations differ in how the encoder posterior is regularized
toward the prior and, for Formulations~II and~III, how it is supervised
using the paired parameter values. The reverse-KL and Wasserstein
regularizers are evaluated directly from the Gaussian posterior
parameters. The expectation in the VAE reconstruction term is therefore approximated
by Monte Carlo sampling of $q_{\mathrm{draw}}^{(i)}$; a single
reparameterized draw is used here for each training sample.

The formulations differ in how the encoder posterior is regularized
toward the prior and, for Formulations~II and~III, how it is supervised
using the paired parameter values. The reverse-KL and Wasserstein
regularizers are evaluated directly from the Gaussian posterior
parameters. For Gaussian distributions, the reverse-KL regularizer is
\begin{align}
	\mathcal{L}_{\mathrm{RKL}}^{(i)}
	&=
	\frac{1}{2}
	\Bigg[
	\operatorname{tr}\!\left(
	\Sigma_{\mathrm{pr}}^{-1}
	\Sigma_{\mathrm{post}}^{(i)}
	\right)
	+
	\left\|
	\mu_{\mathrm{post}}^{(i)}
	-\mu_{\mathrm{pr}}
	\right\|_{\Sigma_{\mathrm{pr}}^{-1}}^2
	-n
	+\ln
	\frac{|\Sigma_{\mathrm{pr}}|}
	{|\Sigma_{\mathrm{post}}^{(i)}|}
	\Bigg].
	\label{eq:loss_kl_reg}
\end{align}

For paired data $(q^{(i)},u_{\mathrm{obs}}^{(i)})$, the forward-KL
supervision term is implemented using the known parameter value
$q^{(i)}$. Up to terms independent of the encoder parameters, it is
equivalent to the negative log-density of $q^{(i)}$ under the encoder
posterior:
\begin{align}
	\mathcal{L}_{\mathrm{FKL}}^{(i)}
	&=
	\frac{1}{2}
	\left[
	\ln |\Sigma_{\mathrm{post}}^{(i)}|
	+
	\left\|
	\mu_{\mathrm{post}}^{(i)}-q^{(i)}
	\right\|_{(\Sigma_{\mathrm{post}}^{(i)})^{-1}}^2
	\right],
	\label{eq:loss_fwdkl}
\end{align}

For Formulation~III, the reverse-KL prior regularizer is replaced by the
squared $2$-Wasserstein distance. For Gaussian distributions,
\begin{align}
	\mathcal{L}_{W_2}^{(i)}
	= \|\mu_{\mathrm{post}}^{(i)} - \mu_{\mathrm{pr}}\|_2^2
	+ \mathrm{tr}(\Sigma_{\mathrm{post}}^{(i)}) + \mathrm{tr}(\Sigma_{\mathrm{pr}})
	- 2\,\mathrm{tr}\!\left(\bigl(
	(\Sigma_{\mathrm{post}}^{(i)})^{1/2}\Sigma_{\mathrm{pr}}(\Sigma_{\mathrm{post}}^{(i)})^{1/2}
	\bigr)^{1/2}\right)
	= W_2^2\bigl(Q_\phi^{(i)}, P_{\mathrm{pr}}\bigr).
	\label{eq:loss_w2reg}
\end{align}

The implementable objectives are obtained by combining \cref{eq:loss_rec,eq:loss_kl_reg,eq:loss_fwdkl,eq:loss_w2reg} terms and
averaging over the $N$ training samples: 
\begin{align}
	\mathcal{L}_{\mathrm{KL}}
	&\approx
	\frac{1}{N}\sum_{i=1}^{N}
	\left[
	\mathcal{L}_{\mathrm{rec}}^{(i)}
	+
	\beta_1
	\mathcal{L}_{\mathrm{RKL}}^{(i)}
	\right],
	\label{eq:loss_kl}
	\\
	\mathcal{L}_{\mathrm{JS}}
	&\approx
	\frac{1}{N}\sum_{i=1}^{N}
	\left[
	\beta_1
	\mathcal{L}_{\mathrm{FKL}}^{(i)}
	+
	\mathcal{L}_{\mathrm{rec}}^{(i)}
	+
	\beta_2
	\mathcal{L}_{\mathrm{RKL}}^{(i)}
	\right],
	\label{eq:loss_js}
	\\
	\mathcal{L}_{\mathrm{JSWA}}
	&\approx
	\frac{1}{N}\sum_{i=1}^{N}
	\left[
	\beta_1
	\mathcal{L}_{\mathrm{FKL}}^{(i)}
	+
	\mathcal{L}_{\mathrm{rec}}^{(i)}
	+
	\beta_2
	\mathcal{L}_{W_2}^{(i)}
	\right].
	\label{eq:loss_jswa}
\end{align}
Formulation~I corresponds to the standard negative-ELBO structure, with
$\beta_1=1$ recovering the conventional VAE weighting and
$\beta_1\neq1$ controlling the strength of the prior regularization
\citep{higgins_betavae_2017}. Following Goh et al.~\cite{goh_solving_2022},
Formulation~II augments this objective with forward-KL supervision based
on the paired parameter values while retaining the reverse-KL prior
regularizer. Formulation~III retains the same reconstruction and
forward-KL terms but replaces the reverse-KL prior regularizer with the
squared Wasserstein distance.

Table~\ref{tab:loss_summary} summarises the three formulations. Moving
from I to II adds supervised forward-KL; moving from II to III replaces
the reverse-KL regulariser with the Wasserstein term.
\begin{table}[htbp]
	\centering
	\caption{Summary and comparison of the three VAE formulations.
		\emph{Supervised}: requires paired data
		$\{(q^{(i)},u_{\mathrm{obs}}^{(i)})\}$.
		\emph{Finite under support mismatch}: the regularisation term remains
		finite when the two distributions have non-overlapping support.
		\emph{Geom.}: the regularisation term accounts for geometric separation
		between distributions.
		\emph{Tightens}: the corresponding posterior error bound decreases
		with increasing data informativeness.}
	\label{tab:loss_summary}
	\setlength{\tabcolsep}{5.5pt}
	\renewcommand{\arraystretch}{1.3}
	\begin{tabular}{@{} l c c c c c c c @{}}
		\toprule
		\textbf{Formulation} &
		\textbf{Supervised} &
		\textbf{Regularisation} &
		$\beta_1$ &
		$\beta_2$ &
		\makecell{\textbf{Finite under}\\\textbf{support mismatch}} &
		\textbf{Geom.} &
		\makecell{\textbf{Tightens}\\\textbf{with data}} \\
		\midrule
		I.\;\; VAE-KL
		& \xmark
		& $D_{\mathrm{KL}}(Q_\phi \| P_{\mathrm{pr}})$
		& tuned
		& ---
		& \xmark
		& \xmark
		& \xmark \\
		II.\; VAE-JS
		& \cmark
		& $D_{\mathrm{KL}}(Q_\phi \| P_{\mathrm{pr}})$
		& tuned
		& tuned
		& \xmark
		& \xmark
		& \xmark \\
		III.\; VAE-JSWA
		& \cmark
		& $W_2^2(Q_\phi,P_{\mathrm{pr}})$
		& tuned
		& tuned
		& \cmark
		& \cmark
		& \cmark \\
		\bottomrule
	\end{tabular}
\end{table}
\subsection{Training procedure}
\label{sec:training_procedure}

Training proceeds in two stages. First, the decoder is pre-trained as a surrogate for the forward model and then frozen. Second, the encoder is trained by minimizing one of \cref{eq:loss_kl}--\cref{eq:loss_jswa}.

\subsubsection{Stage 1: Surrogate pre-training}
\label{sec:decoder_pretraining}

The decoder $\Psi_d$ is trained to minimise the mean squared error over
forward model evaluations,
\begin{align}
    \mathcal{L}_{\mathrm{surr}}
    = \frac{1}{N_{\mathrm{train}}}\sum_{i=1}^{N_{\mathrm{train}}}
      \|\mathcal{F}(\chi, q^{(i)}) - \Psi_d(q^{(i)})\|_2^2,
    \label{eq:decoder_loss}
\end{align}
on data $q^{(i)}$ sampled from the prior. We monitor the relative $L^2$
validation error
\begin{align}
    e_{\mathrm{surr}}
    = \frac{1}{N_{\mathrm{val}}}\sum_{i=1}^{N_{\mathrm{val}}}
      \frac{\|\mathcal{F}(\chi, q^{(i)}) - \Psi_d(q^{(i)})\|_2}
           {\|\mathcal{F}(\chi, q^{(i)})\|_2},
    \label{eq:surrogate_rel_error}
\end{align}
and require $e_{\mathrm{surr}} < 1\%$ before proceeding to Stage~2. Once this threshold is met, the decoder
weights are frozen for the remainder of training.
\subsubsection{Stage 2: Encoder training}
\label{sec:encoder_training}

With the surrogate decoder $\Psi_d$ frozen, the encoder $\Psi_e$ is
trained independently for each formulation by minimizing the corresponding
objective in \cref{eq:loss_kl}--\cref{eq:loss_jswa} using the
Adam optimizer~\cite{kingma_adam_2015}. The encoder predicts
the posterior mean and the Cholesky factor of the full posterior
covariance, and posterior samples are generated using the
reparameterization trick. These samples are passed through the frozen
decoder to evaluate the reconstruction loss. Although the decoder
parameters are fixed, gradients are propagated through its input to
update the encoder parameters. The complete mini-batch training procedure
is summarized in \cref{alg:training}.

\begin{algorithm}[H]
\caption{Two-Stage Training (Common to All Formulations)}
\label{alg:training}
\begin{algorithmic}[1]
\REQUIRE Forward model $\mathcal{F}$, training data $\{(q^{(i)}, u_{\mathrm{obs}}^{(i)})\}_{i=1}^N$,
         prior $P_{\mathrm{pr}}$, noise $\mathcal{N}(\mu_\eta, \Sigma_\eta)$,
         formulation $\in\{\text{I, II, III}\}$, $(\beta_1, \beta_2)$
\ENSURE Trained encoder $\Psi_e$, frozen decoder $\Psi_d$
\STATE \textbf{--- Stage 1: Surrogate pre-training ---}
\REPEAT
    \STATE Sample mini-batch; compute $\mathcal{L}_{\mathrm{surr}}$~\eqref{eq:decoder_loss};
           update $\Psi_d$ via Adam
\UNTIL{$e_{\mathrm{surr}} < 1\%$ on validation set~\eqref{eq:surrogate_rel_error}}
\STATE Freeze decoder weights $\omega_d$
\STATE \textbf{--- Stage 2: Encoder training ---}
\FOR{each epoch}
    \FOR{each mini-batch $\mathcal{B}$}
        \STATE $(\mu_{\mathrm{post}}^{(i)}, L^{(i)}) \leftarrow \Psi_e(u_{\mathrm{obs}}^{(i)})$;
               \quad $\Sigma_{\mathrm{post}}^{(i)} \leftarrow L^{(i)}{L^{(i)}}^\top$
        \STATE $q_{\mathrm{draw}}^{(i)} \leftarrow \mu_{\mathrm{post}}^{(i)} + L^{(i)}\epsilon^{(i)}$,
               \quad $\epsilon^{(i)} \sim \mathcal{N}(0,I)$
        \STATE $\hat{u}^{(i)} \leftarrow \Psi_d(q_{\mathrm{draw}}^{(i)})$
               \hfill\COMMENT{decoder weights frozen}
        \STATE Compute $\mathcal{L}^{(i)}$ from \eqref{eq:loss_kl}, \eqref{eq:loss_js},
               or \eqref{eq:loss_jswa} per formulation
        \STATE Update $\omega_e$ via Adam on $\frac{1}{|\mathcal{B}|}\sum_{i\in\mathcal{B}}\mathcal{L}^{(i)}$
    \ENDFOR
    \STATE Apply early stopping on validation loss
\ENDFOR
\end{algorithmic}
\end{algorithm}
The regularization weights are selected separately for each formulation.
For Formulation~I, $\beta_1$ is selected from
$\{0.01,0.1,1.0\}$, whereas for Formulations~II and III,
$(\beta_1,\beta_2)$ is selected from $\{0.01,0.1,1.0\}^2$. The
hyperparameters are selected using the $W_2$ distance between the VAE
posterior and a reference MCMC posterior on a held-out validation set.

	\section{Comparative theoretical analysis of posterior approximation}
	\label{sec:theory}
	This section provides a theoretical comparison of the three VAE formulations introduced in \cref{sec:distances}. We first analyze posterior approximation errors and then examine the
	modal gradient behavior of the formulations in the generalized Fisher
	basis.
	\subsection{Posterior approximation error bounds} \label{subsec:posterior_error}
	
	\paragraph{Surrogate induced posterior gap and error decomposition.}
	The posterior approximation error has two distinct contributions: the
	error introduced by replacing the exact forward model $\mathcal{F}$ with
	the surrogate decoder $\Psi_d$, and the error arising from approximating
	the resulting posterior within the variational family. Let
	$P_{\mathrm{post}}^{\mathcal{F}}$ and
	$P_{\mathrm{post}}^{\Psi_d}$ denote the posteriors induced by the exact
	and surrogate forward models, respectively. Then, for any learned
	posterior $Q_\phi$, the triangle inequality for the Wasserstein distance
	gives
	\begin{equation}
		W_2\!\left(
		Q_\phi,P_{\mathrm{post}}^{\mathcal{F}}
		\right)
		\leq
		W_2\!\left(
		Q_\phi,P_{\mathrm{post}}^{\Psi_d}
		\right)
		+
		W_2\!\left(
		P_{\mathrm{post}}^{\Psi_d},
		P_{\mathrm{post}}^{\mathcal{F}}
		\right).
		\label{eq:posterior_error_decomp}
	\end{equation}
	The first term represents the variational approximation error associated
	with encoder training, while the second represents the posterior error
	induced by the forward-model surrogate. Thus, improving the surrogate
	accuracy reduces one component of the total posterior error, while the
	choice of variational objective determines how accurately the
	surrogate-induced posterior is approximated.
	
	Under appropriate regularity and stability assumptions on the posterior,
	the surrogate-induced term can be bounded in terms of the forward-model
	approximation error. In particular, if
	\begin{equation}
		\sup_q
		\left\|
		\mathcal{F}(\chi,q)-\Psi_d(q)
		\right\|_2
		\leq \varepsilon_{\mathrm{surr}},
		\label{eq:surr_uniform_error}
	\end{equation}
	then the posterior perturbation can be controlled by a stability bound
	of the form
	\begin{equation}
		W_2\!\left(
		P_{\mathrm{post}}^{\Psi_d},
		P_{\mathrm{post}}^{\mathcal{F}}
		\right)
		\leq
		C_{\mathrm{surr}}\,
		\varepsilon_{\mathrm{surr}},
		\label{eq:surrogate_posterior_bound}
	\end{equation}
	where $C_{\mathrm{surr}}$ depends on the noise model and the regularity
	of the posterior. Consequently, the surrogate should be sufficiently
	accurate that its contribution to the posterior error is small relative
	to the variational approximation error. This provides the theoretical
	motivation for the surrogate-accuracy criterion adopted in
	Stage~1.
	
	Since $W_2$ is a metric, the total posterior approximation error can be
	decomposed as
	\begin{align}
		W_2\!\left(
		Q_\phi,P_{\mathrm{post}}^{\mathcal F}
		\right)
		&\leq
		\underbrace{
			W_2\!\left(
			Q_\phi,P_{\mathrm{post}}^{\Psi_d}
			\right)
		}_{\text{encoder approximation error}}
		+
		\underbrace{
			W_2\!\left(
			P_{\mathrm{post}}^{\Psi_d},
			P_{\mathrm{post}}^{\mathcal F}
			\right)
		}_{\text{surrogate-induced error}}.
		\label{eq:error_decomp}
	\end{align}
	The first term is associated with the approximation achieved by encoder
	training, whereas the second is controlled by the accuracy of the
	surrogate decoder. This decomposition therefore separates the two
	sources of posterior error addressed by the two-stage training
	procedure: surrogate pre-training controls the forward-model
	approximation error, while encoder training controls the variational
	approximation error.
	\paragraph{Gaussian posterior approximation.}
	For the Gaussian posterior family used in this work, the variational
	approximation error admits a particularly transparent decomposition.
	Let
	\begin{equation}
		Q_\phi
		=
		\mathcal{N}(\mu_\phi,\Sigma_\phi),
		\qquad
		P_{\mathrm{post}}
		=
		\mathcal{N}(\mu_*,\Sigma_*).
	\end{equation}
	Their squared Wasserstein distance is
	\begin{equation}
		W_2^2(Q_\phi,P_{\mathrm{post}})
		=
		\|\mu_\phi-\mu_*\|_2^2
		+
		d_B^2(\Sigma_\phi,\Sigma_*),
		\label{eq:gaussian_w2_error}
	\end{equation}
	where
	\begin{equation}
		d_B^2(\Sigma_\phi,\Sigma_*)
		=
		\operatorname{tr}\!\left(
		\Sigma_\phi+\Sigma_*
		-
		2
		\left(
		\Sigma_*^{1/2}
		\Sigma_\phi
		\Sigma_*^{1/2}
		\right)^{1/2}
		\right)
	\end{equation}
	is the squared Bures distance between the covariance matrices~\citep{bhatia_bures_2019a}. Hence,
	the Wasserstein posterior error simultaneously accounts for errors in
	the posterior mean and covariance. In particular,
	\begin{equation}
		\|\mu_\phi-\mu_*\|_2
		\leq
		W_2(Q_\phi,P_{\mathrm{post}}),
		\qquad
		d_B(\Sigma_\phi,\Sigma_*)
		\leq
		W_2(Q_\phi,P_{\mathrm{post}}).
		\label{eq:w2_component_bounds}
	\end{equation}
	Thus, for the full-covariance Gaussian encoder considered here, $W_2$ provides upper bounds on discrepancies in both the posterior mean and covariance.
	
\subsection{Comparative modal gradient analysis}\label{subsec:modal_gradient}
The generalized Fisher basis provides a natural coordinate system for
examining how the different terms in the VAE objectives influence
posterior uncertainty along directions with different information
content. We use the local
linear-Gaussian approximation described in \cref{sec:fisher} to
obtain closed-form expressions for the modal gradient contributions.
The analysis therefore characterizes the local
behavior of the nonlinear inverse problem through its generalized Fisher modes. We focus on the variance component of the encoder posterior,
since the generalized Fisher eigenvalues characterize the information
content of each mode and therefore its posterior variance under the
local linear-Gaussian approximation. This provides a natural setting
for comparing the variance response of the different regularizers. Detailed derivations of the modal gradient contributions are provided in
\Cref{app:recon_gradient,app:fkl_gradient}.

\paragraph{Modal gradient equilibrium.}
Let $v_k=\tilde{\sigma}_k^2$ denote the encoder variance in generalized
Fisher mode $k$. Here, $v_k$ represents the current posterior variance
assigned by the encoder to mode $k$, and is therefore the quantity whose
sensitivity to the objective is examined. At a stationary point, the variance satisfies
\begin{align}
	\frac{\partial \mathcal{L}_{\mathrm{rec}}}{\partial v_k}
	+
	\beta_1
	\frac{\partial \mathcal{L}_{\mathrm{FKL}}}{\partial v_k}
	+
	\beta_2
	\frac{\partial \mathcal{L}_{\mathrm{reg}}}{\partial v_k}
	=0,
	\label{eq:modal_gradient_balance}
\end{align}
where the three terms represent the reconstruction, forward-KL, and
prior-regularization contributions, respectively. The individual
contributions to this balance are derived below.

\paragraph{Reconstruction gradient in the generalized Fisher basis.}
The reconstruction term in the VAE objective is the expectation of the
sample-wise reconstruction loss introduced in
\cref{eq:loss_rec} with respect to the encoder posterior,
\begin{align}
	\mathcal{L}_{\mathrm{rec}}
	&=
	\mathbb{E}_{q\sim Q_\phi(q\mid u_{\mathrm{obs}})}
	\left[
	\ell_{\mathrm{rec}}(q;u_{\mathrm{obs}})
	\right].
	\label{eq:expected_reconstruction_loss}
\end{align}
Under the local linear-Gaussian approximation with isotropic observation
noise and a calibrated encoder mean, the variance-dependent part of
this expected reconstruction loss can be expressed in the generalized
Fisher basis as
\begin{align}
	\mathcal{L}_{\mathrm{rec}}
	&=
	\frac{1}{2}
	\sum_{k=1}^{d}
	\lambda_k^{\mathrm{Fisher}}v_k
	+\mathrm{const.},
	\label{eq:recon_variance_fisher}
\end{align}
where terms independent of $v_k$ are absorbed into the constant.
Consequently,
\begin{align}
	\frac{\partial\mathcal{L}_{\mathrm{rec}}}{\partial v_k}
	=
	\frac{1}{2}\lambda_k^{\mathrm{Fisher}},
	\label{eq:recon_gradient_fisher}
\end{align}
Thus, within the local linear-Gaussian approximation, the reconstruction
sensitivity to the modal variance is independent of the current
variance and is proportional to the generalized Fisher eigenvalue. The
detailed derivation is given in \cref{app:recon_gradient}.

\paragraph{Forward-KL gradient and curvature.}
For the local Gaussian posterior, the variance-dependent forward-KL
contribution in mode $k$ is
\begin{align}
	\mathcal{L}_{\mathrm{FKL},k}(v_k)
	&=
	D_{\mathrm{KL}}
	\left(
	P_{\mathrm{post}}^{(k)}
	\,\|\,Q_\phi^{(k)}
	\right)
	\nonumber\\
	&=
	\frac{1}{2}
	\left[
	\frac{v_k^*}{v_k}
	+\ln v_k
	-\ln v_k^*
	-1
	\right],
	\label{eq:fwdkl_closed_form}
\end{align}
where $v_k^*$ denotes the corresponding local posterior variance. From
the posterior covariance relation established in \cref{sec:fisher},
\begin{align}
	v_k^*
	=
	\frac{1}{1+\lambda_k^{\mathrm{Fisher}}}.
	\label{eq:posterior_variance_fisher}
\end{align}
Differentiating \cref{eq:fwdkl_closed_form} gives
\begin{align}
	\frac{\partial\mathcal{L}_{\mathrm{FKL},k}}
	{\partial v_k}
	&=
	\frac{v_k-v_k^*}{2v_k^2}.
	\label{eq:fwdkl_gradient}
\end{align}
The gradient vanishes at $v_k=v_k^*$, is negative for $v_k<v_k^*$,
and positive for $v_k>v_k^*$. Thus, the forward-KL term provides a
variance-dependent correction toward the local posterior variance.


\paragraph{Comparison of the prior regularizers.}
The three formulations differ in their prior regularization: VAE-KL and
VAE-JS use the reverse-KL divergence
$D_{\mathrm{KL}}(Q_\phi\|P_{\mathrm{pr}})$, whereas VAE-JSWA uses
$W_2^2(Q_\phi,P_{\mathrm{pr}})$. In the whitened generalized Fisher basis,
the prior is standard normal. For the diagonal modal covariance
considered here, the variance-dependent parts of the two regularizers
are
\begin{align}
	\mathcal{L}_{\mathrm{RKL},k}(v_k)
	&=
	\frac{1}{2}
	\left(v_k-\ln v_k-1\right),
	&
	\mathcal{L}_{W_2,k}(v_k)
	&=
	(\sqrt{v_k}-1)^2.
	\label{eq:modal_regularizers}
\end{align}
Their gradients with respect to $v_k$ are therefore
\begin{align}
	\frac{\partial\mathcal{L}_{\mathrm{RKL},k}}{\partial v_k}
	&=
	\frac{1}{2}
	\left(1-\frac{1}{v_k}\right),
	&
	\frac{\partial\mathcal{L}_{W_2,k}}{\partial v_k}
	&=
	1-\frac{1}{\sqrt{v_k}}.
	\label{eq:regularizer_gradients_variance}
\end{align}
Both gradients vanish at the prior variance $v_k=1$ and are negative
for $v_k<1$, thereby opposing the positive reconstruction gradient when
the posterior is narrower than the prior.

The two regularizers differ in their sensitivity to small posterior
variances:
\begin{align}
	\frac{\partial\mathcal{L}_{\mathrm{RKL},k}}{\partial v_k}
	&=
	\mathcal{O}(v_k^{-1}),
	&
	\frac{\partial\mathcal{L}_{W_2,k}}{\partial v_k}
	&=
	\mathcal{O}(v_k^{-1/2}),
	\qquad
	v_k\rightarrow0.
	\label{eq:regularizer_asymptotics}
\end{align}
Thus, the Wasserstein gradient grows more slowly than the reverse-KL
gradient as the posterior variance approaches zero.

Evaluating the gradients at the local posterior variance
\cref{eq:posterior_variance_fisher} gives
\begin{align}
	\left.
	\frac{\partial\mathcal{L}_{\mathrm{RKL},k}}{\partial v_k}
	\right|_{v_k=v_k^*}
	&=
	-\frac{\lambda_k^{\mathrm{Fisher}}}{2},
	&
	\left.
	\frac{\partial\mathcal{L}_{W_2,k}}{\partial v_k}
	\right|_{v_k=v_k^*}
	&=
	1-\sqrt{1+\lambda_k^{\mathrm{Fisher}}}.
	\label{eq:regularizer_at_posterior}
\end{align}
For strongly informative modes,
$\lambda_k^{\mathrm{Fisher}}\gg1$, the corresponding magnitudes scale as
\begin{align}
	\left|
	\frac{\partial\mathcal{L}_{\mathrm{RKL},k}}{\partial v_k}
	\right|
	&\sim
	\frac{\lambda_k^{\mathrm{Fisher}}}{2},
	&
	\left|
	\frac{\partial\mathcal{L}_{W_2,k}}{\partial v_k}
	\right|
	&\sim
	\sqrt{\lambda_k^{\mathrm{Fisher}}}.
	\label{eq:regularizer_scaling}
\end{align}
Hence, the reverse-KL regularizer has a stronger variance sensitivity
than the Wasserstein regularizer in highly concentrated posterior
directions.
\paragraph{Combined modal gradient.}
The behavior of the encoder variance is determined by the combined
contribution of the reconstruction, forward-KL, and prior-regularization
terms. In a Fisher mode $k$, the corresponding variance gradient is
\begin{align}
	\frac{\partial \mathcal L}{\partial v_k}
	&=
	\frac{1}{2}\lambda_k^{\mathrm{Fisher}}
	+
	\beta_1
	\frac{v_k-v_k^*}{2v_k^2}
	+
	\beta_2
	\frac{\partial L_{\mathrm{reg},k}}{\partial v_k},
	\label{eq:combined_modal_gradient}
\end{align}
where $L_{\mathrm{reg},k}$ is either the reverse-KL or squared
$W_2$ regularizer. Thus, the reconstruction term provides a
Fisher-information-dependent contribution, the forward-KL term provides
a restoring contribution toward the Bayesian posterior variance
$v_k^*$, and the prior regularizer provides an additional constraint
toward the prior variance. The modal equilibrium is therefore determined
by the balance of these three contributions through
$\partial\mathcal L/\partial v_k=0$.

\paragraph{Implications across identifiability regimes.}
The preceding results characterize how the different gradient
contributions vary with the generalized Fisher eigenvalue. The
reconstruction gradient scales linearly with
$\lambda_k^{\mathrm{Fisher}}$, while the local posterior variance
decreases as $(1+\lambda_k^{\mathrm{Fisher}})^{-1}$. The forward-KL
contribution provides a variance-dependent correction toward this local
posterior variance, with local curvature increasing with the information
content of the mode. The two prior regularizers exhibit different
responses to posterior concentration: the reverse-KL gradient is more
singular as $v_k\rightarrow0$, whereas the Wasserstein gradient grows
more slowly. These differences provide the basis for comparing the
three formulations across modes with different levels of local
identifiability.


\section{Numerical results}\label{sec:numerical_exp}

Four test problems are considered to investigate how the performance of
KL-, JS-, and the proposed JSWA-regularized VAE formulations depends on
the information geometry of the underlying Bayesian inverse problem.
The experiments test the central hypothesis developed in
\Cref{sec:fisher,sec:theory}: rather than being universally
optimal, the effectiveness of a variational objective depends on the
distribution of information across parameter directions. In the Fisher
basis, the encoder-variance dynamics are governed by the balance between
an information-driven reconstruction contribution, determined by the
Fisher eigenvalue, a forward-KL posterior-correcting contribution
weighted by $\beta_1$, and a prior-restoring contribution weighted by
$\beta_2$. This framework suggests that regularization is most important in weakly informed directions, where small Fisher eigenvalues indicate that the
likelihood provides only a weak constraint on the posterior uncertainty.
In strongly informed directions, larger Fisher eigenvalues indicate that
the likelihood provides a stronger constraint on the posterior variance.

To assess the robustness of the proposed formulations beyond the
controlled linear-Gaussian setting, the numerical study progressively
considers nonlinear and field-valued inverse problems of increasing
complexity. The linear--Gaussian benchmark (\Cref{sec:prob0}) has an exact closed-form posterior and is used to verify the gradient-level mechanism in the generalized Fisher basis. The one-dimensional heat-conduction inverse problem (\Cref{sec:tc1}) extends the analysis to a nonlinear inverse problem governed by a linear ODE, while the two-dimensional conductivity-inversion problem (\Cref{sec:tc2}) considers a nonlinear inverse problem governed by a linear PDE. Together with the linear--Gaussian benchmark, these problems provide a progression in posterior conditioning that allows the relative
performance of the three objectives to be examined beyond the analytically
controlled setting. Finally, the two-dimensional temperature-dependent
conductivity problem (\Cref{sec:tc3})introduces a nonlinear PDE through
the parameter-dependent conductivity and joint parameter inference, and is
used as a qualitative test of whether the proposed interpretation extends
to stronger forward-model nonlinearity. The governing equations and associated parameters for the two-dimensional heat-conduction problems considered in \Cref{sec:tc2,sec:tc3} are expressed in dimensionless form.

The differential equation-based test cases follow a common
training and evaluation protocol. The KL, JS, and JSWA models use identical
neural network architectures and dataset splits, with 80\%, 10\%, and 10\%
of the data assigned to training, validation, and test sets, respectively.
VAE posteriors are benchmarked against adaptive Metropolis MCMC reference
solutions (\Cref{app:am}).

The two-dimensional PDE-based forward problems are discretized using the
finite-element formulation described in \Cref{app:fem_kl} and implemented
using FEniCSx~1.0.3~\citep{ScroggsEtal2022}. MCMC convergence is verified
using $\hat{R}<1.01$ and $\mathrm{ESS}\geq400$
(\Cref{app:convergence}). Posterior approximation is evaluated using the
metrics defined in \Cref{app:metrics}, including relative errors in
posterior mean and standard deviation and the Wasserstein distance $W_2$.
The results are interpreted in relation to the FIM spectra and posterior
condition numbers. All experiments
are implemented in TensorFlow~2.13.0 and executed on an NVIDIA RTX~A5000
GPU (24\,GB VRAM, CUDA~12.8) using fixed random seeds.


For nonlinear inverse problems, ensemble-level results are complemented by a detailed analysis of one representative test realization. The representative realization is selected independently of VAE performance as the test case whose posterior condition number is closest to the ensemble median, and the same realization is used for all three formulations within each test case.

\subsection{Linear Gaussian benchmark with exact analytical posterior}
\label{sec:prob0}
We first consider a linear Gaussian inverse problem for which the exact
posterior is available in closed form. This benchmark is used to examine
how the relative performance of KL-, JS- and JSWA-regularized VAEs
depends on the information geometry of the inverse problem and to
verify the Fisher-mode gradient mechanism developed in
\cref{sec:fisher}. Two cases with markedly different
posterior conditioning are considered: an ill-conditioned case with
$\kappa(\Sigma_{\mathrm{post}}^{*})\approx 8000$ and a well-conditioned
case with $\kappa(\Sigma_{\mathrm{post}}^{*})\approx 8$. Since the exact
posterior is known, this benchmark eliminates MCMC approximation error
and provides a controlled setting for isolating the effects of Fisher
information and the variational objective.The learned modal variances
are interpreted through the gradient equilibrium given by
\cref{eq:modal_gradient_balance}. In particular, the different variance
dependence of the reverse-KL and Wasserstein regularizers,
\cref{eq:regularizer_gradients_variance}, determines how the equilibrium
changes across the Fisher spectrum. This provides a common framework for
interpreting the contrasting behavior of the three objectives in the
ill-conditioned and well-conditioned regimes.

\paragraph{Ill-conditioned case
($\kappa(\Sigma_{\mathrm{post}}^{*})\approx8000$).}
\label{kappa_8000}
For the linear--Gaussian benchmark, the general forward model in
\Cref{eq:forward_model} is specified as
\begin{equation}
\mathcal{F}(q)=Aq,
\qquad
A\in\mathbb{R}^{20\times5},
\end{equation}
where the parameter prior model follow
\Cref{eq:obs_model}, with $Q\sim\mathcal{N}(0,I_5)$ and $\eta\sim\mathcal{N}(0,\sigma_\eta^2I)$ with $\sigma_\eta=0.05$.

The singular values of $A$ are
$s=[10,\,3,\,1,\,0.3,\,0.1]$, which give the Fisher eigenvalues
\[
\lambda_k^{\mathrm{Fisher}}
=
\{40000,\;3600,\;400,\;36,\;4\}.
\]
The corresponding posterior variances range from
$2.5\times10^{-5}$ to $0.2$, producing a four-order-of-magnitude
separation between the strongest- and weakest-information directions. The posterior is obtained from the linear--Gaussian posterior expression in \Cref{eq:posterior_variance}.

\begin{table}[ht]
\centering
\caption{Fisher information spectrum, exact posterior variances, and
their contributions to the total posterior variance for the
ill-conditioned linear Gaussian benchmark.}
\label{tab:prob0_kappa10000}
\begin{tabular}{cccc}
\toprule
Mode $k$ & $\lambda_k^{\mathrm{Fisher}}$ &
True $\tilde{\sigma}_k^{*2}$ & Variance contribution (\%) \\
\midrule
1 & 40000   & $2.50\times10^{-5}$ & 0.01 \\
2 & 3600    & $2.77\times10^{-4}$ & 0.12 \\
3 & 400     & $2.49\times10^{-3}$ & 1.08 \\
4 & 36      & $2.71\times10^{-2}$ & 11.79 \\
5 & 4       & $2.00\times10^{-1}$ & 87.05\\
\bottomrule
\end{tabular}
\end{table}
\Cref{tab:prob0_aggregate} shows the aggregate posterior errors
over 500 test observations. JSWA gives the lowest posterior-mean,
covariance, and $W_2$ errors, with values of $0.0732$, $0.0128$, and
$0.079$, respectively. Overall, JSWA provides the most favourable
combined performance, reducing $W_2$ by approximately $21.7\%$ relative
to JS and $35.2\%$ relative to KL.

\begin{table}[ht]
\centering
\caption{Aggregate posterior accuracy for the ill-conditioned linear
Gaussian benchmark.}
\label{tab:prob0_aggregate}
\begin{tabular}{lccc}
\toprule
Method & $e_\mu$ & $e_{\mathrm{cov}}$ & $W_2$ \\
\midrule
VAE-KL   & 0.1116 & 0.0335 & 0.122 \\
VAE-JS   & 0.0922 & 0.0156 & 0.101 \\
VAE-JSWA & \textbf{0.0732} & \textbf{0.0128} & \textbf{0.079} \\
\bottomrule
\end{tabular}
\end{table}

\Cref{fig:cdf} confirms the aggregate result at the distributional
level. The JSWA CDF is shifted furthest toward smaller $W_2$ values,
with its median reduced by $0.0215$ relative to JS and by $0.0407$
relative to KL. Thus, the improvement is distributed across the test
samples rather than arising from a small subset of observations.

\begin{figure}[ht]
\centering
\includegraphics[width=0.65\textwidth]{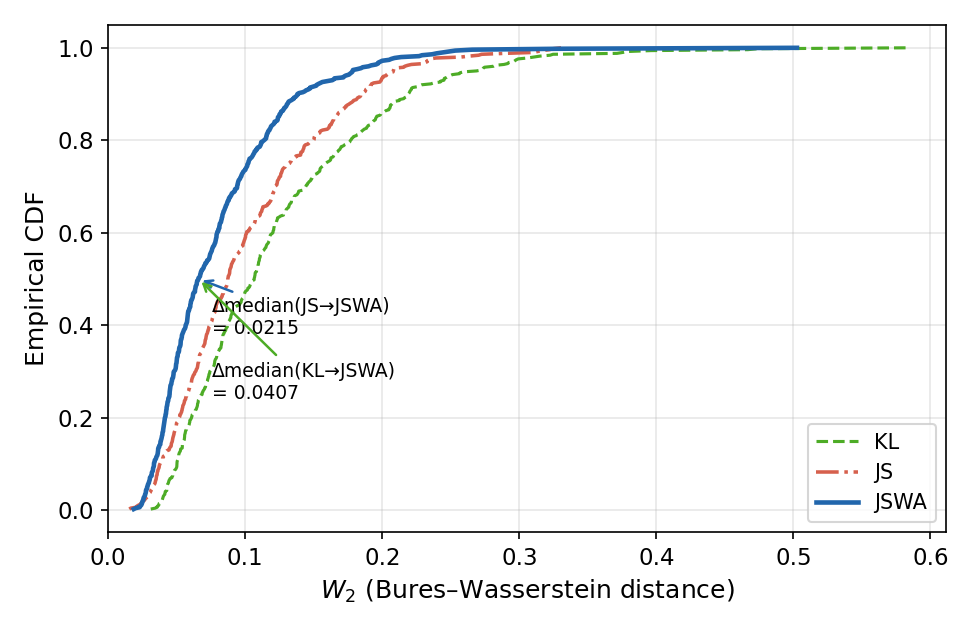}
\caption{Empirical CDF of the $2$-Wasserstein distance
$W_2(Q_\phi,P_{\mathrm{post}}^*)$ for the ill-conditioned benchmark.}
\label{fig:cdf}
\end{figure}

The exact Fisher-basis variances ($\tilde{\sigma}_k^{*2}$) are reported in \Cref{tab:prob0_kappa10000}.
The weakest mode alone accounts for approximately $87\%$ of the total
posterior variance, while the last two modes together account for
approximately $99\%$. Consequently, variance recovery in the
weak-information part of the spectrum has a dominant influence on the
global covariance error.

\begin{table}[ht]
\centering
\caption{Per-eigenmode posterior variance and relative variance error
(in parentheses) for the ill-conditioned benchmark.}
\label{tab:prob0_eigenmode}
\begin{tabular}{ccrrrr}
\toprule
Mode $k$ & FIM $\lambda_k$ &
True $\tilde{\sigma}_k^{*2}$ &
VAE-JSWA & VAE-JS & VAE-KL \\
\midrule
1 & 40000 &
$2.50\times10^{-5}$ &
$1.95\times10^{-5}$ ($22.0\%$) &
$2.30\times10^{-5}$ ($8.0\%$) &
$2.95\times10^{-5}$ ($18.0\%$) \\

2 & 3600 &
$2.70\times10^{-4}$ &
$2.22\times10^{-4}$ ($17.8\%$) &
$3.15\times10^{-4}$ ($16.7\%$) &
$3.30\times10^{-4}$ ($22.2\%$) \\

3 & 400 &
$2.49\times10^{-3}$ &
$2.00\times10^{-3}$ ($19.7\%$) &
$2.80\times10^{-3}$ ($12.4\%$) &
$3.02\times10^{-3}$ ($21.3\%$) \\

4 & 36 &
$2.70\times10^{-2}$ &
$2.30\times10^{-2}$ ($14.8\%$) &
$3.10\times10^{-2}$ ($14.8\%$) &
$3.20\times10^{-2}$ ($18.5\%$) \\

5 & 4 &
$\mathbf{2.00\times10^{-1}}$ &
$\mathbf{1.94\times10^{-1}}$ ($\mathbf{3.0\%}$) &
$\mathbf{2.25\times10^{-1}}$ ($\mathbf{12.5\%}$) &
$\mathbf{2.31\times10^{-1}}$ ($\mathbf{15.5\%}$) \\

\bottomrule
\end{tabular}
\end{table}

As shown in \Cref{tab:prob0_eigenmode}, the dominant weak-information mode ($\lambda_k=4$) provides the clearest distinction among the three formulations: JSWA recovers its variance with only $3.0\%$ error, compared with $12.5\%$ for JS and $15.5\%$ for KL. Since this mode contains most of the posterior uncertainty, its improved recovery has a disproportionate effect on the global metrics. Errors in the strongly informed modes,
although sometimes larger in relative terms, have much less influence because their absolute variances are very small.

\begin{figure*}[!t]
    \centering

    \begin{subfigure}[h]{0.45\textwidth}
        \centering
        \includegraphics[width=\linewidth]{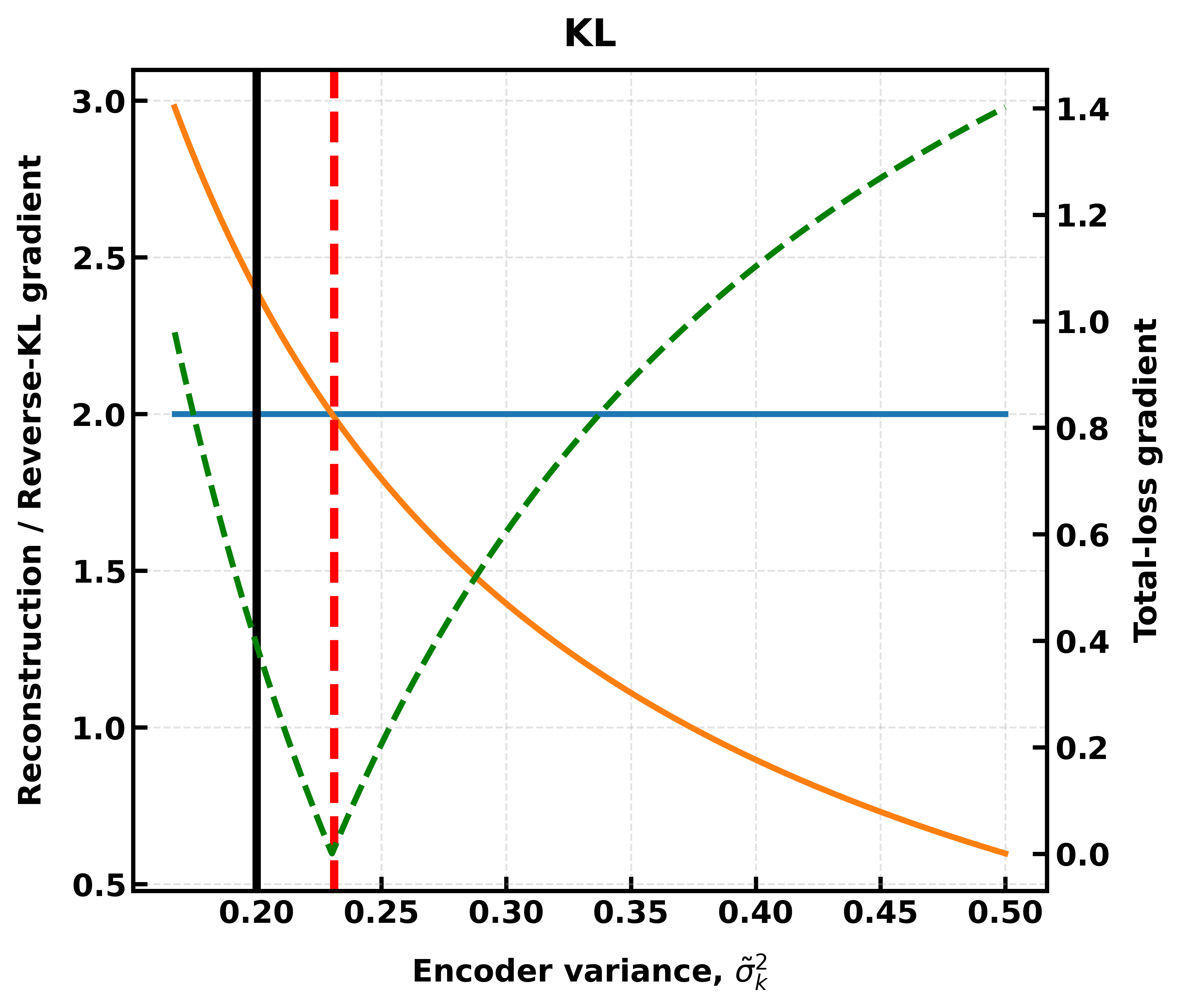}
        \caption{KL}
        \label{fig:gradient_equil_low_kl}
    \end{subfigure}
    \hfill
    \begin{subfigure}[h]{0.45\textwidth}
        \centering
        \includegraphics[width=\linewidth]{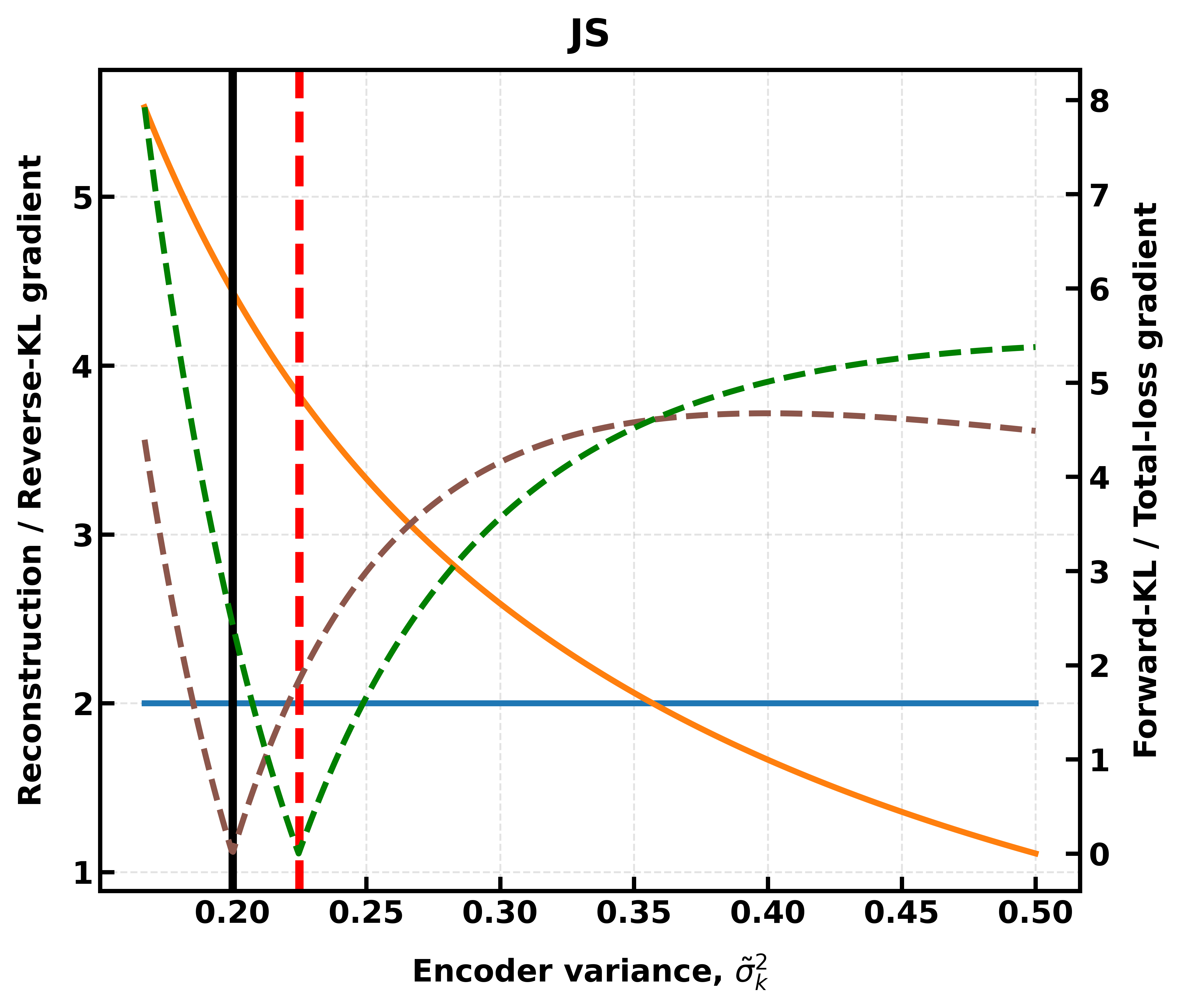}
        \caption{JS}
        \label{fig:gradient_equil_low_js}
    \end{subfigure}

    \vspace{0.03cm}

    \begin{subfigure}[h]{0.45\textwidth}
        \centering
        \includegraphics[width=\linewidth]{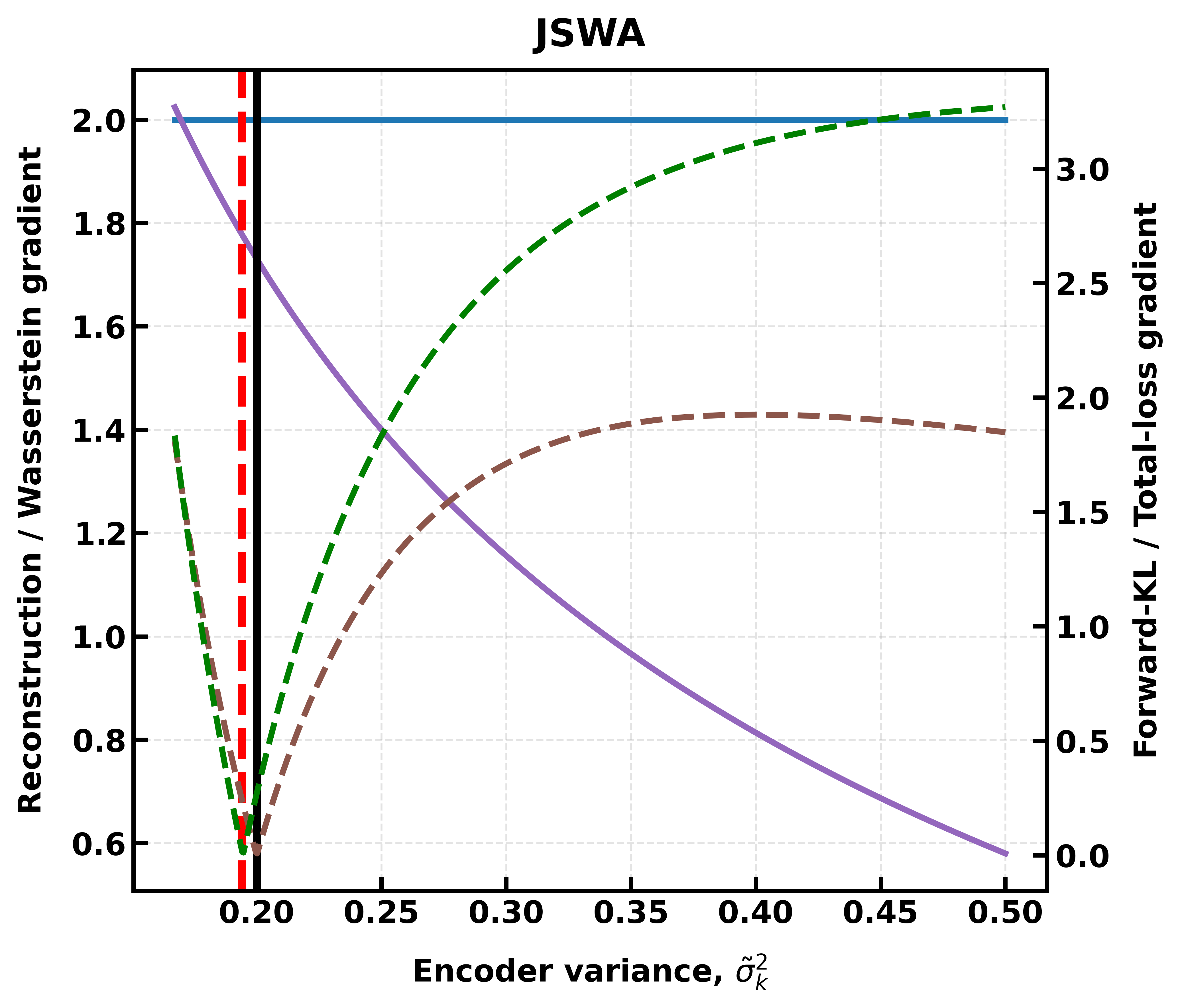}
        \caption{JSWA}
        \label{fig:gradient_equil_low_jswa}
    \end{subfigure}

    \vspace{0.05cm}

    \begin{subfigure}[t]{\textwidth}
        \centering
        \includegraphics[width=\linewidth]{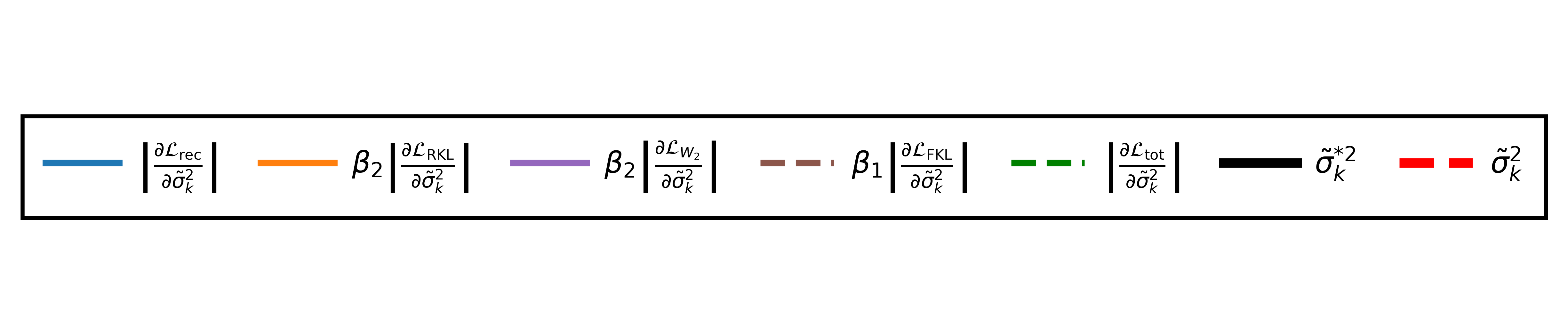}
        \caption*{}
    \end{subfigure}

    \caption{Gradient equilibrium for the weakest-information Fisher mode
    ($\lambda_k=4$) of the ill-conditioned benchmark. The common legend
    identifies the reconstruction, reverse Kullback--Leibler,
    Wasserstein, forward Kullback--Leibler, and total-loss gradient
    contributions, together with the true and encoder variance locations.}

    \label{fig:gradient_equil_low}

\end{figure*}

The gradient equilibrium is examined at the two extremes of the Fisher spectrum, corresponding to the weakest and strongest information modes. For the weakest mode, $\lambda_k=4$, the reconstruction
contribution is comparatively small (\Cref{fig:gradient_equil_low}).The KL reconstruction--reverse-KL gradient
balance occurs at $\tilde{\sigma}_k^2\approx0.231$, above the exact value $0.20$.
For JS, the forward-KL contribution shifts this overestimated
equilibrium toward $0.225$. For JSWA, the weaker Wasserstein
dependence allows the reconstruction--Wasserstein balance to favor
an underestimated variance; forward KL shifts the equilibrium upward
to approximately $0.194$. Thus, forward KL acts in opposite directions
for JS and JSWA, counteracting the bias produced by their respective regularizer balances.
\begin{figure*}[!t]
    \centering

    \begin{subfigure}[h]{0.45\textwidth}
        \centering
        \includegraphics[width=\linewidth]{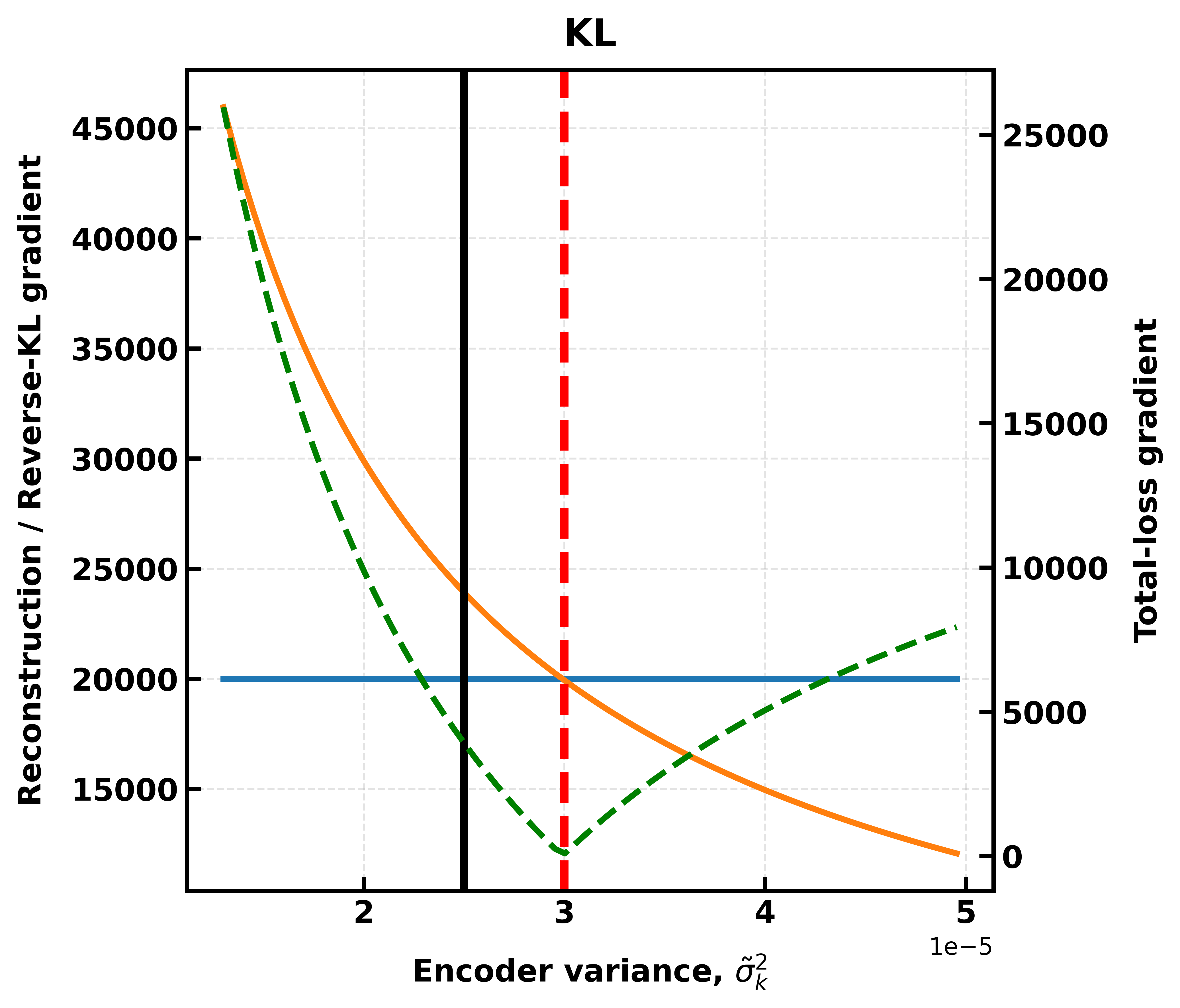}
        \caption{KL}
        \label{fig:gradient_equil_high_kl}
    \end{subfigure}
    \hfill
    \begin{subfigure}[h]{0.45\textwidth}
        \centering
        \includegraphics[width=\linewidth]{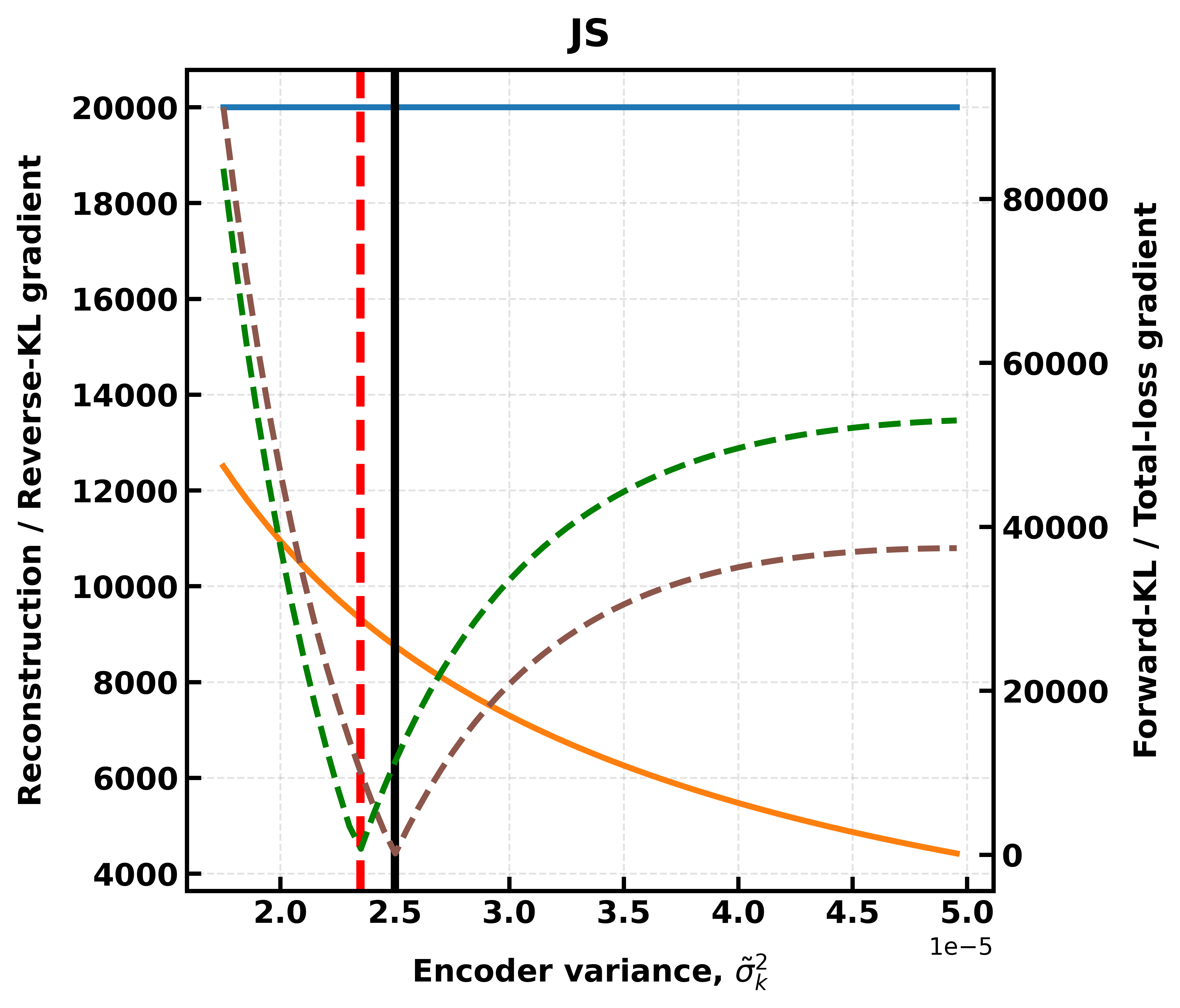}
        \caption{JS}
        \label{fig:gradient_equil_high_js}
    \end{subfigure}

    \vspace{0.03cm}

    \begin{subfigure}[h]{0.45\textwidth}
        \centering
        \includegraphics[width=\linewidth]{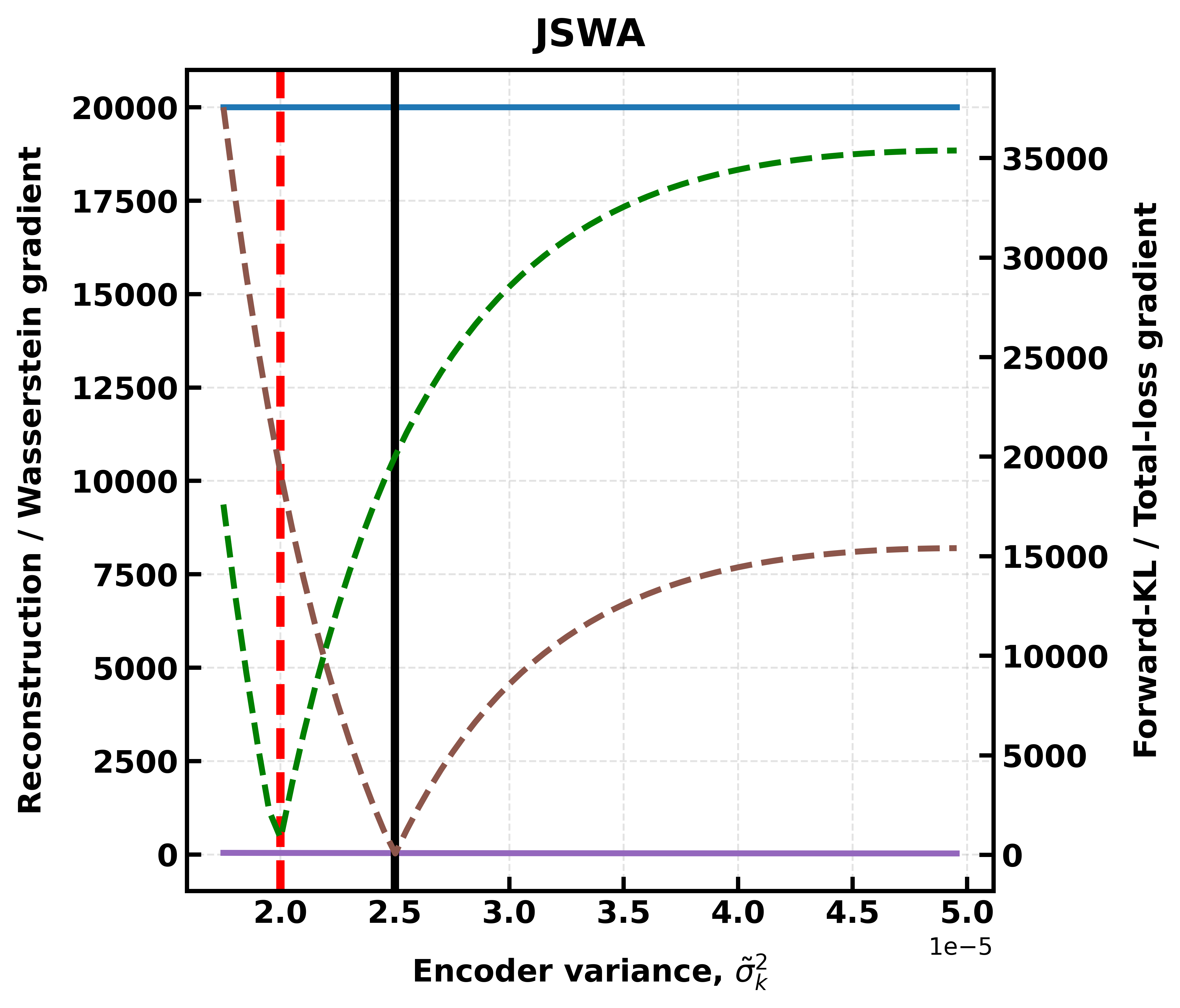}
        \caption{JSWA}
        \label{fig:gradient_equil_high_jswa}
    \end{subfigure}

    \vspace{0.05cm}

    \begin{subfigure}[t]{\textwidth}
        \centering
        \includegraphics[width=\linewidth]{figs/Test_case_0/gradient_legend.png}
        \caption*{}
    \end{subfigure}

    \caption{Gradient equilibrium for the strongest-information Fisher mode
    ($\lambda_k=40000$) of the ill-conditioned benchmark. The common legend
    identifies the reconstruction, reverse Kullback--Leibler,
    Wasserstein, forward Kullback--Leibler, and total-loss gradient
    contributions, together with the true and encoder variance locations.}

    \label{fig:gradient_equil_high}

\end{figure*}

At the opposite end of the Fisher spectrum, $\lambda_k=40000$, the reconstruction contribution is much larger and the exact posterior variance is only $2.5\times10^{-5}$ (\Cref{fig:gradient_equil_high}). The reverse-KL contribution remains sufficiently
steep to balance reconstruction for KL, while the forward-KL
contribution confines the JS and JSWA equilibria to a narrow posterior
neighbourhood. In JSWA, the Wasserstein contribution remains small
relative to reconstruction at this scale. Hence, differences in the learned variances of these strongly informed modes have a smaller influence on the global covariance error because their absolute posterior variances are negligible.

\paragraph{Well-conditioned case
($\kappa(\Sigma_{\mathrm{post}}^{*})\approx9$).}
\label{kappa_well}
The same linear Gaussian model is considered with the more compact with singular values $s = [1.5,1.25,1.0,0.75,0.5]$ giving 
Fisher spectrum as $\lambda_k^{\mathrm{Fisher}}
=\{900,\;625,\;400,\;225,\;100\}$. The corresponding exact posterior variances range from 0.0099 to 0.011 as described in Table~\ref{tab:prob0_well_spectrum}. Unlike the ill-conditioned case, posterior uncertainty is distributed
across several modes. The first mode contributes $51.3\%$ of the total
variance, while the first two and first three contribute $74.1\%$ and
$86.5\%$, respectively.

\begin{table}[ht]
\centering
\caption{Fisher information spectrum and posterior variance
distribution for the well-conditioned benchmark.}
\label{tab:prob0_well_spectrum}
\begin{tabular}{cccc}
\toprule
Mode $k$ & $\lambda_k^{\mathrm{Fisher}}$ &
$\tilde{\sigma}_k^{*2}$ & Variance contribution (\%) \\
\midrule
1 & 900 & $1.1\times10^{-3}$ & 5.70 \\
2 & 625 & $1.5\times10^{-3}$ & 7.77 \\
3 & 400 & $2.4\times10^{-3}$ & 12.44 \\
5 & 100 & $9.9\times10^{-3}$ & 51.30 \\
4 & 225 & $4.4\times10^{-3}$ & 22.80 \\
\bottomrule
\end{tabular}
\end{table}

In contrast to the ill-conditioned case, VAE-KL gives the lowest error
in all three aggregate metrics. The corresponding values are
$e_\mu=0.0221$, $e_{\mathrm{cov}}=0.092$, and $W_2=0.0263$, compared
with $(0.0249,0.117,0.0303)$ for JS and
$(0.0303,0.149,0.0351)$ for JSWA. Thus, when the posterior is well
conditioned, the standard KL objective provides the most accurate
global posterior approximation.

\begin{table}[ht]
\centering
\caption{Aggregate posterior accuracy for the well-conditioned linear
Gaussian benchmark.}
\label{tab:prob0_well_aggregate}
\begin{tabular}{lccc}
\toprule
Method & $e_\mu$ & $e_{\mathrm{cov}}$ & $W_2$ \\
\midrule
VAE-KL   & \textbf{0.0221} & \textbf{0.092} & \textbf{0.0263} \\
VAE-JS   & 0.0249 & 0.117 & 0.0303 \\
VAE-JSWA & 0.0303 & 0.149 & 0.0351 \\
\bottomrule
\end{tabular}
\end{table}

The CDF in \Cref{fig:cdf_well} confirms the global ranking, with
the KL distribution shifted toward smaller $W_2$ values. The median
separation is modest, with differences of approximately $0.0048$
between JSWA and JS and $0.0069$ between JSWA and KL. Hence, unlike
the pronounced separation observed for the ill-conditioned case, the
three objectives remain relatively close, although KL is consistently
more accurate.

\begin{figure}[ht]
\centering
\includegraphics[width=0.65\textwidth]{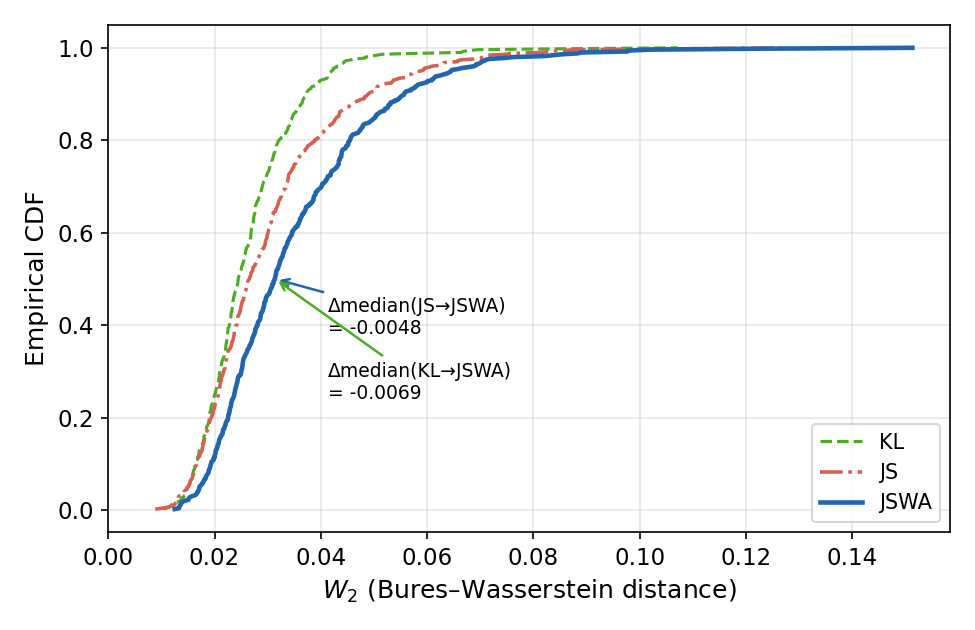}
\caption{Empirical CDF of the $2$-Wasserstein distance
$W_2(Q_\phi,P_{\mathrm{post}}^*)$ for the well-conditioned benchmark.}
\label{fig:cdf_well}
\end{figure}

The modal results in \Cref{tab:prob0_well_eigenmode} show that KL
remains close to the exact variance throughout the spectrum. Its errors
are only $0.40\%$ and $0.68\%$ in the first two modes, which together
contain $74.1\%$ of the total posterior variance, and the largest KL
error is $4.17\%$. JS and JSWA generally exhibit larger deviations.
Although JSWA achieves essentially zero error in the highest-information
mode, that mode contributes only $5.7\%$ of the total variance.
Therefore, accurate recovery of the uncertainty-dominant modes favours
KL and explains its lower global covariance error.

\begin{table}[ht]
\centering
\caption{Per-eigenmode posterior variance and relative variance error
(in parentheses) for the well-conditioned benchmark.}
\label{tab:prob0_well_eigenmode}
\begin{tabular}{cccccc}
\toprule
Mode $k$ & FIM $\lambda_k$ &
True $\tilde{\sigma}_k^{*2}$ &
VAE-JSWA & VAE-JS & VAE-KL \\
\midrule
1 & 900 &
$1.1\times10^{-3}$ &
$\mathbf{1.1\times10^{-3}}$ ($\mathbf{0.00\%}$) &
$1.3\times10^{-3}$ ($18.18\%$) &
$\mathbf{1.1\times10^{-3}}$ ($\mathbf{0.00\%}$) \\

2 & 625 &
$1.5\times10^{-3}$ &
$1.4\times10^{-3}$ ($6.67\%$) &
$1.7\times10^{-3}$ ($13.33\%$) &
$\mathbf{1.55\times10^{-3}}$ ($\mathbf{3.33\%}$) \\

3 & 400 &
$2.4\times10^{-3}$ &
$2.1\times10^{-3}$ ($12.50\%$) &
$2.65\times10^{-3}$ ($10.42\%$) &
$\mathbf{2.5\times10^{-3}}$ ($\mathbf{4.17\%}$) \\

4 & 225 &
$4.4\times10^{-3}$ &
$4.0\times10^{-3}$ ($9.09\%$) &
$4.8\times10^{-3}$ ($9.09\%$) &
$\mathbf{4.37\times10^{-3}}$ ($\mathbf{0.68\%}$) \\

5 & 100 &
$9.9\times10^{-3}$ &
$8.9\times10^{-3}$ ($10.10\%$) &
$1.08\times10^{-2}$ ($9.09\%$) &
$\mathbf{9.86\times10^{-3}}$ ($\mathbf{0.40\%}$) \\
\bottomrule
\end{tabular}
\end{table}

\begin{figure}[!htbp]
    \centering

    \begin{subfigure}[t]{0.45\textwidth}
        \centering
        \includegraphics[width=\linewidth]{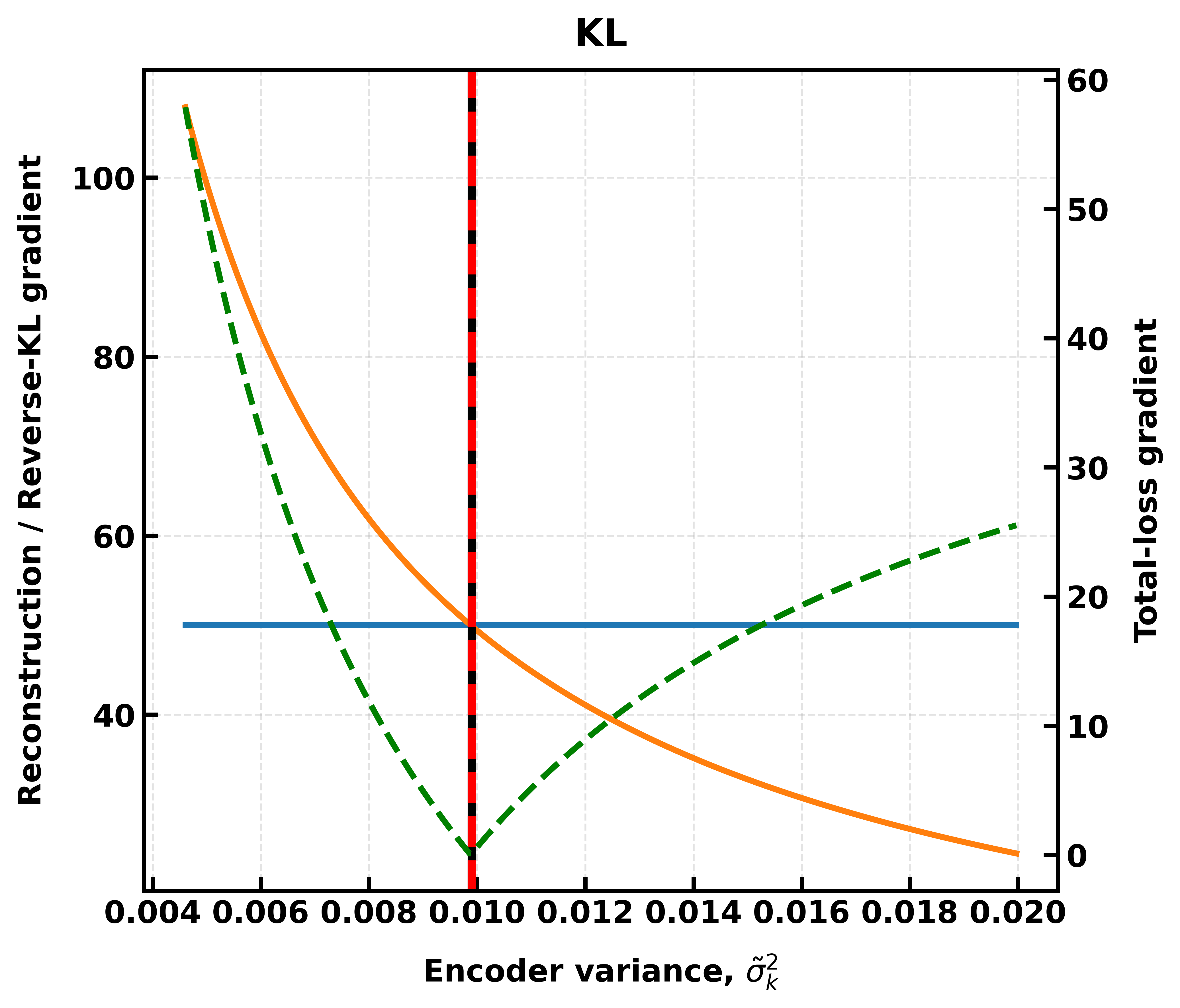}
        \caption{KL}
        \label{fig:gradient_equil_low_kl_well}
    \end{subfigure}
    \hfill
    \begin{subfigure}[t]{0.45\textwidth}
        \centering
        \includegraphics[width=\linewidth]{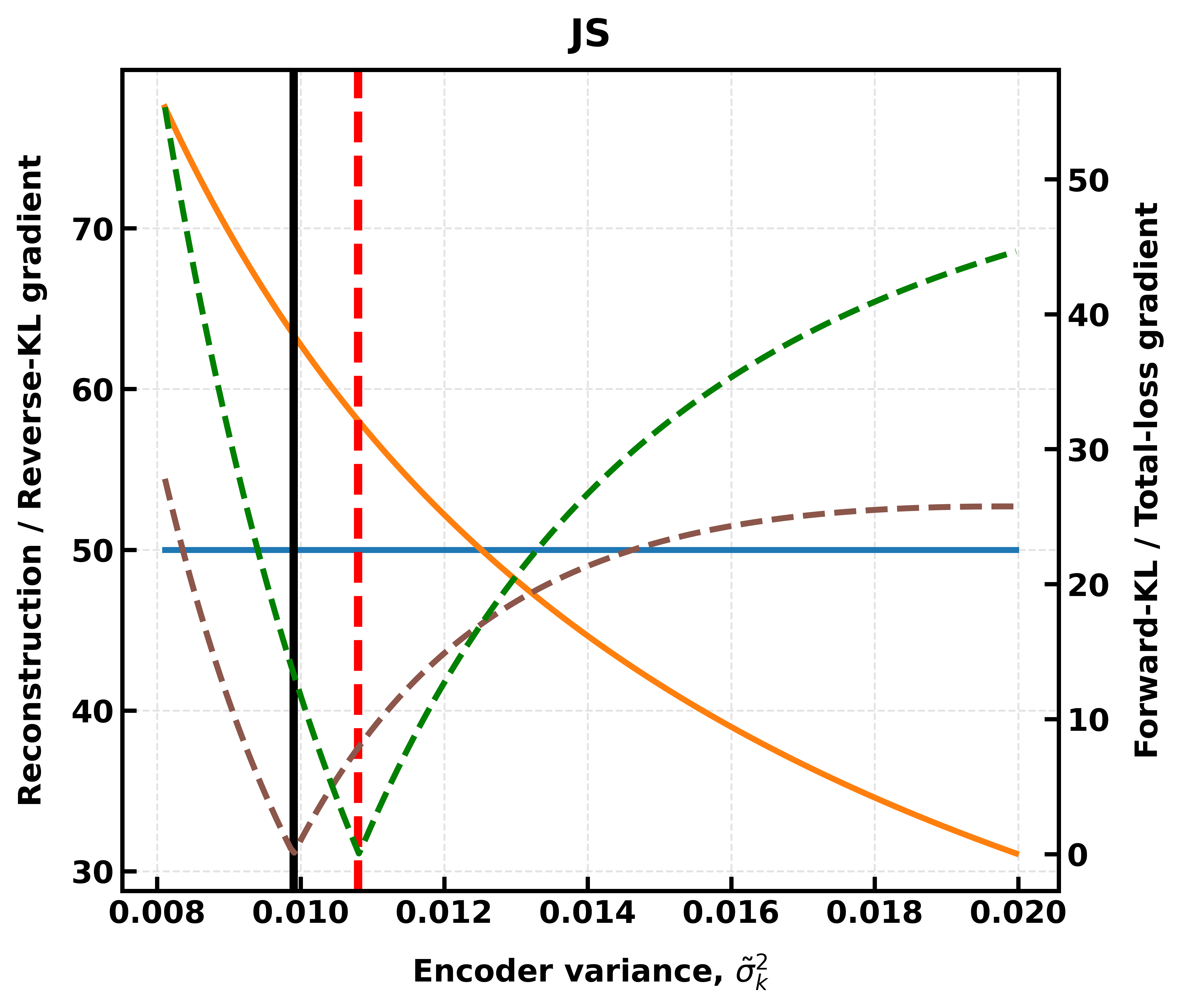}
        \caption{JS}
        \label{fig:gradient_equil_low_js_well}
    \end{subfigure}

    \vspace{0.005cm}

    \begin{subfigure}[t]{0.45\textwidth}
        \centering
        \includegraphics[width=\linewidth]{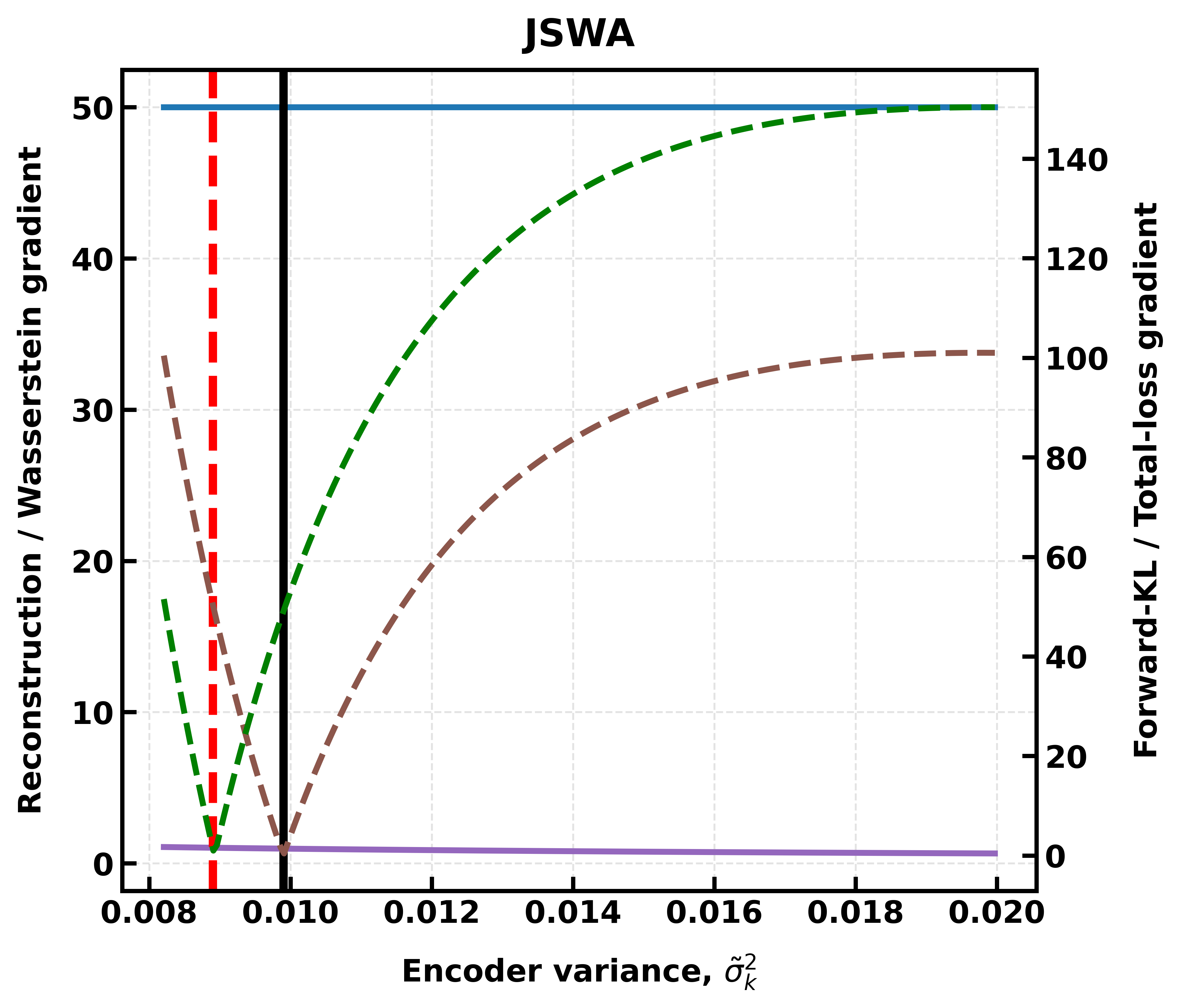}
        \caption{JSWA}
        \label{fig:gradient_equil_low_jswa_well}
    \end{subfigure}

    \vspace{0.001cm}

    \begin{subfigure}[t]{\textwidth}
        \centering
        \includegraphics[width=\linewidth]{figs/Test_case_0/gradient_legend.png}
        \caption*{}
    \end{subfigure}

    \caption{Gradient equilibrium for the lower-information Fisher mode
    ($\lambda_k=100$) of the well-conditioned benchmark.}

    \label{fig:gradient_equil_well_100}

\end{figure}

The gradient equilibrium is examined for $\lambda_k=100$ and
$\lambda_k=900$, representing the lower- and higher-information ends
of the well-conditioned spectrum. At $\lambda_k=100$,
$\tilde{\sigma}_k^{*2}\approx9.9\times10^{-3}$ and the KL
reconstruction--reverse-KL balance occurs essentially at the exact
variance, giving $\tilde{\sigma}_{\mathrm{KL}}^2\approx9.86\times10^{-3}$ (\Cref{fig:gradient_equil_well_100}).
In contrast, JS shifts the equilibrium above the target to
approximately $1.08\times10^{-2}$, whereas JSWA shifts it below the
target to approximately $8.9\times10^{-3}$. Thus, the additional terms
in JS and JSWA modify an equilibrium that is already accurately located
by KL.
\begin{figure*}[!t]
\centering
\begin{subfigure}[t]{0.45\textwidth}
\centering
\includegraphics[width=\linewidth]{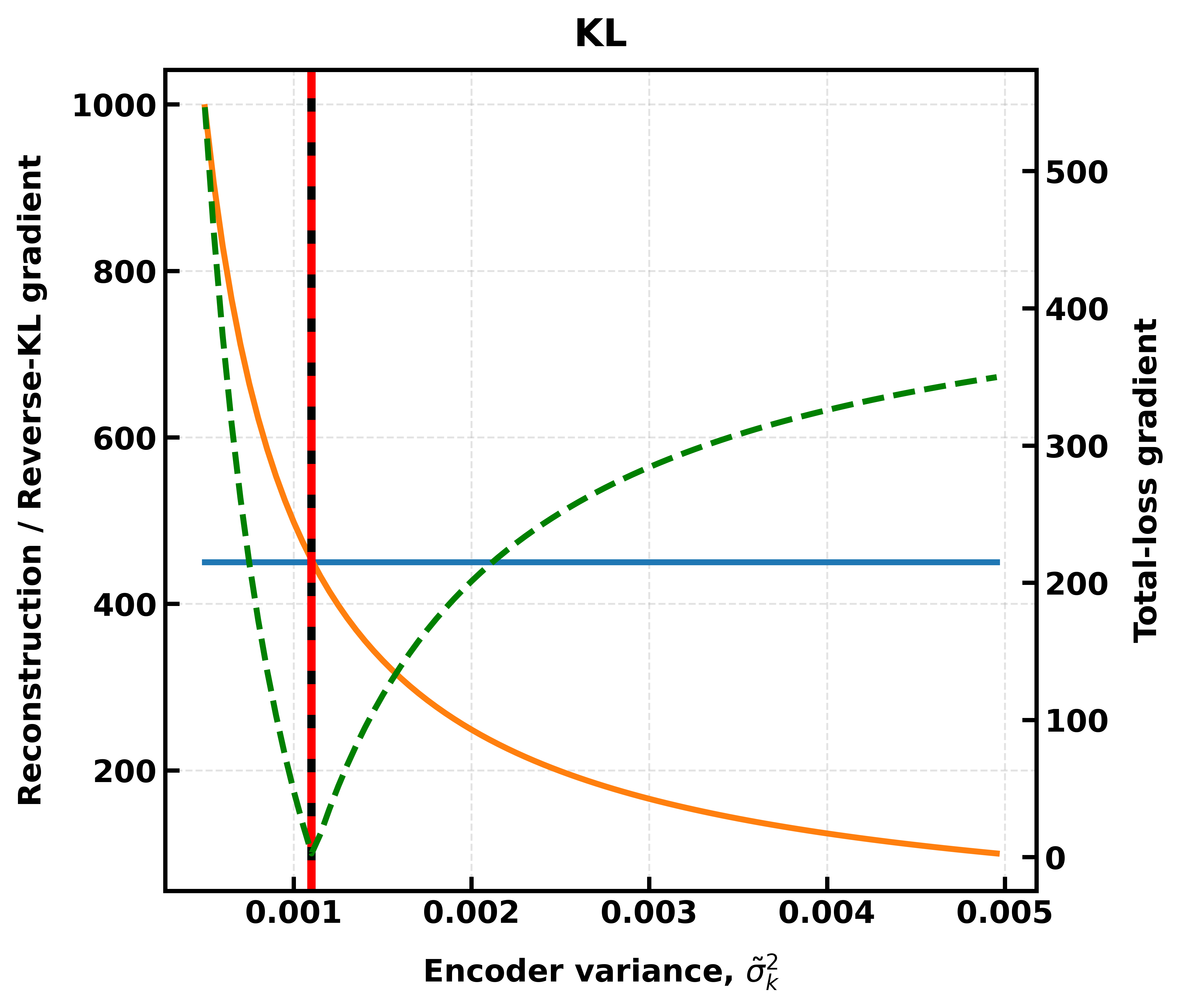}
\caption{KL}
\end{subfigure}
\hfill
\begin{subfigure}[t]{0.45\textwidth}
\centering
\includegraphics[width=\linewidth]{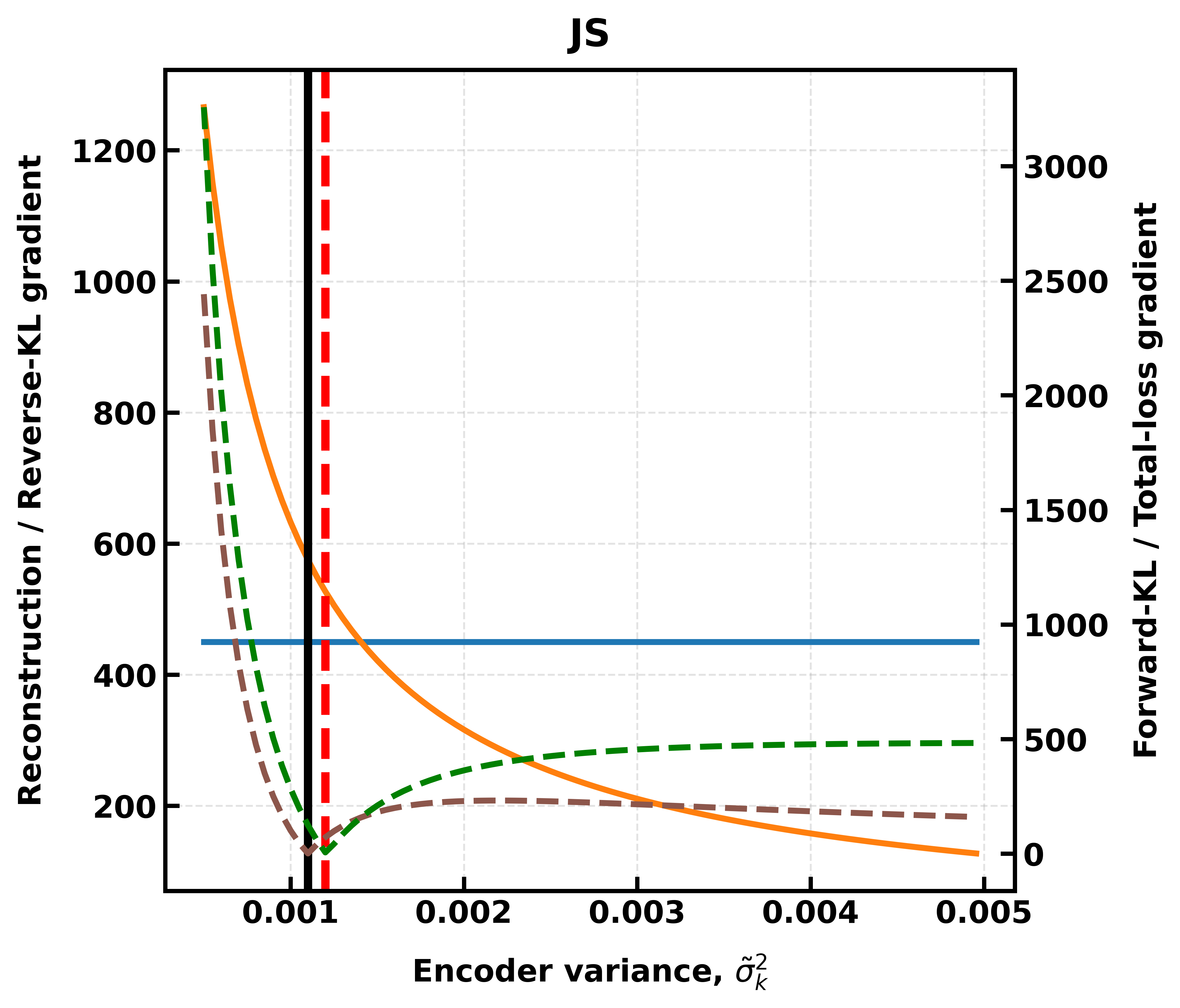}
\caption{JS}
\end{subfigure}

\vspace{0.05cm}

\begin{subfigure}[t]{0.45\textwidth}
\centering
\includegraphics[width=\linewidth]{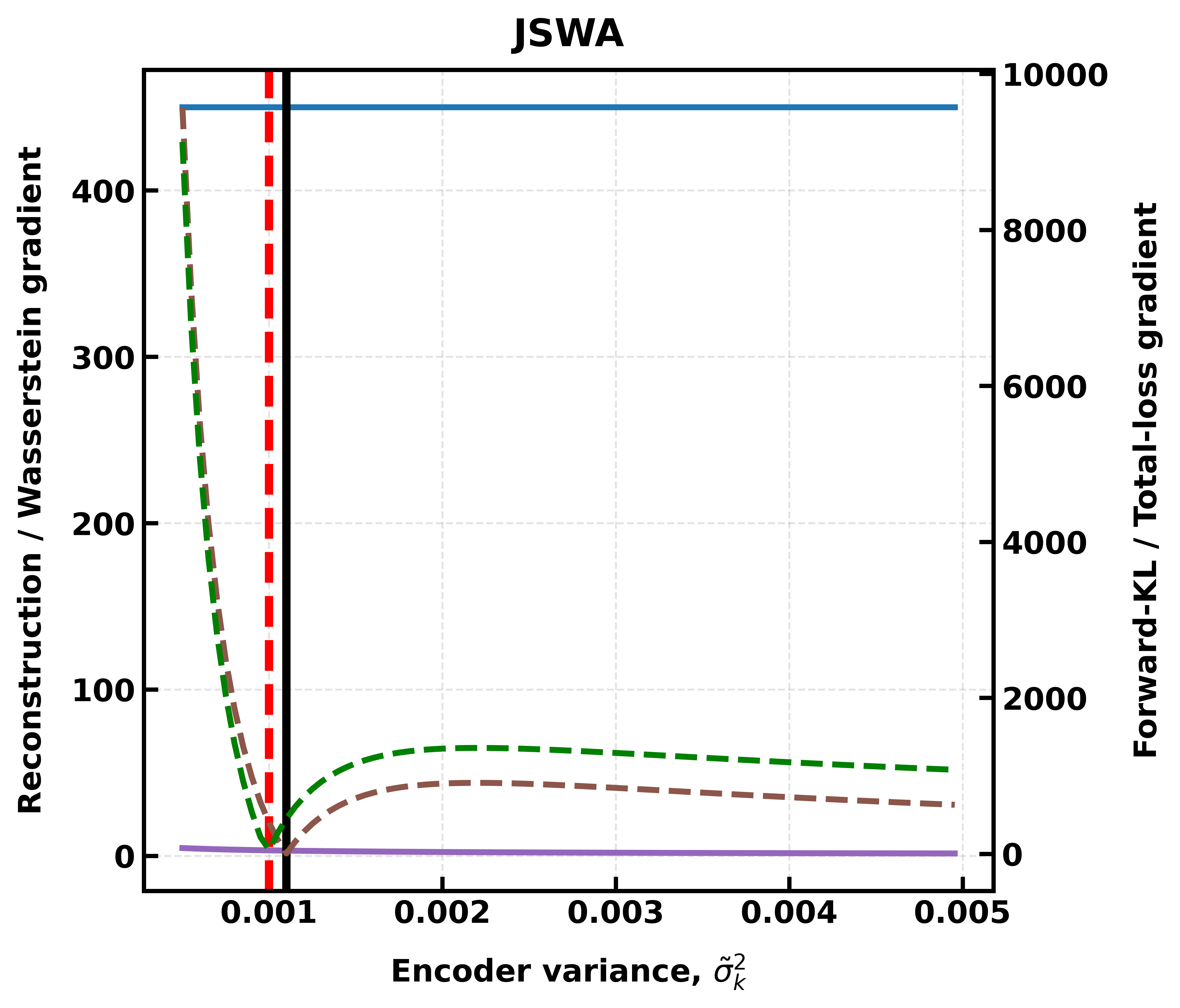}
\caption{JSWA}
\end{subfigure}

\vspace{0.05cm}

\begin{subfigure}[t]{\textwidth}
        \centering
        \includegraphics[width=\linewidth]{figs/Test_case_0/gradient_legend.png}
        \caption*{}
\end{subfigure}

\caption{Gradient equilibrium for the higher-information Fisher mode
($\lambda_k=900$) of the well-conditioned benchmark.}
\label{fig:gradient_equil_well_900}
\end{figure*}

At $\lambda_k=900$, the posterior variance decreases to approximately
$1.1\times10^{-3}$ and the reconstruction contribution becomes larger (\Cref{fig:gradient_equil_well_900}).
KL again equilibrates essentially at the exact variance. JS instead
overestimates the variance by approximately $18.18\%$, whereas JSWA
recovers the exact variance closely. The Wasserstein contribution
remains small compared with reconstruction, indicating that this
individual JSWA result arises from the combined gradient balance rather
than from a dominant Wasserstein contribution.

\paragraph{Overall interpretation.}

The two conditioning regimes reveal a clear dependence of objective
performance on the Fisher spectrum. In the ill-conditioned benchmark,
posterior uncertainty is concentrated in the weak-information modes,
where the reconstruction contribution is small and the KL equilibrium
exhibits a substantial variance bias. The JSWA formulation shifts this
equilibrium toward the exact variance in the dominant uncertainty
direction, leading to the most favourable global posterior accuracy.

In the well-conditioned benchmark, uncertainty is distributed across
several modes and the KL equilibrium is already close to the exact
posterior, particularly in the modes carrying most of the total
variance. The additional forward-KL and Wasserstein terms can improve
individual modes but do not provide a systematic global benefit.
Consequently, KL achieves the lowest mean, covariance, and
Wasserstein errors.

Taken together, these results show that the benefit of JSWA is
conditioning-dependent rather than universal: it becomes most
pronounced when weakly informed directions create a substantial
posterior-variance recovery challenge, whereas KL is already highly
accurate when the posterior is well conditioned.


\subsection{One-dimensional steady-state heat conduction}
\label{sec:tc1}
We next consider a one-dimensional steady-state heat-conduction inverse problem based on Smith~\cite{smith_uncertainty_2013} to investigate whether the conditioning-dependent behavior observed in the linear-Gaussian benchmark persists when the governing differential equation is linear, but the resulting parameter-to-observation map is nonlinear. The problem is deliberately kept low-dimensional and relatively well conditioned, providing a nonlinear counterpart to the well-conditioned case in \Cref{sec:prob0}.

\paragraph{Problem formulation.}

The one-dimensional steady-state heat conduction problem is defined on
$0<x<L$ and is governed by
\begin{equation}
\frac{d^2u}{dx^2}
=
\frac{2(a+b)}{ab}\frac{h}{k}
\left(u-u_\infty\right),
\qquad 0<x<L,
\label{eq:tc1:governing}
\end{equation}
subject to the boundary conditions
\begin{equation}
\frac{du}{dx}(0)
=
-\frac{\dot q}{k},
\qquad
\frac{du}{dx}(L)
=
-\frac{h}{k}
\left[u(L)-u_\infty\right],
\label{eq:tc1:bc}
\end{equation}
where $u$ denotes temperature, $k$ is the thermal conductivity,
$h$ is the convective heat-transfer coefficient, $u_\infty$ is the
ambient temperature, and $\dot q$ is the prescribed heat flux and $a$, $b$ are the cross-sectional dimensions of the fin.

The corresponding analytical solution can be written as
\begin{equation}
u(x)
=
c_1 e^{-\gamma x}
+
c_2 e^{\gamma x}
+
u_\infty
\label{eq:tc1:analytical}
\end{equation}

where $\gamma = \sqrt{2(a+b)h/(abk)}$,
$c_2 = -\dot{q}/(k\gamma) + c_1$, and
\begin{align}
    c_1 = \frac{\dot{q}}{k\gamma}
          \!\left(
            \frac{e^{\gamma L}(h + k\gamma)}
                 {e^{-\gamma L}(h - k\gamma)
                  + e^{\gamma L}(h + k\gamma)}
          \right)\!.
    \label{eq:tc1:c1}
\end{align}

Introducing $X = x/L$, $U = (u-u_\infty)/[q_0/(k\gamma_0)]$,
$q' = \dot{q}/q_0$, and the Biot number
$\mathrm{Bi} = hL/k$, and writing
$\gamma L = \sqrt{d_1\,\mathrm{Bi}}$ with $d_1 = 2(a+b)L/(ab)$,
the nondimensional temperature field is
\begin{align}
    \Aboxed{
    U(X) = C_1\,e^{-\sqrt{d_1\,\mathrm{Bi}}\,X}
         + C_2\,e^{+\sqrt{d_1\,\mathrm{Bi}}\,X}
    =: f(\mathrm{Bi},\,q'),}
    \label{eq:tc1:nondim}
\end{align}
where $C_1$ and $C_2$ are determined by the boundary conditions. The system response is fully determined by the Biot number and nondimensional heat flux, which constitute the parameters of interest. The parameter vector $\mathbf{q}$ in \Cref{eq:forward_model} is defined as

\begin{equation}
q=
\begin{bmatrix}
\mathrm{Bi} & q'
\end{bmatrix}^{T},
\end{equation}
where
\[
\mathrm{Bi}=\frac{hL}{k},
\qquad
q'=\frac{\dot q}{q_0},
\]

The physical parameters are fixed at $a=b=\SI{0.95}{cm}$, $L=\SI{70}{cm}$, $k=\SI{2.37}{W.cm^{-1}.K^{-1}}$, $h=\SI{1.91e-3}{W.cm^{-2}.K^{-1}}$, $u_\infty=\SI{21.29}{K}$, and $\dot{q}=\SI{18.41}{W.cm^{-2}}$. Observations are generated using the analytical forward model $\mathcal{F}$ at $n_{\mathrm{sensor}}=15$ equally spaced locations, with $x_0=\SI{10}{cm}$ and $\Delta x=\SI{4}{cm}$. These locations are selected to capture variations in both the temperature magnitude and curvature. Independent Gaussian noise, $\eta\sim\mathcal{N}(0,\sigma_\eta^2 I)$, with $\sigma_\eta=0.05$, is added to each observation. The prior distribution for the parameters of interest is specified as $P_{\mathrm{pr}}(\mathrm{q})=\mathcal{N}(\mathrm{\mu}_{\mathrm{pr}},\mathrm{\Sigma}_{\mathrm{pr}})$, where $\mathrm{\mu}_{\mathrm{pr}}=(0.04,0.65)^\top$ and $\mathrm{\Sigma}_{\mathrm{pr}}=\operatorname{diag}(0.2,0.05)$. The prior is chosen to encompass the physically plausible parameter range.

It must be noted that the governing heat-conduction
equation is linear in the temperature field, the resulting
parameter-to-observation map is nonlinear in $q$, making this a nonlinear
inverse problem.

\paragraph{Surrogate accuracy and reference posterior.}
The VAE formulations are trained following the procedure described previously
in \Cref{sec:training_procedure}. The dataset contains 1000 observations,
of which 800 are used for training. The decoder $\Psi_d$ is pre-trained using
the $N_{\mathrm{train}}=800$ parameter--solution pairs
\[
\left\{(\mathrm{Bi}^{(i)},q'^{(i)},u^{(i)})\right\}_{i=1}^{N_{\mathrm{train}}},
\]
sampled uniformly over the prescribed parameter ranges. Prior to training,
the parameter vector is transformed to log-space as
\[
q=
\begin{bmatrix}
\log\mathrm{Bi} & \log q'
\end{bmatrix}^{T}.
\]
The validation set is used to monitor surrogate accuracy through
\cref{eq:surrogate_rel_error}, and training is terminated once the
surrogate error satisfies $e_{\mathrm{surr}}<\SI{1}{\percent}$. The resulting
relative errors are $\SI{0.171}{\percent}$ on the validation set and
$\SI{0.161}{\percent}$ on the test set.

The resulting surrogate posterior is then evaluated against independently generated MCMC samples for each test observation, using the convergence criteria specified in Table~\ref{tab:app}. The converged MCMC posterior is taken as the reference distribution for all subsequent comparisons of posterior accuracy.

In contrast to the linear-Gaussian benchmark, the nonlinear forward
operator does not admit a single Fisher-information basis that remains
representative across the entire test ensemble. Because the local
sensitivity of the forward map depends on the parameter state, the
corresponding Fisher information and posterior geometry vary from one test
case to another. Consequently, a global Fisher-mode decomposition is not
appropriate here; posterior conditioning is therefore characterized
directly from the covariance of each MCMC reference posterior. Although the governing heat-conduction
equation is linear in the temperature field, the resulting
parameter-to-observation map is nonlinear in $q$, making this a nonlinear
inverse problem.

\paragraph{Posterior conditioning and distributional accuracy.}
The posterior conditioning of the one-dimensional heat-conduction
inverse problem is first characterized directly from the MCMC reference
posteriors. As discussed common Fisher-mode decomposition is not used for the
PDE test cases therefore the condition number of each reference posterior covariance is evaluated directly from the MCMC samples as
\begin{equation}
\kappa_i
=
\frac{
\lambda_{\max}
\left(\Sigma_{\mathrm{MCMC},i}\right)
}{
\lambda_{\min}
\left(\Sigma_{\mathrm{MCMC},i}\right)
},
\label{eq:tc1:condition}
\end{equation}
where $\Sigma_{\mathrm{MCMC},i}$ denotes the posterior covariance for
test case $i$.

For the heat-conduction problem, the posterior condition numbers range
from $48.3$ to $68.6$, with a median of $55.7$ as shown in \Cref{fig:tc1:condition_number}. These values are substantially smaller than those
of the ill-conditioned benchmarks considered later and indicate a
relatively moderate degree of posterior anisotropy across the test
ensemble.

\begin{figure}[h]
    \centering
    \begin{subfigure}[b]{0.48\textwidth}
        \centering
        \includegraphics[width=\textwidth]
        {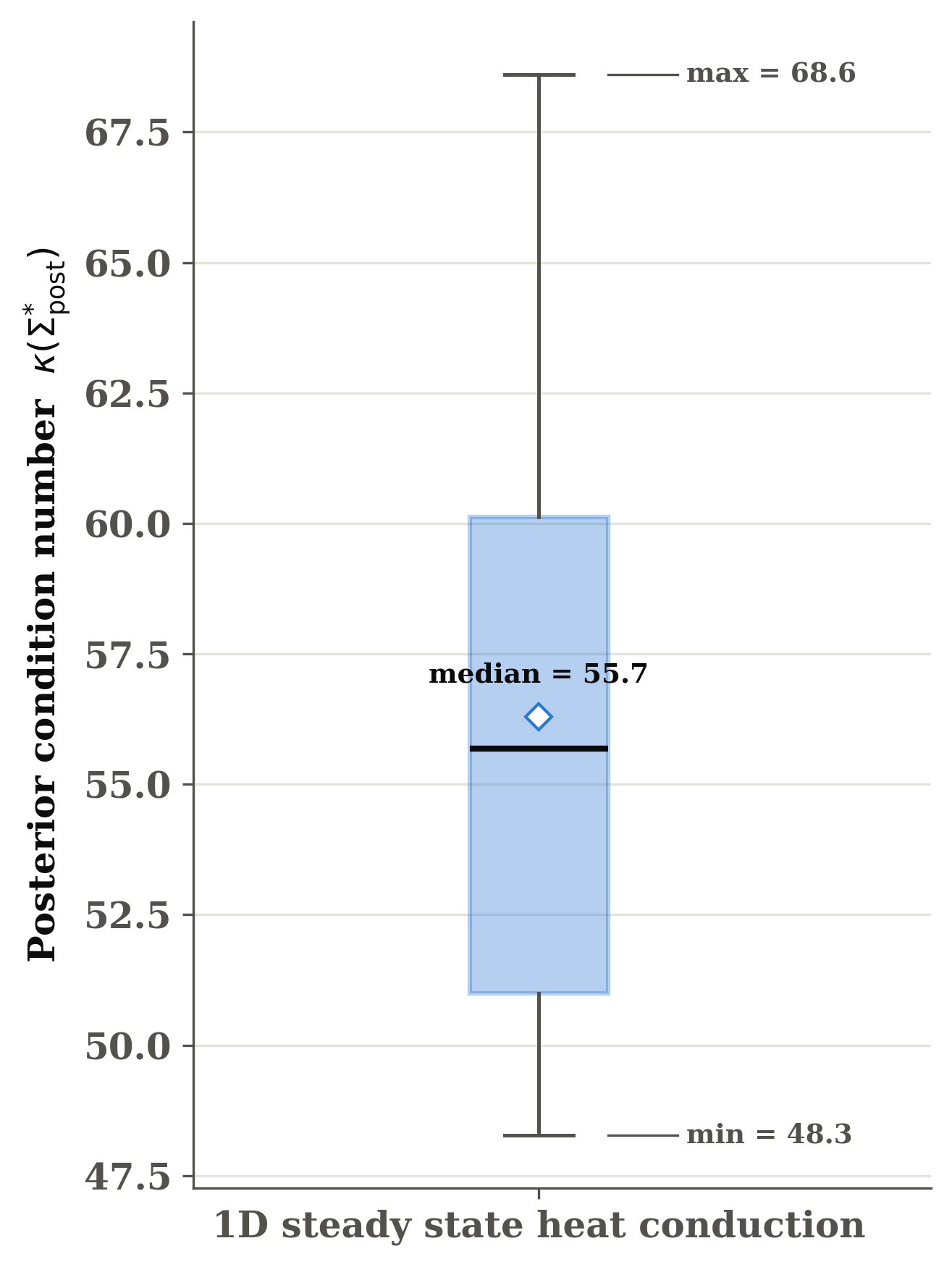}
        \caption{Posterior condition number.}
        \label{fig:tc1:condition_number}
    \end{subfigure}
    \hfill
    \begin{subfigure}[b]{0.48\textwidth}
        \centering
        \includegraphics[width=\textwidth]
        {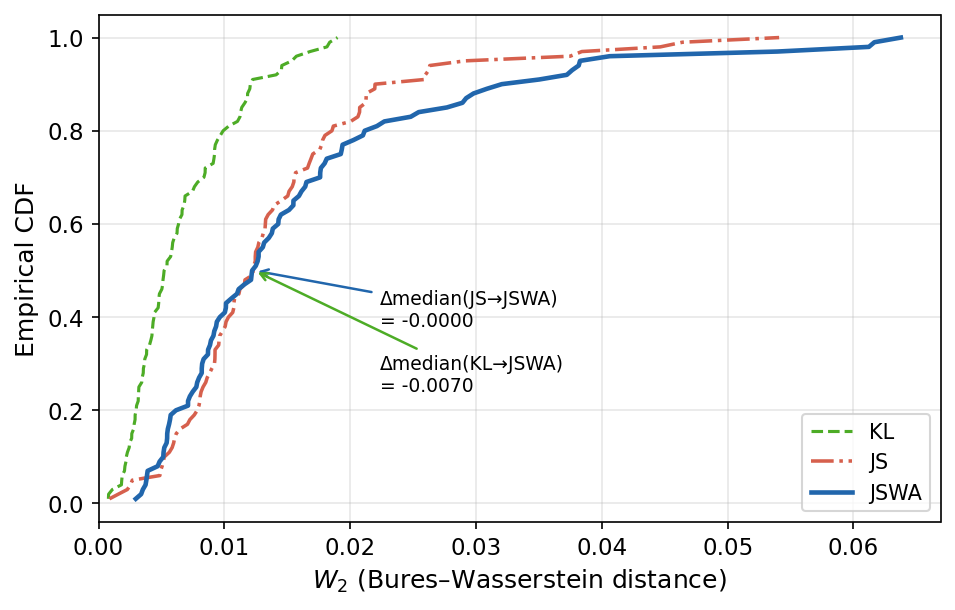}
        \caption{CDF of the $W_2$-Wasserstein distance.}
        \label{fig:tc1:cdf}
    \end{subfigure}
    
    \caption{Posterior conditioning and distributional posterior accuracy
    for the one-dimensional steady-state heat-conduction inverse problem.
    The posterior condition number is computed from the MCMC reference
    covariance for each test case, while the empirical CDF shows the
    distribution of the $2$-Wasserstein distance between the learned and
    MCMC reference posteriors.}
    \label{fig:tc1:conditioning_accuracy}
\end{figure}

The distributional posterior accuracy is quantified using the
$2$-Wasserstein distance
\begin{equation}
W_{2,i}
=
W_2
\left(
Q_{\phi,i},
P_{\mathrm{MCMC},i}
\right),
\label{eq:tc1:w2}
\end{equation}
where $Q_{\phi,i}$ denotes the VAE posterior and
$P_{\mathrm{MCMC},i}$ the corresponding reference posterior.

The empirical CDF (\Cref{fig:tc1:cdf}) reveals a clear distinction between the three objectives. VAE-KL exhibits the leftmost CDF, corresponding to the smallest $W_2$ values and hence the most accurate distributional posterior approximation for this test problem. In contrast, the CDFs of VAE-JS and VAE-JSWA are shifted toward larger $W_2$ values and are nearly coincident over most of the distribution, indicating very similar posterior accuracy for the two formulations. At the median, the difference between JS and JSWA is essentially negligible, $\Delta \operatorname{median} \left(W_2^{\mathrm{JS}} \rightarrow W_2^{\mathrm{JSWA}}\right) \approx 0$, whereas the median $W_2$ for VAE-KL is approximately $0.007$ smaller than that of VAE-JSWA across the test ensemble. These results are consistent with the well-conditioned linear-Gaussian benchmark with $\kappa(\Sigma_{\mathrm{post}}^{*})\approx9$, where VAE-KL likewise exhibited lower distributional posterior error than VAE-JS and VAE-JSWA. The consistency across the two problems indicates that, when the posterior is relatively well conditioned, the additional regularization terms in JS and JSWA do not provide a systematic distributional-accuracy advantage over the simpler KL formulation.

\paragraph{Representative posterior reconstruction.}

To complement the ensemble-level comparison, a representative test
case is selected to examine the posterior reconstruction at the
individual-sample level. \Cref{fig:tc1:representative1} compares
the marginal posterior distributions of $\mathrm{Bi}$ and $q_p$
obtained using VAE-KL, VAE-JS, and VAE-JSWA with the corresponding MCMC
reference, while the quantitative errors are summarized in \Cref{tab:tc1_representative_error}.

\begin{figure}[h]
\centering
\includegraphics[width=\textwidth]
{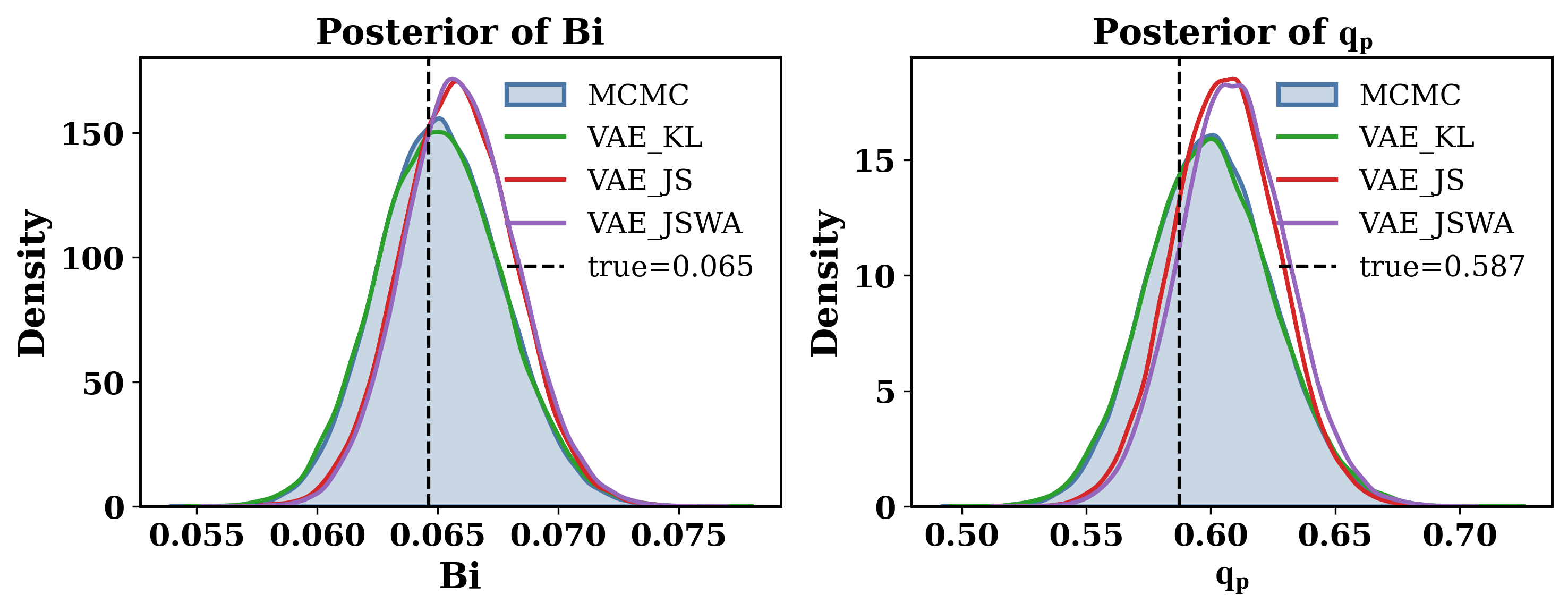}
\caption{Representative posterior reconstruction for the
one-dimensional heat-conduction inverse problem. The marginal
posteriors obtained using VAE-KL, VAE-JS, and VAE-JSWA are compared
with the corresponding MCMC reference.}
\label{fig:tc1:representative1}
\end{figure}

All three formulations reproduce the main location and shape of the
reference marginals reasonably well. VAE-KL gives the smallest relative
mean error, $e_\mu=0.309\%$, followed closely by VAE-JS with
$e_\mu=0.347\%$, while VAE-JSWA gives a larger error of $0.710\%$.
The covariance is recovered most accurately by VAE-KL, with a relative
error of only $4.486\%$, compared with $23.827\%$ for VAE-JS and
$9.595\%$ for VAE-JSWA.

The same trend is reflected in the distributional metric. VAE-KL
achieves the smallest $W_2$ distance, $0.00212$, whereas VAE-JS and
VAE-JSWA yield $0.00360$ and $0.00436$, respectively. Thus, although
the three methods provide visually similar posterior locations, their
differences become more apparent when the posterior covariance and
overall distributional discrepancy are quantified.

\begin{table}[h]
\centering
\caption{Representative-posterior errors relative to the MCMC
reference for the one-dimensional heat-conduction inverse problem.}
\label{tab:tc1_representative_error}
\begin{tabular}{lccc}
\toprule
Method & $e_\mu$ (\%) & $e_{\mathrm{cov}}$ (\%) & $W_2$ \\
\midrule
VAE-KL   & \textbf{0.309} & \textbf{4.486} & \textbf{0.00212} \\
VAE-JS   & 0.347 & 23.827 & 0.00360 \\
VAE-JSWA & 0.710 & 9.595 & 0.00436 \\
\bottomrule
\end{tabular}
\end{table}

Overall, the representative case is consistent with the ensemble-level
result shown in \Cref{fig:tc1:cdf}. VAE-KL provides the most accurate
posterior reconstruction in terms of covariance and $W_2$, while VAE-JS
achieves comparable mean accuracy. VAE-JSWA remains competitive in
covariance reconstruction but does not provide an advantage over KL for
this representative case. Together with the ensemble-level comparison,
these results reinforce the observation that, for this relatively
well-conditioned nonlinear inverse problem, the additional regularization
in VAE-JS and VAE-JSWA does not provide a systematic distributional
accuracy advantage over VAE-KL, consistent with the well-conditioned
linear-Gaussian benchmark.


\subsection{Two-dimensional steady-state heat conduction with spatially varying conductivity}
\label{sec:tc2}

We next consider a two-dimensional steady-state heat-conduction inverse
problem with spatially varying conductivity. This problem provides a
PDE-based validation of the ill-conditioned regime identified
in the linear-Gaussian benchmark. The conductivity field is represented
using a truncated Karhunen--Lo\`eve expansion with nine latent
parameters, and the forward problem is solved using the finite-element
formulation described in \Cref{app:fem_kl}. Temperature observations are
collected at 30 interior sensor locations. This nine-dimensional
field-inversion problem therefore provides a substantially more
challenging posterior reconstruction task than the one-dimensional
heat-conduction problem considered in \Cref{sec:tc1}.

In the general forward model of \Cref{eq:forward_model}, the unknown
parameter vector $q$ is given by
\begin{equation}
    q =
    [z_1,z_2,z_3,z_4,z_5,z_6,z_7,z_8,z_9]^\top,
    \qquad
    z_i\sim\mathcal{N}(0,1),
\end{equation}
where $z_i$ are the coefficients of the truncated Karhunen--Lo\`eve
expansion described in \Cref{app:fem_kl}. The forward operator $\mathcal{F}$ maps these parameters to the temperature observations. The temperature field is governed by
\begin{align}
    -\nabla\cdot\left(
        k(\mathbf{x};q)\nabla T(\mathbf{x})
    \right)
    &=0,
    \qquad \mathbf{x}\in\Omega,
    \qquad \Omega=[0,1]^2,
\label{eq:tc2:gov}
\end{align}
where $k(\mathbf{x};q)$ denotes the spatially varying conductivity represented
using the truncated KL expansion described in \Cref{app:fem_kl}. The left
and right boundaries are prescribed with $T=0$ and $T=1$, respectively, while
the top and bottom boundaries are insulated.

The finite-element forward model is discretized on a structured
$50\times50$ mesh, and the temperature field is sampled at 30 interior
sensor locations arranged on a $6\times5$ grid, with
$x\in\{0.10,0.26,0.42,0.58,0.74,0.90\}$ and
$y\in\{0.10,0.30,0.50,0.70,0.90\}$. The resulting temperature measurements
constitute the observation vector. A dataset containing 5000 parameter--solution pairs is generated using this forward model. Following the common training protocol described in \Cref{sec:training_procedure}, the decoder $\Psi_d$ is pre-trained on the 4000 training pairs, while the validation set is used to monitor surrogate accuracy
through \Cref{eq:surrogate_rel_error}. The resulting surrogate achieves
relative errors of approximately $\SI{0.43}{\percent}$ on both the
validation and test sets.

\paragraph{Posterior conditioning and distributional accuracy:}

Unlike the linear-Gaussian benchmark, the posterior covariance in this
nonlinear PDE-constrained inverse problem is evaluated directly from the
reference MCMC posterior for each test sample. Across the test ensemble,
the posterior condition number, evaluated according to
\cref{eq:tc1:condition}, exhibits substantial variability. The median
condition number is approximately $\kappa(\Sigma_{\mathrm{post}}^{*})=1929.3$,
with observed values ranging from approximately $1779.7$ to $2253.9$
(\Cref{fig:tc2:condition_number}). These large condition numbers indicate
substantial anisotropy in the posterior uncertainty across parameter
directions, providing a stringent test of the three variational
objectives.

\begin{figure}[h]
    \centering
    \begin{subfigure}[b]{0.48\textwidth}
        \centering
        \includegraphics[width=\textwidth]{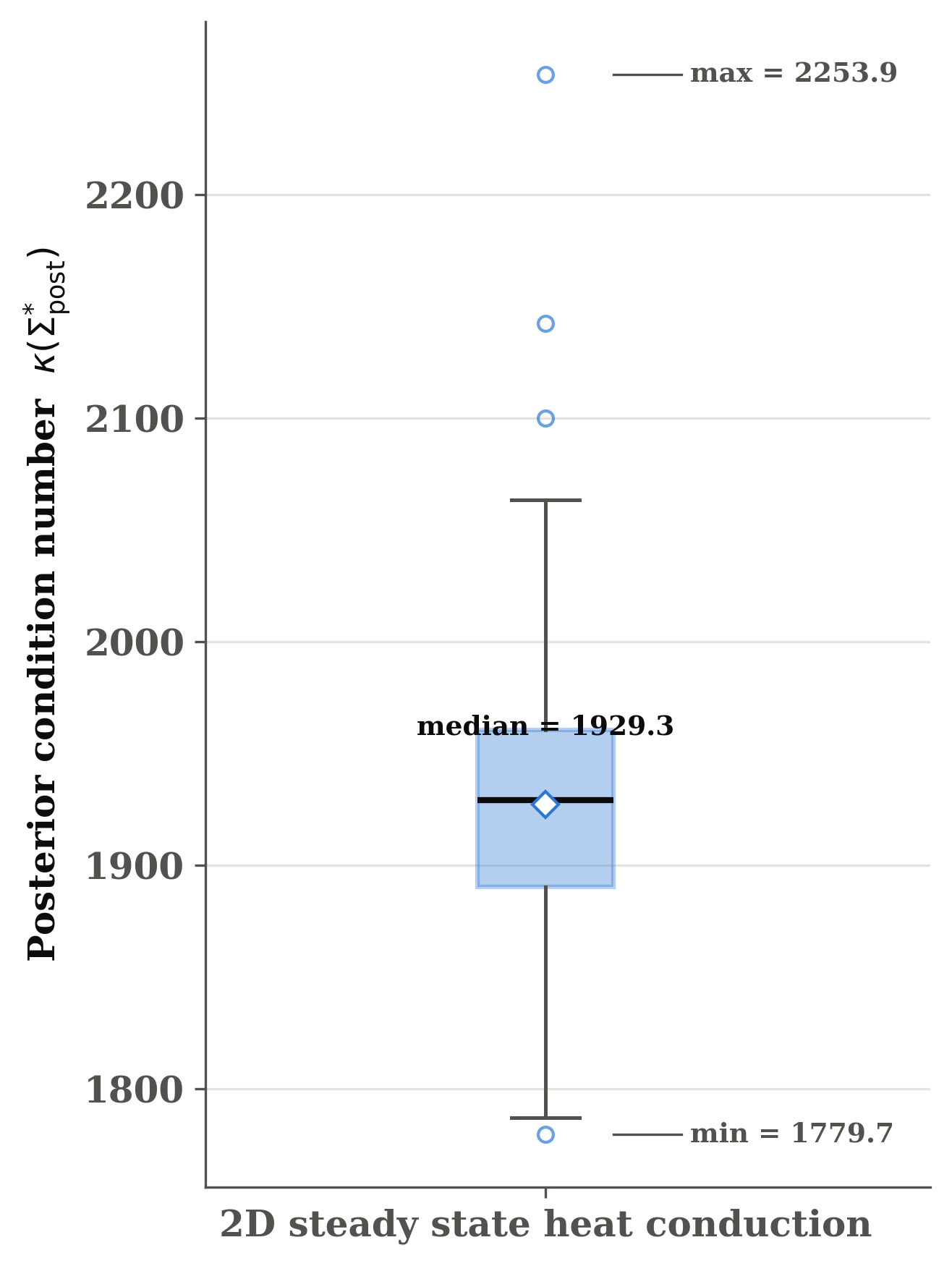}
        \caption{Posterior condition number.}
        \label{fig:tc2:condition_number}
    \end{subfigure}
    \hfill
    \begin{subfigure}[b]{0.48\textwidth}
        \centering
        \includegraphics[width=\textwidth]{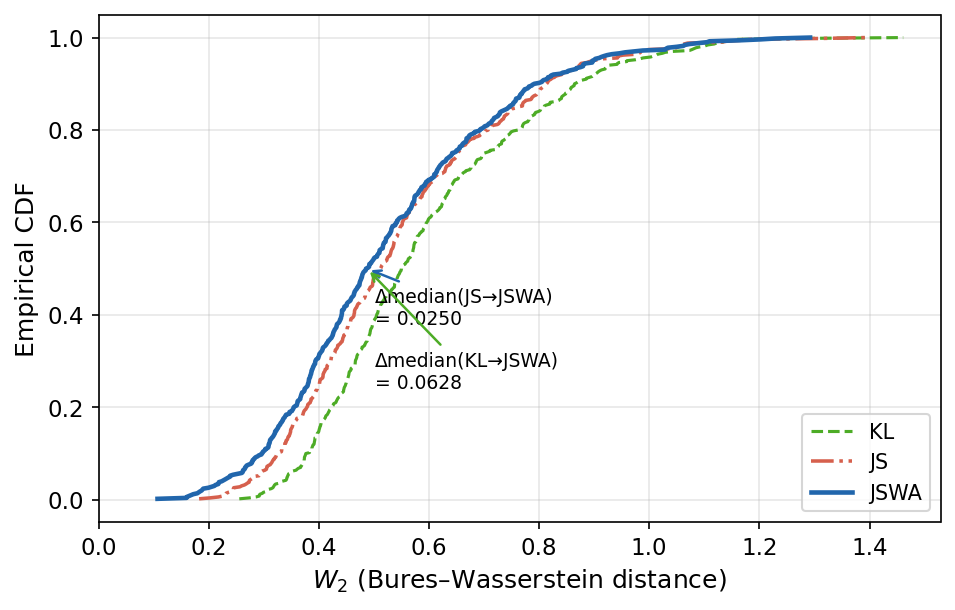}
        \caption{CDF of the $W_2$-Wasserstein distance.}
        \label{fig:tc2:cdf}
    \end{subfigure}
    
    \caption{Posterior conditioning and distributional posterior accuracy
    for the one-dimensional steady-state heat-conduction inverse problem.
    The posterior condition number is computed from the MCMC reference
    covariance for each test case, while the empirical CDF shows the
    distribution of the $2$-Wasserstein distance between the learned and
    MCMC reference posteriors.}
    \label{fig:tc2:conditioning_accuracy}
\end{figure}

The ensemble-level distributional accuracy is assessed using the empirical
CDF of the $2$-Wasserstein distance between the VAE posterior and the
corresponding MCMC reference posterior (\Cref{fig:tc2:cdf}).The Wasserstein distance is computed according to \Cref{eq:tc1:w2}. The CDFs show
a clear separation between the three formulations, with VAE-JSWA shifted
toward smaller $W_2$ values over most of the distribution. VAE-JS also
outperforms VAE-KL, although the difference between JS and JSWA is smaller
than that between KL and JSWA. At the median, VAE-JSWA reduces $W_2$ by
$0.0250$ relative to VAE-JS and by $0.0628$ relative to VAE-KL. Thus,
unlike the one-dimensional heat-conduction problem, where VAE-KL achieved
the smallest median $W_2$, the present two-dimensional problem exhibits
a clear advantage for VAE-JSWA in distributional posterior accuracy.

\paragraph{Representative posterior reconstruction:}

To complement the ensemble-level comparison, a representative test
case is selected to examine the reconstruction of the spatially varying
conductivity posterior at the individual-sample level. Figure~\ref{fig:k_sample415}
compares the posterior mean conductivity field and posterior
standard-deviation field obtained from VAE-KL, VAE-JS, and VAE-JSWA
with the corresponding MCMC reference. The middle row additionally
shows the pointwise absolute difference between each VAE posterior-mean
conductivity field and the MCMC posterior-mean conductivity field. The corresponding quantitative posterior
errors are summarized in \Cref{tab:tc2_representative_error}.

\begin{table}[h]
\centering
\caption{Posterior reconstruction errors for the representative test case
relative to the MCMC reference for the two-dimensional heat-conduction
inverse problem.}
\label{tab:tc2_representative_error}
\begin{tabular}{lccc}
\toprule
Method & $e_\mu$ (\%) & $e_{\mathrm{cov}}$ (\%) & $W_2$ \\
\midrule
VAE-KL   & 15.09 & 108.088 & 0.6577 \\
VAE-JS   & 11.11 & 45.762 & 0.4173 \\
VAE-JSWA & \textbf{3.68} & \textbf{24.114} & \textbf{0.2111} \\
\bottomrule
\end{tabular}
\end{table}

\begin{figure}[ht]
\centering
\includegraphics[width=\textwidth]
{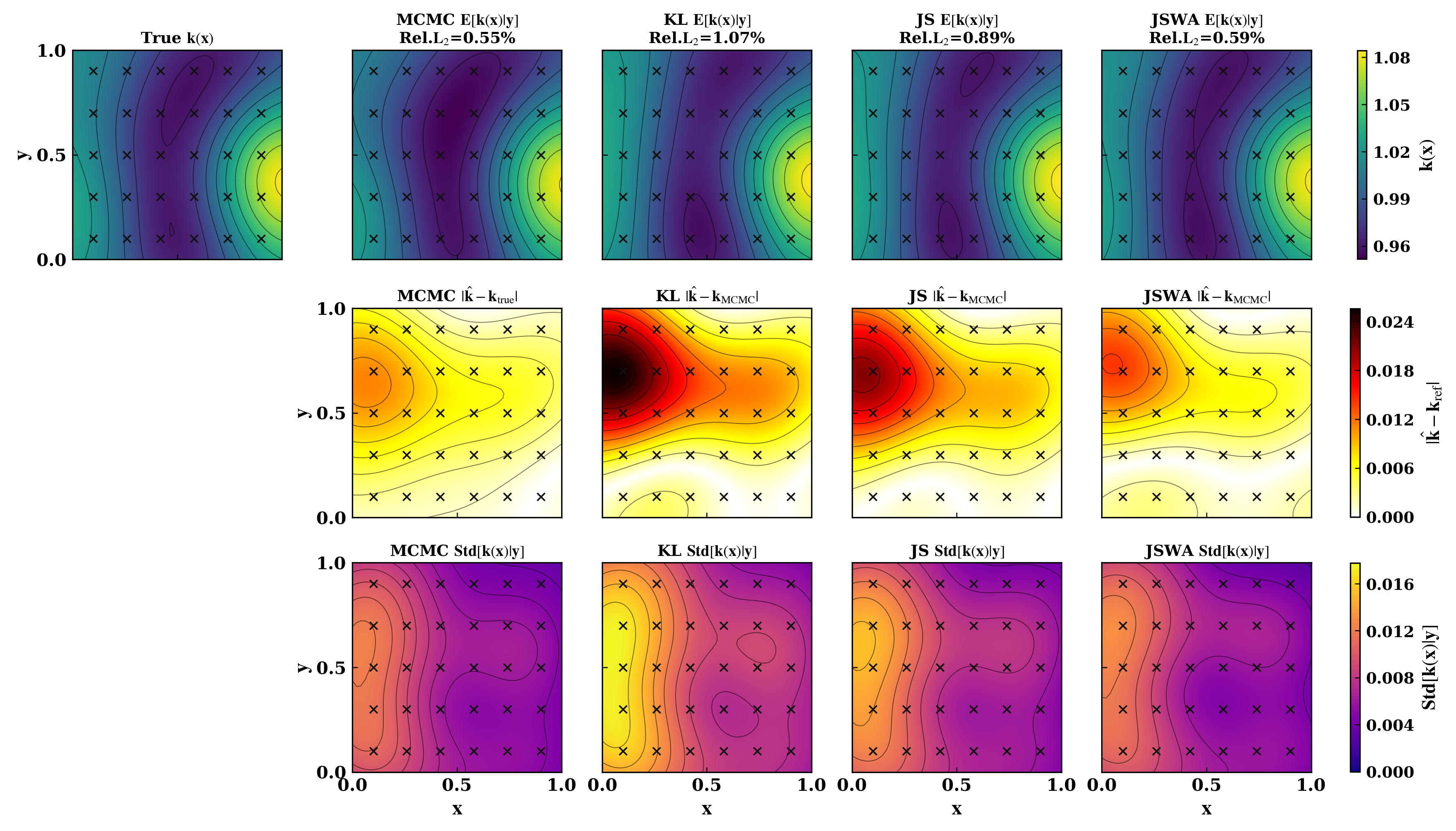}
\caption{Representative posterior field reconstruction for the
two-dimensional heat-conduction inverse problem. The posterior
conductivity fields reconstructed using VAE-KL, VAE-JS, and VAE-JSWA
are compared with the corresponding MCMC reference.}
\label{fig:k_sample415}
\end{figure}

All three formulations reproduce the dominant spatial structure of the
reference posterior mean, but their quantitative accuracy differs
substantially. As shown in \Cref{tab:tc2_representative_error},
VAE-JSWA provides the most accurate reconstruction across all three
posterior metrics, with a relative mean error of $3.68\%$, a covariance
error of $24.114\%$, and a $W_2$ distance of $0.2111$. VAE-JS provides
intermediate accuracy, with corresponding errors of $11.11\%$,
$45.762\%$, and $0.4173$, while VAE-KL gives the largest errors of
$15.09\%$, $108.088\%$, and $0.6577$, respectively.

The field-level reconstruction provides a consistent spatial
interpretation of these differences. The relative $L_2$ errors of the
posterior mean conductivity field with respect to the MCMC reference
are $1.07\%$, $0.89\%$, and $0.59\%$ for VAE-KL, VAE-JS, and VAE-JSWA,
respectively. Thus, VAE-JSWA gives the closest posterior-mean field to
the MCMC reference. The pointwise error fields further show reduced
deviations for JSWA, particularly in regions where the conductivity
varies more strongly. The posterior standard-deviation fields also
show that all three formulations recover the dominant spatial pattern
of posterior uncertainty, with JSWA showing the closest visual
agreement with the MCMC uncertainty structure.

Overall, the representative case is consistent with the
ensemble-level distributional results shown in \Cref{fig:tc2:cdf} and
supports the trend identified in the ill-conditioned linear-Gaussian
benchmark, where VAE-JSWA provided the most accurate posterior
reconstruction among the three formulations. VAE-JSWA achieves the
smallest errors in the posterior mean, covariance, and $W_2$ distance,
while VAE-JS gives intermediate performance and VAE-KL exhibits the
largest discrepancy. Together, the ensemble-level and
representative-case results indicate that the advantage of VAE-JSWA is
reflected not only in the overall posterior distributional accuracy but
also in the reconstructed spatial structure and uncertainty of the
conductivity field.

\subsection{Two-dimensional nonlinear steady-state heat conduction with
joint parameter inference}
\label{sec:tc3}
The fourth test case considers a two-dimensional nonlinear steady-state
heat-conduction inverse problem with temperature-dependent thermal
conductivity. In addition to the KL coefficients describing the spatially
varying baseline conductivity, the temperature-dependence parameter
$\alpha$ is inferred jointly. The governing equation is
\begin{align}
    -\nabla\cdot\left(k(T,\mathbf{x}; q)\nabla T(\mathbf{x})\right)
    &=\beta,
    \qquad \mathbf{x}\in\Omega,
    \qquad \Omega=[0,1]^2,
\end{align}
where $\beta=1$. The left and right boundaries are prescribed with
$T=0$ and $T=1$, respectively, while the top
and bottom boundaries are insulated. The temperature-dependent
conductivity is given by
\begin{align}
    k(T,\mathbf{x};q)
    &= k_0(\mathbf{x};\mathbf z)
    \left(1+\alpha T(\mathbf{x})\right),
\end{align}
where $k_0(\mathbf{x};\mathbf z)$ denotes the spatially varying baseline
conductivity represented using the truncated KL expansion described in
\Cref{app:fem_kl}. The unknown parameter vector is
\begin{align}
    q
    &=
    [z_1,z_2,z_3,\alpha]^\top,
\end{align}
with $z_i\sim\mathcal N(0,1)$ and
$\alpha\sim\mathcal N(0,0.09)$.  In the general formulation of
\Cref{eq:forward_model}, this parameter vector defines the forward
operator $\mathcal{F}$ through the solution of the governing
PDE followed by the observation operator. Thus, $\mathcal{F}$ represents
the parameter-to-observation map for this inverse problem. Three KL modes are retained, capturing $65\%$ of the cumulative variance. This truncation was adopted as a
practical compromise between representing the dominant spatial
variability of the conductivity field and controlling the computational
cost of the nonlinear forward solves and MCMC reference calculations.
The parameter range is chosen such that
$1+\alpha T(\mathbf{x})>0$ throughout the domain, ensuring positive
thermal conductivity. 

The finite-element forward model is discretized on a structured $50\times50$ mesh, and the temperature field is sampled at 30 sensor locations arranged on a $6\times5$ interior grid, with $x\in\{0.10,0.26,0.42,0.58,0.74,0.90\}$ and $y\in\{0.10,0.30,0.50,0.70,0.90\}$. The resulting temperature measurements constitute the observation vector. A dataset containing
5000 parameter--solution pairs is generated using this forward model.
The decoder $\Psi_d$ is pre-trained using the 4000 training
parameter--solution pairs, while the validation set is used to monitor
surrogate accuracy through \cref{eq:surrogate_rel_error}. The resulting
surrogate achieves relative errors of approximately
$\SI{0.59}{\percent}$ on both the validation and test sets.

\paragraph{Posterior conditioning and distributional accuracy.}
Since the forward operator in this problem is nonlinear with respect to the unknown parameter vector, as in the preceding nonlinear test cases, a single global Fisher basis does not provide a common coordinate system in which the posterior variances of all test samples can be directly compared.
The posterior conditioning is therefore characterized directly from the MCMC reference covariance for each test sample, using the condition number defined in \Cref{eq:tc1:condition}. \Cref{fig:tc3:condition_number} summarizes the resulting posterior conditioning across the test ensemble.
The posterior condition number exhibits substantial variability, with a
median value of approximately $11526.3$ and values ranging from
$1939.5$ to $328168.8$. This wide range indicates substantial
sample-to-sample variation in posterior anisotropy, with some
realizations exhibiting considerably stronger ill-conditioning than
others.

\begin{figure}[h]
    \centering
    \begin{subfigure}[b]{0.48\textwidth}
        \centering
        \includegraphics[width=\textwidth]{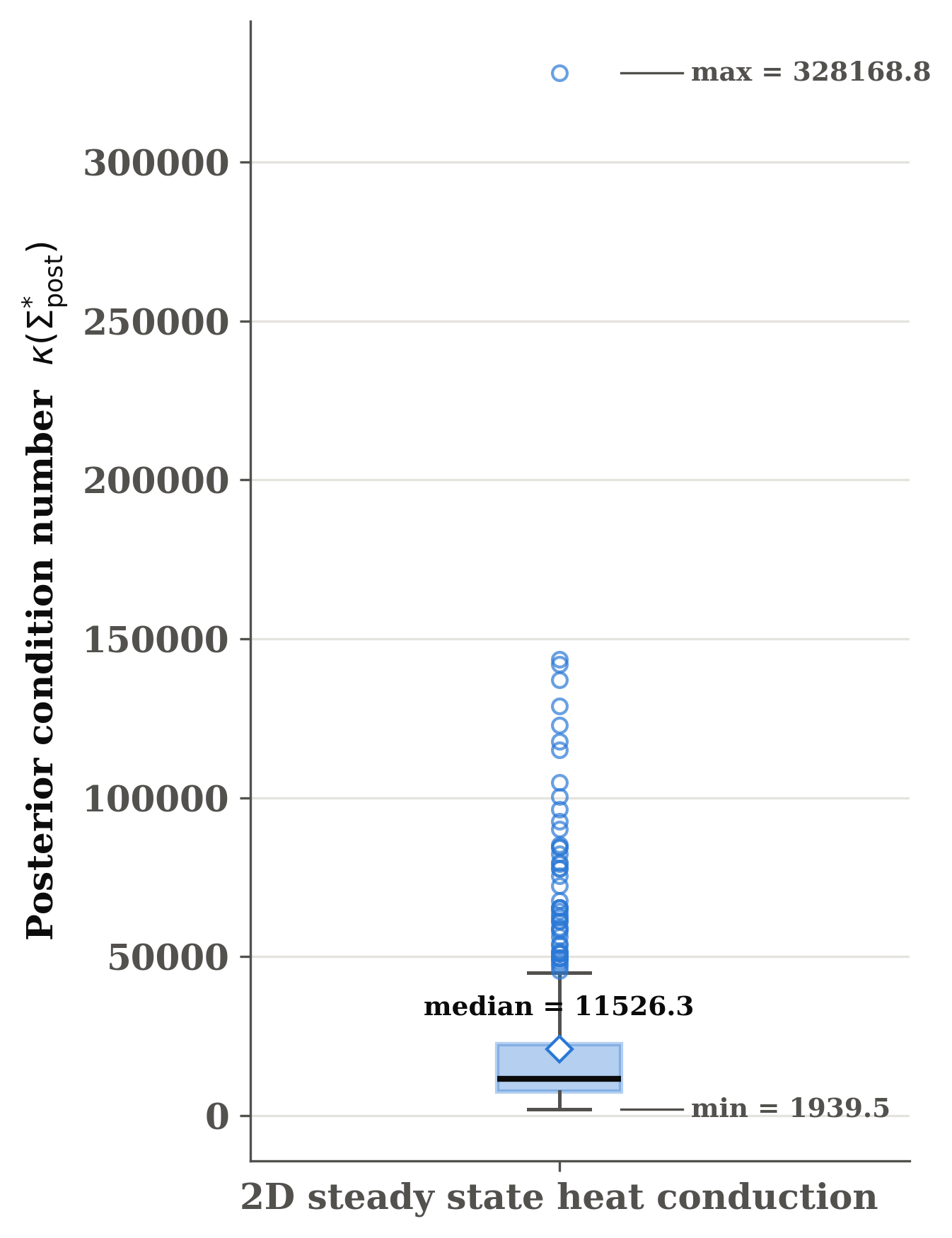}
        \caption{Posterior condition number.}
        \label{fig:tc3:condition_number}
    \end{subfigure}
    \hfill
    \begin{subfigure}[b]{0.48\textwidth}
        \centering
        \includegraphics[width=\textwidth]{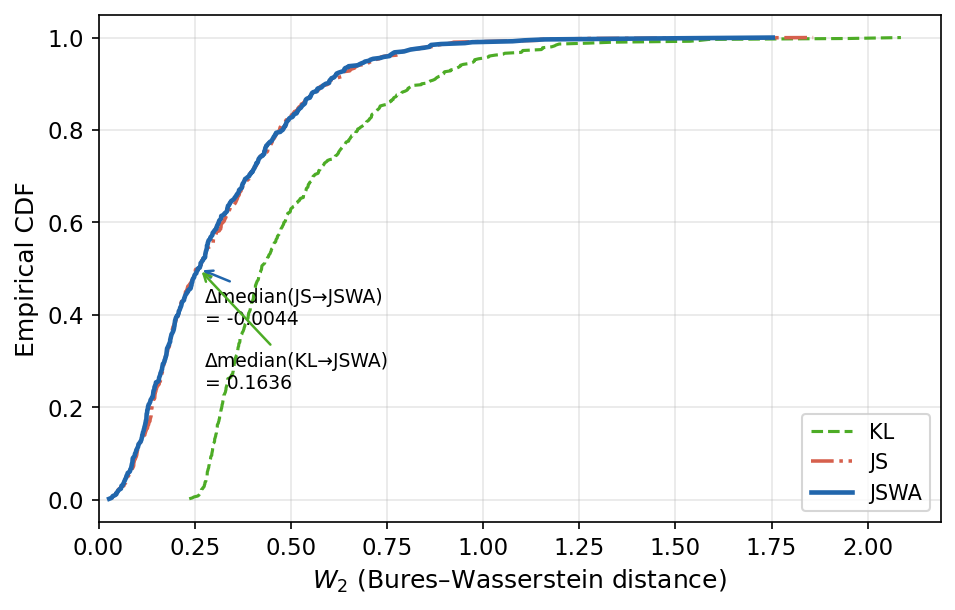}
        \caption{CDF of the $W_2$-Wasserstein distance.}
        \label{fig:tc3:cdf}
    \end{subfigure}
    
    \caption{Posterior conditioning and distributional posterior accuracy
    for the two-dimensional nonlinear steady-state heat-conduction
    inverse problem. The posterior condition number is computed from the
    MCMC reference covariance for each test case, while the empirical CDF
    shows the distribution of the $2$-Wasserstein distance between the
    learned and MCMC reference posteriors.}
    \label{fig:tc3:conditioning_accuracy}
\end{figure}

The distributional accuracy of the three VAE formulations is assessed
using the $W_2$ distance between the variational posterior and the
corresponding MCMC reference posterior. The empirical CDFs in \Cref{fig:tc3:cdf} show a clear advantage of the JS and JSWA objectives over VAE-KL. The CDF of VAE-KL is shifted substantially toward larger $W_2$ values, indicating systematically larger distributional discrepancies from the MCMC reference. In contrast, the CDFs of VAE-JS and VAE-JSWA are closely aligned over most of the distribution, with VAE-JSWA providing a small but consistent improvement. At the median, the reduction in $W_2$ from VAE-KL to VAE-JSWA is approximately $0.1636$, whereas the corresponding difference between VAE-JS and VAE-JSWA is only $0.0044$. At the ensemble level, the results provide clear evidence that reverse-KL regularization becomes increasingly inadequate for strongly ill-conditioned posteriors. Thus, the principal improvement over VAE-KL is associated with the forward-KL contribution, while the Wasserstein regularization provides a smaller
additional improvement at the ensemble level.

\paragraph{Representative posterior reconstruction.}

To complement the ensemble-level comparison, a
representative test case is selected to examine the reconstruction of
the spatially varying conductivity posterior at the individual-sample
level. \Cref{fig:k_sample120} compares the posterior mean
conductivity field and posterior standard-deviation field obtained from
VAE-KL, VAE-JS, and VAE-JSWA with the corresponding MCMC reference.
The middle row additionally shows the pointwise absolute difference between each VAE posterior-mean conductivity field and the MCMC posterior-mean
conductivity field. The corresponding quantitative posterior errors are
summarized in \Cref{tab:tc3_representative_error}.

\begin{table}[h]
\centering
\caption{Posterior reconstruction errors for the representative test
    case relative to the MCMC reference for the two-dimensional nonlinear
    heat-conduction inverse problem.}
\label{tab:tc3_representative_error}
\begin{tabular}{lccc}
\toprule
Method & $e_\mu$ (\%) & $e_{\mathrm{cov}}$ (\%) & $W_2$ \\
\midrule
VAE-KL   & 20.85 & 177.04 & 0.3307 \\
VAE-JS   & 7.84 & 6.43 & 0.0671 \\
VAE-JSWA & \textbf{3.52} & \textbf{3.91} & \textbf{0.0359} \\
\bottomrule
\end{tabular}
\end{table}

\begin{figure}[ht]
\centering
\includegraphics[width=\textwidth]{%
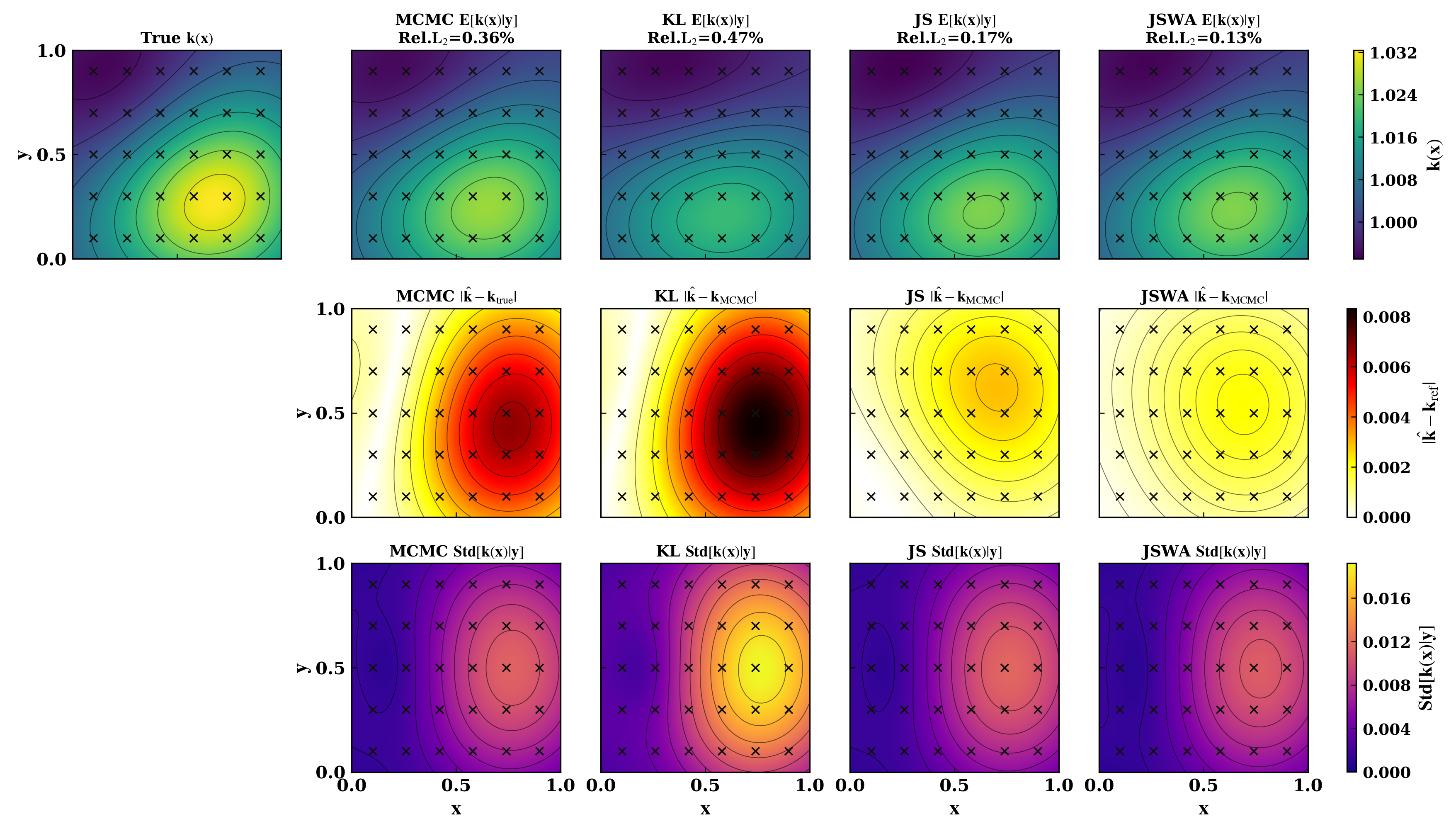}
\caption{Representative posterior conductivity-field reconstruction
for the two-dimensional nonlinear heat-conduction inverse problem.
The posterior mean and posterior standard-deviation fields obtained
using VAE-KL, VAE-JS, and VAE-JSWA are compared with the corresponding
MCMC reference.}
\label{fig:k_sample120}
\end{figure}

All three formulations reproduce the dominant spatial structure of the
MCMC posterior mean conductivity field, but VAE-JS and VAE-JSWA provide
substantially closer agreement than VAE-KL. The relative $L_2$ errors of the posterior-mean conductivity field are $0.47\%$, $0.17\%$, and $0.13\%$ for VAE-KL, VAE-JS, and
VAE-JSWA, respectively. Therefore, VAE-JSWA yields the lowest field-level error and reconstructs a posterior-mean conductivity field that most closely matches the MCMC result. The pointwise error fields further show reduced spatial discrepancies for
the JS-based formulations, with VAE-JSWA providing the closest visual
agreement with the MCMC reference. The posterior standard-deviation
fields also recover the dominant spatial pattern of uncertainty, with
VAE-JS and VAE-JSWA showing closer correspondence with the MCMC
reference.

The posterior-level metrics in \Cref{tab:tc3_representative_error}
reinforce the field-level comparison. VAE-KL exhibits the largest posterior mean, covariance, and $W_2$
errors, whereas VAE-JS substantially reduces all three. VAE-JSWA
provides a further reduction, achieving errors of $3.52\%$, $3.91\%$,
and $0.0359$ for the posterior mean, covariance, and $W_2$ distance,
respectively. For this representative realization, the improvement
from JS to JSWA is more pronounced than at the ensemble median, where
the corresponding difference in $W_2$ is only $0.0044$.

Overall, this fourth test case extends the numerical comparison to a
nonlinear PDE with both spatially varying and temperature-dependent
conductivity, together with joint inference of field and scalar
parameters. VAE-JSWA provides the most accurate posterior reconstruction
for the representative case, while the ensemble-level results show a
substantial improvement of both JS-based formulations over VAE-KL.

\section{Conclusions and future work}
\label{sec:conclusions}

This paper investigated how the choice of variational regularization affects
amortized Bayesian inference for inverse problems, with particular emphasis
on weakly identified parameter directions. We introduced a
Jensen--Shannon--Wasserstein VAE (JSWA-VAE) that retains the forward-KL
data-coverage term of the asymmetric Jensen--Shannon formulation while
replacing its reverse-KL prior regularizer with the squared $2$-Wasserstein
distance. A full-covariance Gaussian encoder is used to represent
parameter correlations and anisotropic posterior uncertainty. The results
show that the relative performance of the three objectives is not universal,
but depends strongly on the conditioning of the posterior.

The theoretical analysis provides a mechanistic explanation for this
dependence through the whitened generalized Fisher basis. Under the local
linear--Gaussian approximation, the encoder variance in each Fisher mode is
determined by the balance between the reconstruction gradient, the
forward-KL posterior-correcting term, and the prior regularization term.
The reverse-KL and squared Wasserstein regularizers exhibit different
variance-dependent gradient scalings: for modal variance $v_k$, the
reverse-KL gradient scales as $v_k^{-1}$ as $v_k\rightarrow0$, whereas the
Wasserstein gradient scales as $v_k^{-1/2}$. This distinction becomes most
important in weakly informed directions, where the reconstruction gradient
is small, and becomes less consequential in strongly informed directions,
where the data-driven contribution is larger. The analysis is further
supported by a surrogate-induced posterior error bound that separates
surrogate and variational approximation errors, and by Gaussian posterior
bounds relating $W_2$ to posterior mean and covariance discrepancies.
Together, these results provide a geometric interpretation of why the
choice of regularizer can affect posterior approximation differently across
posterior information spectra.

The numerical experiments support this conditioning-dependent behavior. The
linear--Gaussian benchmarks provide a controlled setting in which the
effect of posterior conditioning can be isolated. In the well-conditioned
case, with $\kappa(\Sigma_{\mathrm{post}}^*)\approx9$, VAE-KL achieves the
lowest posterior mean, covariance, and Wasserstein errors, indicating that
the additional regularization terms do not provide a systematic global
accuracy advantage when the posterior is already well conditioned. In the
strongly ill-conditioned case, with
$\kappa(\Sigma_{\mathrm{post}}^*)\approx8000$, posterior uncertainty is
strongly concentrated in weakly informed directions, with the weakest
Fisher mode accounting for approximately $87\%$ of the total posterior
variance. In this regime, VAE-JSWA achieves the most accurate posterior
reconstruction, with lower posterior mean, covariance, and $W_2$ errors
than both VAE-KL and VAE-JS. These results are consistent with the modal
gradient analysis, which predicts that differences between the
regularizers become more consequential when weakly informed directions
carry a substantial fraction of the posterior uncertainty.

The three physics-based inverse problems extend the comparison beyond the
controlled linear--Gaussian setting. The one-dimensional heat-conduction
problem considers a nonlinear inverse problem governed by a linear ODE and
has a relatively moderate posterior covariance condition number, with a
median of approximately $\kappa(\Sigma_{\mathrm{post}}^*)=55.7$. In this
regime, VAE-KL provides the most accurate distributional posterior
approximation at the ensemble level. The two-dimensional conductivity
inversion considers a nonlinear inverse problem governed by a linear PDE
and exhibits substantially stronger posterior anisotropy, with a median
posterior covariance condition number of approximately $1929.3$. In this
case, VAE-JSWA provides the lowest ensemble-level $W_2$ error. The final
two-dimensional problem further introduces nonlinear parameter dependence
through the temperature-dependent conductivity and joint parameter
inference, resulting in a nonlinear PDE-constrained inverse problem. In
this case, both JS-based formulations substantially outperform VAE-KL, with
VAE-JSWA providing the best overall performance. Taken together, these
results support the interpretation that the benefit of the additional
regularization becomes more pronounced as posterior uncertainty becomes
increasingly concentrated in weakly informed directions. They therefore
support posterior conditioning as an important factor governing the
relative effectiveness of the three variational objectives, rather than
suggesting that JSWA is universally preferable to KL.

The nonlinear ODE- and PDE-based problems should nevertheless be interpreted
with appropriate caution. Unlike the linear--Gaussian benchmarks, these
problems do not admit a common global Fisher basis in which posterior
conditioning can be systematically controlled and compared across all test
cases. The results therefore provide evidence that the
conditioning-dependent behavior identified in the controlled analysis can
persist in nonlinear physics-based inverse problems, but do not establish a
quantitative conditioning-scaling relationship for general nonlinear
forward models.

Several limitations remain. The quantitative conditioning-scaling evidence
is established only in the controlled linear--Gaussian setting, and the
predictive capability of the Fisher-based analysis for strongly nonlinear
posterior geometries has not yet been established systematically. The
regularization weights $\beta_1$ and $\beta_2$ were selected using
Optuna-based optimization against a held-out MCMC reference posterior.
Although effective, this procedure requires an MCMC reference during model
development, and a principled strategy for selecting these weights without
such a reference remains an open problem. The encoder posterior is also
restricted to the Gaussian family; while a full-covariance representation
captures parameter correlations and anisotropic uncertainty, it cannot
represent genuinely multimodal posteriors. Finally, the numerical
experiments are limited to moderate-dimensional inverse problems with
$d\leq9$.

Future work will first focus on establishing the conditioning dependence
quantitatively for nonlinear forward models through controlled families of
inverse problems with systematically varying posterior information spectra.
A natural extension is to develop an adaptive VAE in which the relative
contributions of the KL, forward-KL, and Wasserstein terms are adjusted
according to the inferred posterior geometry, rather than using fixed
regularization weights for all observations and problems. Such a framework
could use the estimated Fisher-information spectrum or related measures of
posterior anisotropy to determine when additional Wasserstein regularization
is warranted. Developing this strategy without requiring an MCMC reference
during model development is an important open problem. On the computational
side, developing more efficient and accurate physics surrogates would
reduce the cost of VAE training and facilitate application to more
expensive and higher-dimensional forward models. Finally, extending the
encoder to richer posterior families, including mixture models ~\citep{bishop_pattern_2006} and
normalizing flows ~\citep{rezende_variational_2015}, would enable genuinely multimodal inverse problems to be
considered.

Taken together, the results suggest that posterior conditioning can serve
as a useful diagnostic for selecting variational objectives in amortized
Bayesian inverse inference. In particular, a linearized estimate of the
posterior information spectrum can indicate whether uncertainty is
concentrated in weakly informed directions, where additional Wasserstein
regularization is more likely to provide a benefit. This observation
suggests a broader direction beyond selecting a single fixed objective:
the variational objective itself could be adapted to the inferred posterior
geometry. The Fisher-basis perspective therefore provides both a
mechanistic interpretation of the observed differences between KL, JS, and
JSWA and a potential foundation for geometry-adaptive variational
inference.

\section*{Acknowledgements}
Arijit Hazra would like to acknowledge funding support from the Ramanujan Fellowship (RJF/2022/000046) administered by Anusandhan National Research Foundation (ANRF), India.

\appendix

\section{Finite Element Formulation}
\label{app:fem}
The test problems consider the parametric elliptic heat-conduction equation
\begin{align}
    -\nabla\cdot
    \left(
        k(T,\mathbf{x};q)\nabla T(\mathbf{x})
    \right)
    &=
    f(\mathbf{x}),
    \qquad \mathbf{x}\in\Omega,
    \label{eq:fem_governing}
\end{align}
where $T(\mathbf{x})$ is the temperature, $k$ is the thermal
conductivity, $q$ is the unknown parameter vector, and
$f(\mathbf{x})$ is the prescribed volumetric heat source. The
temperature-dependent conductivity cases are nonlinear through the
dependence of $k$ on $T$.

The boundary is partitioned into Dirichlet and Neumann portions,
$\partial\Omega=\Gamma_D\cup\Gamma_N$, with
\begin{align}
    T &= T_D,
    \qquad \text{on }\Gamma_D,
    \label{eq:fem_dirichlet}\\
    -k(T,\mathbf{x};q)\nabla T\cdot\mathbf n
    &=g_N,
    \qquad \text{on }\Gamma_N.
    \label{eq:fem_neumann}
\end{align}
In the problems considered here, the left and right boundaries have
prescribed temperatures, while the top and bottom boundaries are
insulated, corresponding to $g_N=0$.

Using a test function $v$ and integrating \cref{eq:fem_governing} by
parts gives the weak formulation: find $T\in V$ such that
\begin{align}
    \int_{\Omega}
    k(T,\mathbf{x};q)
    \nabla T\cdot\nabla v\,\mathrm d\mathbf{x}
    &=
    \int_{\Omega} f(\mathbf{x})v\,\mathrm d\mathbf{x}
    -
    \int_{\Gamma_N}g_Nv\,\mathrm ds,
    \qquad \forall v\in V_0,
    \label{eq:fem_weak}
\end{align}
where
\begin{align}
    V &=
    \left\{
        T\in H^1(\Omega):T=T_D\text{ on }\Gamma_D
    \right\},
    &
    V_0 &=
    \left\{
        v\in H^1(\Omega):v=0\text{ on }\Gamma_D
    \right\}.
\end{align}
For the insulated boundaries, the Neumann contribution vanishes.

The temperature field is approximated using finite elements as
\begin{align}
    T_h(\mathbf{x})
    &=
    \sum_{i=1}^{N_h}T_iN_i(\mathbf{x}),
    \label{eq:fem_interpolation}
\end{align}
where $N_i$ are the finite-element basis functions and $T_i$ the
corresponding nodal temperatures. The resulting discrete systems are
\begin{align}
    K(q)\mathbf T &= \mathbf f,
    \qquad k=k(\mathbf{x};q),
    \label{eq:fem_linear}\\
    K(q,\mathbf T)\mathbf T &= \mathbf f,
    \qquad k=k(T,\mathbf{x}; q),
    \label{eq:fem_nonlinear}
\end{align}
after incorporation of the prescribed Dirichlet values, with element
stiffness matrices
\begin{align}
    K_{ij}^{(e)}
    &=
    \int_{\Omega_e}
    k(T_h,\mathbf{x}; q)
    \nabla N_i\cdot\nabla N_j\,\mathrm d\mathbf{x}.
    \label{eq:fem_stiffness}
\end{align}

For the nonlinear cases, the system in
\cref{eq:fem_nonlinear} is solved by Picard iteration. Given
$T_h^{(\ell)}$, the conductivity is evaluated as
\begin{align}
    k^{(\ell)}(\mathbf{x})
    &=
    k\left(T_h^{(\ell)}(\mathbf{x}),\mathbf{x}; q\right),
\end{align}
and the next iterate follows from
\begin{align}
    K\left(q,T_h^{(\ell)}\right)
    \mathbf T^{(\ell+1)}
    &=\mathbf f.
    \label{eq:fem_picard}
\end{align}
The iteration is terminated when
\begin{align}
    \frac{
        \left\|\mathbf T^{(\ell+1)}-\mathbf T^{(\ell)}\right\|_2
    }{
        \left\|\mathbf T^{(\ell)}\right\|_2
    }
    &<10^{-8}.
    \label{eq:fem_picard_tol}
\end{align}

\subsection{Karhunen--Lo\`eve Parameterization}
\label{app:fem_kl}

The spatially varying conductivity is represented through a truncated
Karhunen--Lo\`eve expansion of its logarithm,
\begin{align}
    k_0(\mathbf{x};\mathbf z)
    &=
    \exp\left(
        \mu_k+
        \sum_{i=1}^{d}
        z_i\sqrt{\lambda_i^{\mathrm{KL}}}\,
        \phi_i(\mathbf{x})
    \right),
    \label{eq:fem_kl}
\end{align}
where $\mu_k$ is the mean log-conductivity,
$\{\lambda_i^{\mathrm{KL}},\phi_i\}$ are the covariance
eigenvalue--eigenfunction pairs, and
$z_i\sim\mathcal N(0,1)$ are independent KL coefficients. The
eigenvalues are ordered as
\begin{align}
    \lambda_1^{\mathrm{KL}}
    \geq\lambda_2^{\mathrm{KL}}
    \geq\cdots.
\end{align}
The superscript ``KL'' distinguishes these eigenvalues from the
generalized Fisher eigenvalues used in the posterior analysis.

The truncation dimension is chosen to retain a prescribed fraction
$\tau$ of the total variance,
\begin{align}
    \frac{\displaystyle\sum_{i=1}^{d}\lambda_i^{\mathrm{KL}}}
         {\displaystyle\sum_{i=1}^{\infty}\lambda_i^{\mathrm{KL}}}
    &\geq\tau.
    \label{eq:fem_kl_truncation}
\end{align}
For the temperature-dependent conductivity problem,
\begin{align}
    k(T,\mathbf{x};q)
    &=
    k_0(\mathbf{x};\mathbf z)
    \left(1+\alpha T(\mathbf{x})\right),
    \label{eq:fem_nonlinear_conductivity}
\end{align}
where $\alpha$ is inferred jointly with the KL coefficients. The
admissible parameter range satisfies
\begin{align}
    1+\alpha T(\mathbf{x})&>0,
    \qquad \mathbf{x}\in\Omega,
    \label{eq:fem_positive_conductivity}
\end{align}
ensuring positive conductivity and ellipticity.

\paragraph{Parameter-to-Observation Map.} For a given $q$, the finite-element solution
$\mathbf T(\mathbf{x};q)$ is mapped to the $m$ temperature observations
through
\begin{align}
    u &=P\mathbf T(q),
    \label{eq:fem_sensor}
\end{align}
where $P\in\mathbb R^{m\times N_h}$ is the observation operator. Thus,
the parameter-to-observation map is
\begin{align}
    \mathcal F(\chi, q)
    &=P\,\mathbf T(\cdot;q),
    \label{eq:fem_forward_map}
\end{align}
with $\chi$ denoting the fixed problem configuration, including the
domain, boundary conditions, material model, and sensor locations.

The observation model is
\begin{align}
    u_{\mathrm{obs}}=
    \mathcal F(\chi, q)+\eta,
    \qquad
    \eta\sim \mathcal N(0,\sigma_\eta^2I_m),
    \label{eq:fem_observation}
\end{align}
where $\sigma_\eta$ is the observation-noise standard deviation. The
same forward map is used in both the VAE decoder and the MCMC reference
calculations.

\section{MCMC Reference Method and Comparison Metrics}
\label{app:mcmc}

\subsection{Adaptive Metropolis Algorithm}
\label{app:am}

MCMC samples from the unnormalised posterior
\begin{align}
    \pi(q) \propto P_{\mathrm{lklh}}(u_{\mathrm{obs}} \mid q)\, P_{\mathrm{pr}}(q)
    \label{eq:app:target}
\end{align}
and serves as the ground-truth reference in \Cref{sec:tc1,sec:tc2,sec:tc3}.  We
use the Adaptive Metropolis (AM) algorithm of~\cite{haario_adaptive_2001} as the
primary sampler, which adapts the proposal covariance online using the
chain history.  The standard Metropolis--Hastings
algorithm~\cite{metropolis_equation_1953,hastings_monte_1970} with a fixed Gaussian
random-walk proposal is used only during the initial non-adaptive
phase; the acceptance probability in both cases is
\begin{align}
    \alpha = \min\!\left(1,\;\frac{\pi(q')}{\pi(q^{(t-1)})}\right).
    \label{eq:app:accept}
\end{align}

\begin{algorithm}[H]
\caption{Adaptive Metropolis (AM)}
\label{alg:am}
\begin{algorithmic}[1]
\REQUIRE Initial state $\mathrm{q}^{(0)}$, total samples $N$, initial covariance
$C_0$, adaptation interval $K$, regularisation $\epsilon=10^{-6}$
\ENSURE Samples $\{\mathrm{q}^{(t)}\}_{t=1}^{N}$ and acceptance rate $r_{\mathrm{acc}}$

\STATE $d \leftarrow \operatorname{dim}(\mathrm{q}^{(0)})$
\STATE $s_d \leftarrow 2.38^2/d$
\STATE $\mathrm{q} \leftarrow \mathrm{q}^{(0)}$
\STATE $\ell \leftarrow \log \pi(\mathrm{q})$
\STATE $C \leftarrow C_0$
\STATE $n_{\mathrm{acc}} \leftarrow 0$

\FOR{$t=1$ \TO $N$}
    \STATE Draw $\mathrm{q}' \sim
    \mathcal{N}(\mathrm{q},C)$
    \IF{the covariance decomposition fails}
        \STATE $C \leftarrow 0.1I$
        \STATE Redraw $\mathrm{q}' \sim
        \mathcal{N}(\mathrm{q},C)$
    \ENDIF

    \STATE $\ell' \leftarrow \log \pi(\mathrm{q}')$
    \IF{$\ell$ and $\ell'$ are finite}
        \STATE $\alpha \leftarrow
        \min\left(1,\exp(\ell'-\ell)\right)$
    \ELSE
        \STATE $\alpha \leftarrow 0$
    \ENDIF

    \STATE Draw $u\sim\mathcal{U}(0,1)$
    \IF{$u<\alpha$}
        \STATE $\mathrm{q}\leftarrow\mathrm{q}'$
        \STATE $\ell\leftarrow\ell'$
        \STATE $n_{\mathrm{acc}}\leftarrow n_{\mathrm{acc}}+1$
    \ENDIF

    \STATE $\mathrm{q}^{(t)}\leftarrow\mathrm{q}$

    \IF{$t\bmod K=0$ \AND $t>d$}
        \STATE $C_{\mathrm{emp}}\leftarrow
        \operatorname{Cov}\left(
        \mathrm{q}^{(0)},\mathrm{q}^{(1)},\ldots,\mathrm{q}^{(t)}
        \right)$
        \STATE $\lambda_{\min}\leftarrow
        \lambda_{\min}(C_{\mathrm{emp}})$
        \IF{$\lambda_{\min}<10^{-8}$}
            \STATE $C_{\mathrm{emp}}\leftarrow
            C_{\mathrm{emp}}+
            (10^{-6}-\lambda_{\min})I$
        \ENDIF
        \STATE $C\leftarrow s_d C_{\mathrm{emp}}+\epsilon I$
    \ENDIF
\ENDFOR

\STATE $r_{\mathrm{acc}}\leftarrow n_{\mathrm{acc}}/N$
\RETURN $\{\mathrm{q}^{(t)}\}_{t=1}^{N},r_{\mathrm{acc}}$
\end{algorithmic}
\end{algorithm}

The scaling factor $s_d = 2.38^2/d$, where $d$ is the number of inferred parameters, is used for the adaptive proposal covariance~\cite{gelman_posterior_1996}. The proposal covariance is initialized with the user-supplied initial covariance $C_0$ and updated every $K$ iterations, with $K=100$ by default, provided that $t>d$. At each adaptation step, the empirical covariance is computed from all samples generated up to that iteration and rescaled by $s_d$. A regularization term $\epsilon I$, with $\epsilon=10^{-6}$, is added to maintain a positive-definite proposal covariance and improve numerical stability. Eigenvalue regularization is additionally applied when the empirical covariance is nearly singular.

\subsection{Convergence Criteria}
\label{app:convergence}
For each test problem, we run $M = 4$ independent chains, with initial
states drawn from the prior distribution. Convergence is assessed using
trace plots, the autocorrelation function (ACF), effective sample size
(ESS), and the rank-normalized split-$\hat{R}$ diagnostic~\cite{gelman_inference_1992,vehtari2021rank}; see~\cite{gelman_bayesian_2014} for background. The
diagnostics are computed using python based library ArviZ~\cite{kumar2019arviz}. \Cref{tab:app}
summarizes the convergence and stopping criteria applied to all runs in \Cref{sec:tc1,sec:tc2,sec:tc3}

\begin{table}[htbp]
\centering
\caption{MCMC convergence and stopping criteria applied to all runs in
\Cref{sec:tc1,sec:tc2,sec:tc3}. Both the ESS and $\hat{R}$ criteria must be satisfied for a run to be accepted as converged.}
\label{tab:app}
\setlength{\tabcolsep}{8pt}
\renewcommand{\arraystretch}{1.25}
\begin{tabular}{@{} l l @{}}
\toprule
\textbf{Diagnostic} & \textbf{Threshold / procedure} \\
\midrule
Chains & $M = 4$ independent chains, initialised from prior probability distribution
($P_{\mathrm{pr}}$) \\
Burn-in & The first 20\% of each chain is discarded as burn-in before computing the convergence diagnostics.\\
ESS & $\mathrm{ESS} > 400$ per parameter \\
$\hat{R}$ & Rank-normalized split-$\hat{R} < 1.01$ per parameter \\
\bottomrule
\end{tabular}
\end{table}

The rank-normalized split-$\hat{R}$ diagnostic is computed using ArviZ
after discarding the burn-in samples. The diagnostic compares the
between-chain and within-chain variability after rank normalization and
chain splitting. Values of $\hat{R}$ close to unity indicate agreement
between the chains; the stopping threshold is $\hat{R} < 1.01$ for each
parameter.

The effective sample size is estimated using the autocorrelation structure
of the post-burn-in chains. For a scalar parameter $q_j$, the conventional
autocorrelation-based expression is
\begin{align}
\mathrm{ESS}_j =
\frac{N}{1 + 2\sum_{k=1}^{K^*}\rho_{j,k}},
\label{eq:app}
\end{align}
where $\rho_{j,k}$ denotes the lag-$k$ autocorrelation and $K^*$ is a
truncation point determined by the autocorrelation estimator. In this
 study, ESS is computed with ArviZ~\cite{kumar2019arviz}, and a minimum ESS of 400 is required for each parameter.

\subsection{Comparison Metrics}
\label{app:metrics}

The following metrics quantify agreement between the VAE posterior
$Q_\phi^{\mathrm{VAE}} = \mathcal{N}(\mu_{\mathrm{post}}^{\mathrm{VAE}},
\Sigma_{\mathrm{post}}^{\mathrm{VAE}})$ and the MCMC reference fitted
as $\hat{P}^{\mathrm{MCMC}} = \mathcal{N}(\hat{\mu}^{\mathrm{MCMC}},
\hat{\Sigma}^{\mathrm{MCMC}})$ using empirical mean and covariance of
the post-burn-in samples. For the nonlinear inverse problems, this defines a moment-matched Gaussian representation of the MCMC posterior; the $W_2$ metric is therefore computed between the Gaussian VAE posterior and this Gaussian representation rather
than directly against the empirical MCMC distribution.

\paragraph{Relative error in posterior mean.}
\begin{align}
    e_\mu
    = \frac{\|\mu_{\mathrm{post}}^{\mathrm{VAE}} -
             \hat{\mu}^{\mathrm{MCMC}}\|_2}
           {\|\hat{\mu}^{\mathrm{MCMC}}\|_2}.
    \label{eq:app:emean}
\end{align}

\paragraph{Relative error in posterior standard deviation.}
\begin{align}
    e_{\sigma,j}
    = \frac{|\sigma_{\mathrm{post},j}^{\mathrm{VAE}} -
             \hat{\sigma}_j^{\mathrm{MCMC}}|}
           {|\hat{\sigma}_j^{\mathrm{MCMC}}|},
    \qquad
    \sigma_{\mathrm{post},j} = \sqrt{(\Sigma_{\mathrm{post}})_{jj}}.
    \label{eq:app:estd}
\end{align}

\paragraph{Full covariance accuracy.}
For multi-parameter problems, marginal spread errors do not capture
errors in the off-diagonal correlation structure.  The normalised
Frobenius-norm error of the full posterior covariance matrix,
\begin{align}
    e_{\mathrm{cov}}
    = \frac{\|\Sigma_{\mathrm{post}}^{\mathrm{VAE}} -
             \hat{\Sigma}^{\mathrm{MCMC}}\|_F}
           {\|\hat{\Sigma}^{\mathrm{MCMC}}\|_F},
    \label{eq:app:ecov}
\end{align}
captures errors in both diagonal variances and off-diagonal
covariances simultaneously.  Here $\|A\|_F = \sqrt{\sum_{i,j}
A_{ij}^2}$ is the Frobenius norm.  A method that recovers
marginal variances accurately but misses inter-parameter
correlations will have small $e_{\sigma,j}$ but large
$e_{\mathrm{cov}}$; the combination of both metrics gives the
complete picture of posterior covariance recovery.

\paragraph{2-Wasserstein distance.}
\begin{align}
    W_2\!\bigl(Q_\phi^{\mathrm{VAE}},\, \hat{P}^{\mathrm{MCMC}}\bigr),
    \label{eq:app:w2}
\end{align}
using the closed-form expression for Gaussian distributions given in
\Cref{eq:W2_gaussian}. This provides a scalar measure of discrepancy between
the two Gaussian posterior representations, accounting for differences in
both mean and covariance.

\section{Mathematical Details}
\label{app:theory}
\subsection{Posterior Covariance in the Fisher Basis}
\label{app:fisher_derivation}

This appendix derives the posterior covariance and mean for a
linear-Gaussian inverse problem with a general Gaussian prior. After
prior whitening, the posterior covariance is diagonal in the generalized
Fisher basis, with modal variances $1/(1+\lambda_k^{\mathrm{Fisher}})$.
For nonlinear forward models, the same construction gives the
corresponding local Laplace approximation.

The following standard linear-Gaussian posterior result is recalled for
completeness~\citep{barber_bayesian_2012}.
\paragraph{Setup and Gaussian posterior.}
Consider the linear-Gaussian inverse problem
$u_{\mathrm{obs}}=A q+\eta$, where
$\eta\sim\mathcal N(0,\sigma_\eta^2I_m)$, with
Gaussian prior $q\sim\mathcal N(\mu_{\mathrm{pr}},
\Sigma_{\mathrm{pr}})$. The negative log-posterior is
\begin{align}
	-\ln P_{\mathrm{post}}(q\mid u_{\mathrm{obs}})
	&=
	\frac12
	\|u_{\mathrm{obs}}-A q\|_{\Sigma_\eta^{-1}}^2
	+
	\frac12
	\|q-\mu_{\mathrm{pr}}\|_{\Sigma_{\mathrm{pr}}^{-1}}^2
	+\mathrm{const.}
	\label{eq:app:neg_log_post_general}
\end{align}
and hence the posterior precision and covariance are
\begin{align}
	\Lambda_{\mathrm{post}}
	=
	\frac{A^\top A}{\sigma_\eta^2}
	+\Sigma_{\mathrm{pr}}^{-1},
	\qquad
	\Sigma_{\mathrm{post}}^*
	=
	\Lambda_{\mathrm{post}}^{-1}.
	\label{eq:app:precision_general}
\end{align}
The corresponding posterior mean is
\begin{align}
	\mu_{\mathrm{post}}^*
	&=
	\Sigma_{\mathrm{post}}^*
	\left(
	\frac{A^\top u_{\mathrm{obs}}}{\sigma_\eta^2}
	+\Sigma_{\mathrm{pr}}^{-1}\mu_{\mathrm{pr}}
	\right).
	\label{eq:app:mean_general}
\end{align}

\paragraph{Prior whitening and generalized Fisher basis.}
Let $\Sigma_{\mathrm{pr}}^{-1}=L_{\mathrm{pr}}L_{\mathrm{pr}}^\top$ and define
the whitened parameter by
$\tilde{q}=L_{\mathrm{pr}}^\top(q-\mu_{\mathrm{pr}})$.
Then $\tilde{q}\sim\mathcal N(0,I_d)$. Defining
$\tilde A=AL_{\mathrm{pr}}^{-\top}$ and
$\tilde u_{\mathrm{obs}}=u_{\mathrm{obs}}-A\mu_{\mathrm{pr}}$ gives
\begin{align}
	\tilde u_{\mathrm{obs}}
	&=
	\tilde A\tilde{q}
	+{\eta}.
	\label{eq:app:whitened_model}
\end{align}

For isotropic observation noise, the generalized Fisher information in
the whitened coordinates is
\begin{align}
	\tilde{\mathcal I}
	&=
	\frac{\tilde A^\top\tilde A}{\sigma_\eta^2}.
	\label{eq:app:fisher_whitened}
\end{align}
Using the SVD $\tilde A=\tilde U\tilde\Sigma\tilde V^\top$, we have
\begin{align}
	\tilde{\mathcal I}
	=
	\tilde V
	\operatorname{diag}
	\left(\lambda_k^{\mathrm{Fisher}}\right)
	\tilde V^\top,
	\qquad
	\lambda_k^{\mathrm{Fisher}}
	=
	\frac{\tilde\sigma_k^2}{\sigma_\eta^2}.
	\label{eq:app:fisher_eig_general}
\end{align}
The columns of $\tilde V$ define the generalized Fisher basis.

\paragraph{Posterior covariance.}
In the whitened coordinates, the posterior precision is
$\tilde\Lambda_{\mathrm{post}}=I_d+\tilde{\mathcal I}$. Therefore,
\begin{align}
	\widetilde{\Sigma}_{\mathrm{post}}^*
	&=
	\tilde V
	\operatorname{diag}
	\left(
	\frac{1}{1+\lambda_k^{\mathrm{Fisher}}}
	\right)
	\tilde V^\top,
	\label{eq:app:cov_diagonal_general}
\end{align}
or, equivalently, the covariance in the generalized Fisher basis is
\begin{align}
	\tilde V^\top
	\widetilde{\Sigma}_{\mathrm{post}}^*
	\tilde V
	&=
	\operatorname{diag}
	\left(
	\frac{1}{1+\lambda_k^{\mathrm{Fisher}}}
	\right)_{k=1}^d.
	\label{eq:app:cov_fisher_general}
\end{align}
Thus the posterior variance in mode $k$ is
\begin{align}
	v_k^*
	&=
	\frac{1}{1+\lambda_k^{\mathrm{Fisher}}}.
	\label{eq:app:modal_variance_general}
\end{align}
The covariance in the original coordinates is recovered as
$\Sigma_{\mathrm{post}}^*
=L_{\mathrm{pr}}^{-\top}\widetilde{\Sigma}_{\mathrm{post}}^*
L_{\mathrm{pr}}^{-1}$.

\paragraph{Posterior mean and modal shrinkage.}
Let $u_{\mathrm{obs}}=Aq^{\mathrm{true}}+{\eta}$ and
define $\tilde{q}^{\mathrm{true}}
=L_{\mathrm{pr}}^\top(q^{\mathrm{true}}-\mu_{\mathrm{pr}})$.
Projection of \cref{eq:app:mean_general} onto the generalized Fisher
basis gives
\begin{align}
	\tilde\mu_{\mathrm{post},k}^*
	&=
	\frac{\lambda_k^{\mathrm{Fisher}}}
	{1+\lambda_k^{\mathrm{Fisher}}}
	\tilde q_k^{\mathrm{true}}
	+
	\frac{1}{1+\lambda_k^{\mathrm{Fisher}}}
	\frac{\tilde\sigma_k}{\sigma_\eta^2}
	\tilde\eta_k.
	\label{eq:app:post_mean_general}
\end{align}
Hence, averaging over the observation noise,
\begin{align}
	\mathbb E_{{\eta}}
	\left[
	\tilde\mu_{\mathrm{post},k}^*
	\right]
	&=
	\frac{\lambda_k^{\mathrm{Fisher}}}
	{1+\lambda_k^{\mathrm{Fisher}}}
	\tilde q_k^{\mathrm{true}}.
	\label{eq:app:shrinkage_general}
\end{align}
Thus weakly identified modes remain close to the prior mean, whereas
strongly identified modes approach the true parameter component.

\paragraph{Special case.}
For $\mu_{\mathrm{pr}}= 0$ and $\Sigma_{\mathrm{pr}}=I_d$,
$L_{\mathrm{pr}}=I_d$ and $\tilde A=A$. Hence
$\lambda_k^{\mathrm{Fisher}}=\sigma_{A,k}^2/\sigma_\eta^2$, and
\cref{eq:app:modal_variance_general} reduces to
\begin{align}
	v_k^*
	&=
	\frac{1}{1+\lambda_k^{\mathrm{Fisher}}},
	\label{eq:app:modal_variance_special}
\end{align}
which is the Fisher-modal posterior covariance used in the main text.
Equation~\cref{eq:app:shrinkage_general} likewise reduces to the
standard modal shrinkage relation.

\paragraph{Nonlinear forward models.}
For a nonlinear forward model $\mathcal F(q)$, linearization
about a fixed reference point $\hat{q}$ gives
$\mathcal F(q)\approx\mathcal F(\hat{q})
+J(q-\hat{q})$, where
$J=\left.\partial\mathcal F/\partial q\right|_{\hat{q}}$.
Replacing $A$ by $J$ in the preceding construction yields the
corresponding local Fisher information and Laplace covariance. Thus
$v_k^*\approx1/(1+\lambda_k^{\mathrm{Fisher}})$ is a local approximation
for the nonlinear inverse problem rather than an exact posterior
variance.
\subsection{Derivation of Loss Functions and Modal Gradients}
\label{app:loss_derivations}

This subsection provides the mathematical derivations underlying the
loss functions and modal gradient expressions used in the main text.
We first derive the closed-form expressions for the Gaussian
divergence measures and the reconstruction loss used in the
computational formulation. We then derive the corresponding gradient
expressions in the whitened generalized Fisher basis, which are used
to analyze the variance response of the different VAE formulations.
\subsubsection{Derivation of Implementable Loss Functions}
\label{app:derivations}
	
This section provides the closed-form expressions underlying the
implementable loss functions used in \cref{eq:loss_kl,eq:loss_js,eq:loss_jswa}.
We first give the Gaussian KL divergence and its specialization to the
reverse- and forward-KL terms, followed by the Gaussian $W_2$ distance and
the reconstruction loss.

\paragraph{Gaussian KL divergence.}
For $Q = \mathcal{N}(\mu_Q,\Sigma_Q)$ and
$P = \mathcal{N}(\mu_P,\Sigma_P)$ in $\mathbb{R}^n$,
\begin{align}
D_{\mathrm{KL}}(Q\|P)
	=
	\frac{1}{2}\left[
	\operatorname{tr}\!\left(\Sigma_P^{-1}\Sigma_Q\right)
	+ \|\mu_Q-\mu_P\|_{\Sigma_P^{-1}}^2
		- n
	+ \ln\frac{|\Sigma_P|}{|\Sigma_Q|}
	\right].
	\label{eq:kl_gaussian_full}
		\end{align}
		
		\begin{proof}
	Using
	$D_{\mathrm{KL}}(Q\|P)
	=\mathbb{E}_Q[\ln Q]-\mathbb{E}_Q[\ln P]$,
		the Gaussian entropy gives
		$\mathbb{E}_Q[\ln Q]
		=-\frac{n}{2}\ln(2\pi)-\frac{1}{2}\ln|\Sigma_Q|-\frac{n}{2}$.
	For the cross-entropy term,
	\begin{align}
	\mathbb{E}_Q[\ln P]
		=
		-\frac{n}{2}\ln(2\pi)
		-\frac{1}{2}\ln|\Sigma_P|
		-\frac{1}{2}
		\left[
	\operatorname{tr}\!\left(\Sigma_P^{-1}\Sigma_Q\right)
	+(\mu_Q-\mu_P)^\top
	\Sigma_P^{-1}
		(\mu_Q-\mu_P)
		\right],
		\end{align}
		where we have used
		\begin{align}
		\mathbb{E}_Q
		\!\left[
	(q-\mu_P)^\top\Sigma_P^{-1}(q-\mu_P)
	\right]
	=
	\operatorname{tr}\!\left(\Sigma_P^{-1}\Sigma_Q\right)
		+(\mu_Q-\mu_P)^\top
		\Sigma_P^{-1}
		(\mu_Q-\mu_P).
		\end{align}
		Subtracting the two expressions yields \cref{eq:kl_gaussian_full}.
		\end{proof}
		
		Setting $Q=Q_\phi$ and $P=P_{\mathrm{pr}}$ in
		\cref{eq:kl_gaussian_full} gives the reverse-KL regularization term $\mathcal{L}_{\mathrm{RKL}}$ in the training objectives.
		Dropping the additive constant $-n/2$ gives the implementable expression
		used in \cref{eq:loss_kl}. For the forward-KL term, the expectation with
	respect to the target posterior is approximated using the paired parameter
	$q^{(i)}$. This gives the supervised approximation
	\begin{align}
		D_{\mathrm{KL}}\!\left(P(q\mid u)\,\|\,Q_\phi\right)
	\approx
	\frac{1}{2}
	\left[
		\ln|\Sigma_{\mathrm{post}}|
		+
		\|q^{(i)}-\mu_{\mathrm{post}}\|_{\Sigma_{\mathrm{post}}^{-1}}^2
	\right]
	+ \mathrm{const.}
	\label{eq:kl_forward_applied}
	\end{align}
		which is the forward-KL term $\mathcal{L}_{\mathrm{FKL}}$ used in the training objectives.
		
		\paragraph{Squared $2$-Wasserstein distance.}
		For Gaussian distributions
		$Q=\mathcal{N}(\mu_Q,\Sigma_Q)$ and
		$P=\mathcal{N}(\mu_P,\Sigma_P)$ in $\mathbb{R}^n$,
		the squared $2$-Wasserstein distance is given by
		\begin{align}
		W_2^2(Q,P)
		=
		\|\mu_Q-\mu_P\|_2^2
	+
	\operatorname{tr}\!\left(
	\Sigma_Q+\Sigma_P
		-2
			\bigl(
			\Sigma_Q^{1/2}
		\Sigma_P
		\Sigma_Q^{1/2}
		\bigr)^{1/2}
		\right).
		\label{eq:w2_gaussian_full}
		\end{align}
			\citep{dowson_frechet_1982,bhatia_bures_2019a}
			The covariance-dependent term is the squared Bures metric and vanishes
		if and only if $\Sigma_Q=\Sigma_P$. For diagonal covariances,
		\cref{eq:w2_gaussian_full} reduces to
		\begin{align}
		W_2^2(Q,P)
	=
	\|\mu_Q-\mu_P\|_2^2
+
\sum_{j=1}^{n}
\left(\sigma_{Q,j}-\sigma_{P,j}\right)^2.
\end{align}
	In the implementation, the general full-covariance expression in
	\cref{eq:w2_gaussian_full} is evaluated using the eigendecomposition of
	$L_Q^\top\Sigma_P L_Q$, where
	$\Sigma_Q=L_QL_Q^\top$. For the dimensions considered in this work
	($n\leq15$), the resulting $\mathcal{O}(n^3)$ computation is negligible
	relative to the neural-network forward pass.
		
		\paragraph{Gaussian reconstruction loss.}
		For Gaussian observation noise
		$\eta\sim\mathcal{N}(\mu_\eta,\Sigma_\eta)$, the negative log-likelihood is
		\begin{align}
		-\ln P(u_{\mathrm{obs}}\mid q)
		=
	\frac{1}{2}
	\left\|
	u_{\mathrm{obs}}
		-\mathcal{F}(\chi,q)
	-\mu_\eta
	\right\|_{\Sigma_\eta^{-1}}^2
	+\mathrm{const.},
	\label{eq:reconstruction_derivation}
		\end{align}
		where terms independent of the variational parameters have been omitted.
		Replacing the forward model $\mathcal{F}(\chi,q)$ with the frozen decoder
	$\Psi_d(q_{\mathrm{draw}})$ and estimating the expectation over
	$Q_\phi(q\mid u_{\mathrm{obs}})$ using one reparameterized sample per
	iteration yields the reconstruction term $\mathcal{L}_{\mathrm{rec}}$ used in
		\cref{eq:loss_kl,eq:loss_js,eq:loss_jswa}.

\subsubsection{Derivation of the Reconstruction Gradient in the
	Whitened Fisher Basis}
\label{app:recon_gradient}

This subsection derives the reconstruction-gradient contribution used in
the modal analysis. We consider the linear-Gaussian setting used in the
benchmark, with isotropic observation noise, diagonal encoder covariance
in the whitened generalized Fisher basis, and a calibrated encoder mean.

For the variance-sensitivity analysis, we consider a locally mean-matched
encoder, for which the encoder mean coincides with the true parameter in
the whitened coordinates,
\begin{align}
	\widetilde{\mu}_{\mathrm{post}}
	=
	\widetilde{q}_{\mathrm{true}}.
	\label{eq:app_mean_matching}
\end{align}
This assumption isolates the variance-dependent contribution to the
reconstruction loss and is not intended to imply that the Bayesian
posterior mean generally coincides with the realized true parameter.

\begin{proposition}[Reconstruction gradient in the whitened Fisher basis]
	\label{prop:reconstruction_gradient}
	Consider the linear forward model
	$u=Aq+\eta$, where
	$\eta\sim\mathcal N(\mathbf 0,\sigma_\eta^2I_m)$. Let
	$\widetilde A=AL_{\mathrm{pr}}^{-\top}$ denote the prior-whitened forward
	operator, with SVD
	$\widetilde A=\widetilde U\widetilde\Sigma_{\mathrm{sv}}\widetilde V^\top$.
	The corresponding Fisher eigenvalues are
	\begin{align}
		\lambda_k^{\mathrm{Fisher}}
		=
		\frac{\widetilde{\sigma}_{\mathrm{sv},k}^{\,2}}
		{\sigma_\eta^2}.
		\label{eq:app_fisher_eigenvalues}
	\end{align}
	Express the encoder posterior in the generalized Fisher basis as
	$Q_\phi=\mathcal N(\widetilde\mu_{\mathrm{post}},
	\widetilde\Sigma_{\mathrm{post}})$, with
	$\widetilde\Sigma_{\mathrm{post}}
	=\operatorname{diag}(v_1,\ldots,v_d)$ and
	$v_k=\widetilde\sigma_k^2$. If
	$\widetilde\mu_{\mathrm{post}}=\widetilde{q}_{\mathrm{true}}$,
	then, up to terms independent of $v_k$,
	\begin{align}
		\mathbb E_{Q_\phi}[\ell_{\mathrm{rec}}]
		=
		\frac{1}{2}\sum_{k=1}^{d}
		\lambda_k^{\mathrm{Fisher}}v_k+\mathrm{const.},
		\qquad
		\frac{\partial\mathbb E_{Q_\phi}[\ell_{\mathrm{rec}}]}
		{\partial v_k}
		=
		\frac{1}{2}\lambda_k^{\mathrm{Fisher}}
		\label{eq:app_reconstruction_gradient}
	\end{align}
\end{proposition}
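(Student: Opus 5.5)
The plan is to compute the expected reconstruction loss exactly, using the fact that it is an expectation of a quadratic form under a Gaussian, and then change to the whitened generalized Fisher coordinates of \cref{eq:whitening}. With $\mu_\eta=0$ and $\Sigma_\eta=\sigma_\eta^2 I_m$, the sample-wise loss \cref{eq:loss_rec} with the exact linear forward map is
\[
\ell_{\mathrm{rec}}(q)=\frac{1}{2\sigma_\eta^2}\|u_{\mathrm{obs}}-Aq\|_2^2 .
\]
First substitute $q=\mu_{\mathrm{pr}}+L_{\mathrm{pr}}^{-\top}\tilde q$. Then $Aq=A\mu_{\mathrm{pr}}+\widetilde A\tilde q$. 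Writing $\tilde u_{\mathrm{obs}}=u_{\mathrm{obs}}-A\mu_{\mathrm{pr}}$ as in \cref{eq:app:whitened_model}, the loss becomes $\frac{1}{2\sigma_\eta^2}\|\tilde u_{\mathrm{obs}}-\widetilde A\tilde q\|_2^2$.

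Next, rotate into the Fisher basis. Set $\tilde q=\widetilde V z$, so that $\widetilde A\tilde q=\widetilde U\widetilde\Sigma_{\mathrm{sv}}z$. In these coordinates the encoder is $z\sim\mathcal N(\widetilde\mu,\operatorname{diag}(v_k))$, which is the diagonal modal covariance assumed in the statement.

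Then apply the standard Gaussian moment identity, the same one used in the proof of \cref{eq:kl_gaussian_full}: for $z\sim\mathcal N(m,S)$,
\[
\mathbb E\|b-Bz\|^2=\|b-Bm\|^2+\operatorname{tr}(B^\top B S).
\]
Take $B=\widetilde U\widetilde\Sigma_{\mathrm{sv}}$. Since $\widetilde U$ has orthonormal columns, $B^\top B=\widetilde\Sigma_{\mathrm{sv}}^2$. The trace term is therefore
\[
\sum_k \widetilde\sigma_{\mathrm{sv},k}^2 v_k .
\]
Dividing by $2\sigma_\eta^2$ and using \cref{eq:app_fisher_eigenvalues} gives exactly $\frac12\sum_k\lambda_k^{\mathrm{Fisher}}v_k$. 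If $\widetilde\Sigma_{\mathrm{sv}}$ is not square ($m\ge d$ here), the trace is taken over the $d$ nonzero singular directions, and the identity still holds.

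The remaining term is $\frac{1}{2\sigma_\eta^2}\|\tilde u_{\mathrm{obs}}-\widetilde A\widetilde\mu_{\mathrm{post}}\|^2$. It depends only on the mean, not on any $v_k$, so it is absorbed into the constant. The mean-matching hypothesis $\widetilde\mu_{\mathrm{post}}=\widetilde q_{\mathrm{true}}$ reduces it to $\frac{1}{2\sigma_\eta^2}\|\eta\|^2$. That value is not needed for the derivative, but it shows the constant is data-noise only. Differentiating term by term then gives $\partial/\partial v_k=\frac12\lambda_k^{\mathrm{Fisher}}$.

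The only delicate step is making sure no cross terms between the mean and the covariance survive. These cross terms vanish because the residual at the mean is deterministic while $z-m$ has zero mean. A second point to check is that the whitening by $L_{\mathrm{pr}}^{-\top}$ is compatible with identifying $v_k$ as variances in the whitened Fisher basis. Once the encoder covariance is expressed in $z$-coordinates this is immediate, since that is how the statement defines $v_k$. The rest is routine.
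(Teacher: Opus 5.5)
Your proposal is correct and is essentially the paper's argument: the paper reparameterizes $\widetilde q=\widetilde\mu_{\mathrm{post}}+\widetilde\sigma\odot\epsilon$, writes the residual as $-\sum_k\widetilde\sigma_k\epsilon_k\widetilde A\widetilde{\mathbf v}_k+\eta$, and averages over $\epsilon$, which is your identity $\mathbb E\|b-Bz\|^2=\|b-Bm\|^2+\operatorname{tr}(B^\top BS)$ written out mode by mode. The one refinement in your version is that you keep the mean term general, and this correctly shows that the hypothesis $\widetilde\mu_{\mathrm{post}}=\widetilde q_{\mathrm{true}}$ is not needed for the gradient $\tfrac12\lambda_k^{\mathrm{Fisher}}$; it only identifies the constant as $\|\eta\|_2^2/(2\sigma_\eta^2)$.
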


\begin{proof}
	The encoder posterior in the whitened generalized Fisher basis is
	reparameterized as
	$\widetilde{q}
	=\widetilde{\mu}_{\mathrm{post}}
	+\widetilde{\sigma}\odot\epsilon$,
	where $\epsilon\sim\mathcal N(\mathbf0,I_d)$ and
	$\widetilde\sigma_k=\sqrt{v_k}$. Under the local mean-matching
	assumption
	$\widetilde{\mu}_{\mathrm{post}}
	=\widetilde{q}_{\mathrm{true}}$,
	\begin{align}
		\widetilde{q}
		-
		\widetilde{q}_{\mathrm{true}}
		=
		\widetilde{\sigma}\odot\epsilon,
		\qquad
		\widetilde q_k-\widetilde q_{\mathrm{true},k}
		=
		\widetilde\sigma_k\epsilon_k .
		\label{eq:app_parameter_deviation}
	\end{align}
	
	After prior whitening and centering,
	$\widetilde u_{\mathrm{obs}}
	=u_{\mathrm{obs}}-A\mu_{\mathrm{pr}}
	=\widetilde A\widetilde{q}_{\mathrm{true}}
	+\eta$, while a posterior draw gives
	$\widehat{\widetilde u}=\widetilde A\widetilde{q}$. Hence
	\begin{align}
		\widetilde u_{\mathrm{obs}}-\widehat{\widetilde u}
		=
		-\widetilde A
		(\widetilde{\sigma}\odot\epsilon)
		+\eta .
		\label{eq:app_reconstruction_residual}
	\end{align}
	Let $\widetilde{\mathbf v}_k$ denote the $k$th right-singular vector of
	$\widetilde A$ and define
	$\widetilde{\mathbf a}_k=\widetilde A\widetilde{\mathbf v}_k$.
	From
	$\widetilde A=\widetilde U\widetilde\Sigma_{\mathrm{sv}}\widetilde V^\top$,
	we have
	$\|\widetilde{\mathbf a}_k\|_2^2
	=\widetilde\sigma_{\mathrm{sv},k}^{\,2}$. Therefore,
	\begin{align}
		\widetilde u_{\mathrm{obs}}-\widehat{\widetilde u}
		=
		-\sum_{k=1}^{d}
		\widetilde\sigma_k\epsilon_k\widetilde{\mathbf a}_k
		+\eta .
		\label{eq:app_residual_svd}
	\end{align}
	
	Taking the squared Euclidean norm of
	\cref{eq:app_residual_svd} and averaging over
	$\epsilon$, with
	$\mathbb E[\epsilon_k]=0$ and $\mathbb E[\epsilon_k^2]=1$, eliminates the
	cross terms and gives
	\begin{align}
		\mathbb E_{\epsilon}
		\left[
		\|\widetilde u_{\mathrm{obs}}
		-\widehat{\widetilde u}\|_2^2
		\right]
		=
		\sum_{k=1}^{d}
		\widetilde\sigma_k^2
		\|\widetilde{\mathbf a}_k\|_2^2
		+\|\eta\|_2^2
		=
		\sum_{k=1}^{d}
		\widetilde\sigma_k^2
		\widetilde\sigma_{\mathrm{sv},k}^{\,2}
		+\|\eta\|_2^2 .
		\label{eq:app_expected_residual}
	\end{align}
	
	For isotropic observation noise,
	$\Sigma_\eta=\sigma_\eta^2I_m$, so that
	$\|\mathbf r\|_{\Sigma_\eta^{-1}}^2
	=\|\mathbf r\|_2^2/\sigma_\eta^2$. Consequently,
	\begin{align}
		\mathbb E_{\epsilon}
		\left[
		\|\widetilde u_{\mathrm{obs}}
		-\widehat{\widetilde u}\|_{\Sigma_\eta^{-1}}^2
		\right]
		=
		\sum_{k=1}^{d}
		\frac{\widetilde\sigma_{\mathrm{sv},k}^{\,2}}
		{\sigma_\eta^2}
		\widetilde\sigma_k^2
		+\frac{\|\eta\|_2^2}{\sigma_\eta^2}~=
		\sum_{k=1}^{d}
		\lambda_k^{\mathrm{Fisher}}v_k
		+\mathrm{const.},
		\label{eq:app_expected_weighted_residual}
	\end{align}
	where the final term is independent of the encoder variances. Since the
	Gaussian reconstruction loss contains the factor $1/2$,
	\begin{align}
		\mathbb E_{Q_\phi}[\ell_{\mathrm{rec}}]
		=
		\frac{1}{2}
		\sum_{k=1}^{d}
		\lambda_k^{\mathrm{Fisher}}v_k
		+\mathrm{const.}.
		\label{eq:app_reconstruction_expected_loss}
	\end{align}
Finally, because the variance-dependent contribution is separable across Fisher modes,
	differentiation with respect to $v_k$ gives
	\begin{align}
		\frac{\partial
			\mathbb E_{Q_\phi}[\ell_{\mathrm{rec}}]}
		{\partial v_k}
		=
		\frac{1}{2}\lambda_k^{\mathrm{Fisher}},
		\qquad
		\frac{\partial^2
			\mathbb E_{Q_\phi}[\ell_{\mathrm{rec}}]}
		{\partial v_k^2}
		=
		0 .
		\label{eq:app_reconstruction_derivatives}
	\end{align}
	This completes the derivation.
\end{proof}
The reconstruction contribution is therefore constant with respect to the encoder variance within each Fisher mode, while its magnitude is directly proportional to the corresponding Fisher eigenvalue. Thus, within the local linear-Gaussian approximation, reconstruction is more sensitive to variance in strongly informative directions and less sensitive in poorly informative directions. The additional variance-dependent behavior in the modal stationary-gradient balance therefore arises from the forward-KL and prior-regularization terms.

\subsubsection{Derivation of the Forward-KL Gradient}
\label{app:fkl_gradient}

This subsection derives the forward-KL gradient and its curvature with
respect to the encoder posterior variance in a single Fisher mode. We
consider the linear-Gaussian setting used in the benchmark and isolate the
variance dependence by taking the encoder and exact posterior means to be
identical.

Under the local linear-Gaussian approximation, the generalized Fisher
basis diagonalizes the local information matrix and, consequently, the
posterior covariance. With the encoder covariance taken to be diagonal
in this basis, the variance-dependent contributions to the objective
separate across Fisher modes. We can therefore analyze the contribution
of an individual mode $k$ through its scalar variance $v_k$, while holding
the corresponding Fisher eigenvalue $\lambda_k^{\mathrm{Fisher}}$ fixed.

\begin{proposition}[Forward-KL gradient and curvature in a Fisher mode]
	\label{prop:fwdkl-grad}
	Let the exact posterior marginal in Fisher mode $k$ be
	$P_{\mathrm{post}}^{(k)}=\mathcal{N}(0,v_k^*)$, where
	$v_k^*=\tilde{\sigma}_k^{*2}
	=(1+\lambda_k^{\mathrm{Fisher}})^{-1}$, and let the encoder marginal be
	$Q_\phi^{(k)}=\mathcal{N}(0,v_k)$ with
	$v_k=\tilde{\sigma}_k^2>0$. Then
	\begin{align}
		L_{\mathrm{FKL}}(v_k)
		&=
		D_{\mathrm{KL}}
		\left(
		P_{\mathrm{post}}^{(k)}
		\,\|\,Q_\phi^{(k)}
		\right)
		=
		\frac{1}{2}
		\left[
		\frac{v_k^*}{v_k}
		+\log v_k
		-\log v_k^*
		-1
		\right],
		\label{eq:app_fwdkl_closed}
	\end{align}
\end{proposition}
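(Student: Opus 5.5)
The plan is to obtain \cref{eq:app_fwdkl_closed} by specializing the general Gaussian KL formula \cref{eq:kl_gaussian_full}, already established above, to the one-dimensional case. We set $n=1$, take the first argument to be $P_{\mathrm{post}}^{(k)}=\mathcal N(0,v_k^*)$ and the second to be $Q_\phi^{(k)}=\mathcal N(0,v_k)$. The KL is taken in the forward direction, so the roles in \cref{eq:kl_gaussian_full} are $Q\mapsto P_{\mathrm{post}}^{(k)}$ and $P\mapsto Q_\phi^{(k)}$. The four terms then read as follows:
\begin{align}
\operatorname{tr}(\Sigma_P^{-1}\Sigma_Q)=\frac{v_k^*}{v_k},\qquad
\|\mu_Q-\mu_P\|^2_{\Sigma_P^{-1}}=0,\qquad
-n=-1,\qquad
\ln\frac{|\Sigma_P|}{|\Sigma_Q|}=\ln v_k-\ln v_k^*.
\end{align}
Multiplying their sum by $1/2$ gives exactly the stated expression. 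If one prefers not to rely on the matrix formula, the same result follows from a direct computation of $\mathbb E_{P}[\ln P]-\mathbb E_P[\ln Q]$: the entropy of $\mathcal N(0,v_k^*)$ contributes $-\tfrac12\ln(2\pi v_k^*)-\tfrac12$, and the cross term contributes $-\tfrac12\ln(2\pi v_k)-\tfrac{v_k^*}{2v_k}$ because $\mathbb E_P[q^2]=v_k^*$.

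The only point requiring care is justifying that the full forward-KL reduces to this per-mode scalar quantity. For this I will use the separability assumption stated before the proposition. In the whitened generalized Fisher basis, both the exact posterior covariance, by \cref{eq:app:cov_fisher_general}, and the encoder covariance are diagonal, and the means are taken equal, so we may shift both to zero. A product of independent one-dimensional Gaussians has KL equal to the sum of the per-coordinate KLs. The whitening map is an invertible affine change of variables, and KL is invariant under such maps. Hence the mode-$k$ term is exactly $D_{\mathrm{KL}}(P_{\mathrm{post}}^{(k)}\|Q_\phi^{(k)})$.

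The identification $v_k^*=(1+\lambda_k^{\mathrm{Fisher}})^{-1}$ is imported directly from \cref{eq:app:modal_variance_general} and plays no role in the algebra. I expect no real obstacle; the main risk is getting the argument order in \cref{eq:kl_gaussian_full} wrong, which would produce the reverse-KL expression instead.

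As a sanity check, I would confirm that the expression vanishes at $v_k=v_k^*$ and differentiates to $(v_k-v_k^*)/(2v_k^2)$, matching \cref{eq:fwdkl_gradient}.
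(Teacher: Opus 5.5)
Your proposal is correct and is essentially the paper's argument. The paper computes $\mathbb{E}_{P}[\log(p_k/q_k)]$ directly from the univariate log-densities using $\mathbb{E}[x^2]=v_k^*$, which is exactly your fallback entropy/cross-entropy computation; your primary route instead specializes the general Gaussian formula \eqref{eq:kl_gaussian_full} (itself proved by that same computation) with the arguments correctly swapped, and your separability paragraph goes beyond what the statement requires but is sound under the stated diagonal-covariance, equal-mean assumptions.
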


\begin{proof}
	For a zero-mean univariate Gaussian with variance $v$,
	\begin{align}
		\log \mathcal{N}(x;0,v)
		=
		-\frac{1}{2}\log(2\pi)
		-\frac{1}{2}\log v
		-\frac{x^2}{2v}.
		\label{eq:app_gaussian_log_density}
	\end{align}
	Hence,
	\begin{align}
		\log\frac{p_k(x)}{q_k(x)}
		&=
		\frac{1}{2}
		\left(
		\log v_k-\log v_k^*
		\right)
		+
		\frac{x^2}{2}
		\left(
		\frac{1}{v_k}
		-
		\frac{1}{v_k^*}
		\right).
		\label{eq:app_fwdkl_logratio}
	\end{align}
	Since $x\sim\mathcal{N}(0,v_k^*)$ under the exact posterior,
	$\mathbb{E}[x^2]=v_k^*$. Taking the expectation of
	\cref{eq:app_fwdkl_logratio} therefore gives
	\begin{align}
		D_{\mathrm{KL}}
		\left(
		P_{\mathrm{post}}^{(k)}
		\,\|\,Q_\phi^{(k)}
		\right)
		&=
		\frac{1}{2}
		\left[
		\frac{v_k^*}{v_k}
		+\log v_k
		-\log v_k^*
		-1
		\right].
	\end{align}
	
	Because $v_k^*$ is fixed with respect to the encoder parameters,
	differentiation with respect to $v_k$ gives
	\begin{align}
		\frac{\partial L_{\mathrm{FKL}}}{\partial v_k}
		&=
		\frac{1}{2}
		\left[
		-\frac{v_k^*}{v_k^2}
		+\frac{1}{v_k}
		\right]
		=
		\frac{v_k-v_k^*}{2v_k^2},
		\label{eq:app_fwdkl_gradient_derivation}
	\end{align}
	which establishes \Cref{eq:app_fwdkl_closed}. Thus,
	$\partial L_{\mathrm{FKL}}/\partial v_k<0$ for $v_k<v_k^*$ and
	$\partial L_{\mathrm{FKL}}/\partial v_k>0$ for $v_k>v_k^*$, so gradient
	descent drives the encoder variance toward $v_k^*$.
	
\end{proof}

\begin{remark}[Structural properties of the forward-KL term]
	\label{rem:fwdkl_properties}
	The forward-KL term satisfies
	$L_{\mathrm{FKL}}(v_k^*)=0$ and
	$L_{\mathrm{FKL}}(v_k)\geq0$ for $v_k>0$. Its gradient changes sign at
	$v_k=v_k^*$, so gradient descent increases the variance when
	$v_k<v_k^*$ and decreases it when $v_k>v_k^*$. Thus, in the modal
	variance dynamics, the forward-KL term acts as a posterior-correcting
	contribution.
\end{remark}



\addcontentsline{toc}{section}{References}
\begin{thebibliography}{70}
\providecommand{\natexlab}[1]{#1}
\providecommand{\url}[1]{\texttt{#1}}
\expandafter\ifx\csname urlstyle\endcsname\relax
  \providecommand{\doi}[1]{doi: #1}\else
  \providecommand{\doi}{doi: \begingroup \urlstyle{rm}\Url}\fi

\bibitem[Uecker et~al.(2008)Uecker, Hohage, Block, and
  Frahm]{uecker_image_2008}
Martin Uecker, Thorsten Hohage, Kai~Tobias Block, and Jens Frahm.
\newblock Image reconstruction by regularized nonlinear inversion-{{Joint}}
  estimation of coil sensitivities and image content.
\newblock \emph{Magnetic Resonance in Medicine}, 60\penalty0 (3):\penalty0
  674--682, 2008.
\newblock \doi{10.1002/mrm.21691}.

\bibitem[Lustig et~al.(2007)Lustig, Donoho, and Pauly]{lustig_sparse_2007}
Michael Lustig, David Donoho, and John~M. Pauly.
\newblock Sparse {{MRI}}: {{The}} application of compressed sensing for rapid
  {{MR}} imaging.
\newblock \emph{Magnetic Resonance in Medicine}, 58\penalty0 (6):\penalty0
  1182--1195, 2007.
\newblock \doi{10.1002/mrm.21391}.

\bibitem[Narnhofer et~al.(2022)Narnhofer, Effland, Kobler, Hammernik, Knoll,
  and Pock]{narnhofer_bayesian_2022}
Dominik Narnhofer, Alexander Effland, Erich Kobler, Kerstin Hammernik, Florian
  Knoll, and Thomas Pock.
\newblock Bayesian {{Uncertainty Estimation}} of {{Learned Variational MRI
  Reconstruction}}.
\newblock \emph{IEEE Transactions on Medical Imaging}, 41\penalty0
  (2):\penalty0 279--291, 2022.
\newblock \doi{10.1109/TMI.2021.3112040}.

\bibitem[Sidky and Pan(2008)]{sidky_image_2008}
Emil~Y Sidky and Xiaochuan Pan.
\newblock Image reconstruction in circular cone-beam computed tomography by
  constrained, total-variation minimization.
\newblock \emph{Physics in Medicine and Biology}, 53\penalty0 (17):\penalty0
  4777--4807, 2008.
\newblock \doi{10.1088/0031-9155/53/17/021}.

\bibitem[Virieux and Operto(2009)]{virieux_overview_2009}
J.~Virieux and S.~Operto.
\newblock An overview of full-waveform inversion in exploration geophysics.
\newblock \emph{Geophysics}, 74\penalty0 (6):\penalty0 WCC1--WCC26, 2009.
\newblock \doi{10.1190/1.3238367}.

\bibitem[Fichtner et~al.(2006)Fichtner, Bunge, and
  Igel]{fichtner_adjoint_2006b}
Andreas Fichtner, H-P Bunge, and Heiner Igel.
\newblock The adjoint method in seismology: {{I}}. {{Theory}}.
\newblock \emph{Physics of the Earth and Planetary Interiors}, 157\penalty0
  (1-2):\penalty0 86--104, 2006.

\bibitem[Hansen(2006)]{hansen_deblurring_2006}
Per~Christian Hansen.
\newblock \emph{Deblurring images: matrices, spectra, and filtering}.
\newblock Fundamentals of algorithms. {SIAM, Society for Industrial and Applied
  Mathematics}, Philadelphia, 2006.

\bibitem[Babacan et~al.(2009)Babacan, Molina, and
  Katsaggelos]{babacan_variational_2009}
S.D. Babacan, R.~Molina, and A.K. Katsaggelos.
\newblock Variational {{Bayesian Blind Deconvolution Using}} a {{Total
  Variation Prior}}.
\newblock \emph{IEEE Transactions on Image Processing}, 18\penalty0
  (1):\penalty0 12--26, 2009.
\newblock \doi{10.1109/TIP.2008.2007354}.

\bibitem[Shull(2002)]{shull_nondestructive_2002}
Peter~J. Shull, editor.
\newblock \emph{Nondestructive evaluation: theory, techniques, and
  applications}.
\newblock Number 142 in Mechanical engineering. Dekker, New York Basel, 2002.

\bibitem[Doyley(2012)]{doyley_modelbased_2012}
M~M Doyley.
\newblock Model-based elastography: a survey of approaches to the inverse
  elasticity problem.
\newblock \emph{Physics in Medicine and Biology}, 57\penalty0 (3):\penalty0
  R35--R73, 2012.
\newblock \doi{10.1088/0031-9155/57/3/R35}.

\bibitem[Colton and Kress(2019)]{colton_inverse_2019}
David Colton and Rainer Kress.
\newblock \emph{Inverse {{Acoustic}} and {{Electromagnetic Scattering
  Theory}}}, volume~93 of \emph{Applied {{Mathematical Sciences}}}.
\newblock Springer International Publishing, Cham, 2019.
\newblock \doi{10.1007/978-3-030-30351-8}.

\bibitem[Evensen(2009)]{evensen_data_2009}
Geir Evensen.
\newblock \emph{Data {{Assimilation}}: {{The Ensemble Kalman Filter}}}.
\newblock Springer Berlin Heidelberg, Berlin, Heidelberg, 2009.
\newblock \doi{10.1007/978-3-642-03711-5}.

\bibitem[Lorenc(1986)]{lorenc_analysis_1986}
A.~C. Lorenc.
\newblock Analysis methods for numerical weather prediction.
\newblock \emph{Quarterly Journal of the Royal Meteorological Society},
  112\penalty0 (474):\penalty0 1177--1194, 1986.
\newblock \doi{10.1002/qj.49711247414}.

\bibitem[Friswell(2007)]{friswell_damage_2007}
Michael~I Friswell.
\newblock Damage identification using inverse methods.
\newblock \emph{Philosophical Transactions of the Royal Society A:
  Mathematical, Physical and Engineering Sciences}, 365\penalty0
  (1851):\penalty0 393--410, 2007.
\newblock \doi{10.1098/rsta.2006.1930}.

\bibitem[Borcea(2002)]{borcea_electrical_2002a}
Liliana Borcea.
\newblock Electrical impedance tomography.
\newblock \emph{Inverse Problems}, 18\penalty0 (6):\penalty0 R99--R136, 2002.
\newblock \doi{10.1088/0266-5611/18/6/201}.

\bibitem[Kaipio et~al.(2000)Kaipio, Kolehmainen, Somersalo, and
  Vauhkonen]{kaipio_statistical_2000}
Jari~P Kaipio, Ville Kolehmainen, Erkki Somersalo, and Marko Vauhkonen.
\newblock Statistical inversion and {{Monte Carlo}} sampling methods in
  electrical impedance tomography.
\newblock \emph{Inverse Problems}, 16\penalty0 (5):\penalty0 1487--1522, 2000.
\newblock \doi{10.1088/0266-5611/16/5/321}.

\bibitem[Gokhale et~al.(2008)Gokhale, Barbone, and
  Oberai]{gokhale_solution_2008}
Nachiket~H Gokhale, Paul~E Barbone, and Assad~A Oberai.
\newblock Solution of the nonlinear elasticity imaging inverse problem: the
  compressible case.
\newblock \emph{Inverse Problems}, 24\penalty0 (4):\penalty0 045010, 2008.
\newblock \doi{10.1088/0266-5611/24/4/045010}.

\bibitem[{\"O}zi{\c s}ik and Orlande(2021)]{ozisik_inverse_2021}
M.~Necati {\"O}zi{\c s}ik and Helcio R.~B. Orlande.
\newblock \emph{Inverse heat transfer: fundamentals and applications}.
\newblock Heat transfer series. CRC Press, Taylor \& Francis Group, Boca Raton
  London New York, second edition edition, 2021.

\bibitem[Higdon et~al.(2008)Higdon, Nakhleh, Gattiker, and
  Williams]{higdon_bayesian_2008}
Dave Higdon, Charles Nakhleh, James Gattiker, and Brian Williams.
\newblock A bayesian calibration approach to the thermal problem.
\newblock \emph{Computer Methods in Applied Mechanics and Engineering},
  197\penalty0 (29-32):\penalty0 2431--2441, 2008.
\newblock \doi{10.1016/j.cma.2007.05.031}.

\bibitem[Engl et~al.(1996)Engl, Hanke, and Neubauer]{engl_regularization_1996}
Heinz~W. Engl, Martin Hanke, and Andreas Neubauer.
\newblock \emph{Regularization of inverse problems}.
\newblock Number v. 375 in Mathematics and its applications. Kluwer Academic
  Publishers, Dordrecht ; Boston, 1996.

\bibitem[Vogel(2002)]{vogel_computational_2002}
Curtis~R. Vogel.
\newblock \emph{Computational methods for inverse problems}.
\newblock Frontiers in applied mathematics. {Society for Industrial and Applied
  Mathematics}, Philadelphia, 2002.

\bibitem[Dashti and Stuart(2017)]{dashti_bayesian_2017}
Masoumeh Dashti and Andrew~M. Stuart.
\newblock The {{Bayesian Approach}} to {{Inverse Problems}}.
\newblock In Roger Ghanem, David Higdon, and Houman Owhadi, editors,
  \emph{Handbook of {{Uncertainty Quantification}}}, pages 311--428. Springer
  International Publishing, Cham, 2017.
\newblock \doi{10.1007/978-3-319-12385-1_7}.

\bibitem[Stuart(2010)]{stuart_inverse_2010}
A.~M. Stuart.
\newblock Inverse problems: {{A Bayesian}} perspective.
\newblock \emph{Acta Numerica}, 19:\penalty0 451--559, 2010.
\newblock \doi{10.1017/S0962492910000061}.

\bibitem[Kaipio and Somersalo(2005)]{kaipio_statistical_2005}
Jari Kaipio and Erkki Somersalo.
\newblock \emph{Statistical and computational inverse problems}.
\newblock Number v. 160 in Applied mathematical sciences. Springer, New York,
  2005.

\bibitem[Robert and Casella(2004)]{robert_monte_2004}
Christian~P. Robert and George Casella.
\newblock \emph{Monte {{Carlo}} statistical methods}.
\newblock Springer texts in statistics. Springer, New York, 2nd ed edition,
  2004.

\bibitem[Metropolis et~al.(1953)Metropolis, Rosenbluth, Rosenbluth, Teller, and
  Teller]{metropolis_equation_1953}
Nicholas Metropolis, Arianna~W. Rosenbluth, Marshall~N. Rosenbluth, Augusta~H.
  Teller, and Edward Teller.
\newblock Equation of {{State Calculations}} by {{Fast Computing Machines}}.
\newblock \emph{The Journal of Chemical Physics}, 21\penalty0 (6):\penalty0
  1087--1092, 1953.
\newblock \doi{10.1063/1.1699114}.

\bibitem[Hastings(1970)]{hastings_monte_1970}
W.~K. Hastings.
\newblock Monte {{Carlo Sampling Methods Using Markov Chains}} and {{Their
  Applications}}.
\newblock \emph{Biometrika}, 57\penalty0 (1):\penalty0 97--109, 1970.
\newblock \doi{10.1093/biomet/57.1.97}.

\bibitem[Neal(2011)]{neal_mcmc_2011}
Radford~M Neal.
\newblock {{MCMC}} using {{Hamiltonian}} dynamics.
\newblock \emph{Handbook of markov chain monte carlo}, pages 47--95, 2011.

\bibitem[Haario et~al.(2006)Haario, Laine, Mira, and Saksman]{haario_dram_2006}
Heikki Haario, Marko Laine, Antonietta Mira, and Eero Saksman.
\newblock {{DRAM}}: {{Efficient}} adaptive {{MCMC}}.
\newblock \emph{Statistics and Computing}, 16\penalty0 (4):\penalty0 339--354,
  2006.
\newblock \doi{10.1007/s11222-006-9438-0}.

\bibitem[Gelman and Rubin(1992)]{gelman_inference_1992}
Andrew Gelman and Donald~B Rubin.
\newblock Inference from iterative simulation using multiple sequences.
\newblock \emph{Statistical science}, 7\penalty0 (4):\penalty0 457--472, 1992.

\bibitem[Brooks and Gelman(1998)]{brooks_general_1998}
Stephen~P. Brooks and Andrew Gelman.
\newblock General {{Methods}} for {{Monitoring Convergence}} of {{Iterative
  Simulations}}.
\newblock \emph{Journal of Computational and Graphical Statistics}, 7\penalty0
  (4):\penalty0 434--455, 1998.
\newblock \doi{10.1080/10618600.1998.10474787}.

\bibitem[Blei et~al.(2017)Blei, Kucukelbir, and
  McAuliffe]{blei_variational_2017}
David~M. Blei, Alp Kucukelbir, and Jon~D. McAuliffe.
\newblock Variational {{Inference}}: {{A Review}} for {{Statisticians}}.
\newblock \emph{Journal of the American Statistical Association}, 112\penalty0
  (518):\penalty0 859--877, 2017.
\newblock \doi{10.1080/01621459.2017.1285773}.

\bibitem[Jordan et~al.(1998)Jordan, Ghahramani, Jaakkola, and
  Saul]{jordan_introduction_1998}
Michael~I. Jordan, Zoubin Ghahramani, Tommi~S. Jaakkola, and Lawrence~K. Saul.
\newblock An {{Introduction}} to {{Variational Methods}} for {{Graphical
  Models}}.
\newblock In Michael~I. Jordan, editor, \emph{Learning in {{Graphical
  Models}}}, pages 105--161. Springer Netherlands, Dordrecht, 1998.
\newblock \doi{10.1007/978-94-011-5014-9_5}.

\bibitem[Raissi et~al.(2019)Raissi, Perdikaris, and
  Karniadakis]{raissi_physicsinformed_2019}
M.~Raissi, P.~Perdikaris, and G.E. Karniadakis.
\newblock Physics-informed neural networks: {{A}} deep learning framework for
  solving forward and inverse problems involving nonlinear partial differential
  equations.
\newblock \emph{Journal of Computational Physics}, 378:\penalty0 686--707,
  2019.
\newblock \doi{10.1016/j.jcp.2018.10.045}.

\bibitem[Lu et~al.(2021)Lu, Jin, and Karniadakis]{lu_deeponet_2021}
Lu~Lu, Pengzhan Jin, and George~Em Karniadakis.
\newblock {{DeepONet}}: {{Learning}} nonlinear operators for identifying
  differential equations based on the universal approximation theorem of
  operators.
\newblock \emph{Nature Machine Intelligence}, 3\penalty0 (3):\penalty0
  218--229, 2021.
\newblock \doi{10.1038/s42256-021-00302-5}.

\bibitem[Li et~al.(2021)Li, Kovachki, Azizzadenesheli, Liu, Bhattacharya,
  Stuart, and Anandkumar]{li_fourier_2021a}
Zongyi Li, Nikola Kovachki, Kamyar Azizzadenesheli, Burigede Liu, Kaushik
  Bhattacharya, Andrew Stuart, and Anima Anandkumar.
\newblock Fourier {{Neural Operator}} for {{Parametric Partial Differential
  Equations}}.
\newblock In \emph{9th {{International Conference}} on {{Learning
  Representations}} ({{ICLR}} 2021)}, 2021.

\bibitem[Rezende and Mohamed(2015)]{rezende_variational_2015}
Danilo Rezende and Shakir Mohamed.
\newblock Variational inference with normalizing flows.
\newblock In \emph{International conference on machine learning}, pages
  1530--1538. PMLR, 2015.

\bibitem[Ardizzone et~al.(2019)Ardizzone, Kruse, Rother, and
  K{\"o}the]{ardizzone_analyzing_2019}
Lynton Ardizzone, Jakob Kruse, Carsten Rother, and Ullrich K{\"o}the.
\newblock Analyzing {{Inverse Problems}} with {{Invertible Neural Networks}}.
\newblock In \emph{International {{Conference}} on {{Learning
  Representations}}}, 2019.

\bibitem[Song et~al.(2021)Song, {Sohl-Dickstein}, Kingma, Kumar, Ermon, and
  Poole]{song_scorebased_2021}
Yang Song, Jascha {Sohl-Dickstein}, Diederik~P Kingma, Abhishek Kumar, Stefano
  Ermon, and Ben Poole.
\newblock Score-{{Based Generative Modeling}} through {{Stochastic Differential
  Equations}}.
\newblock In \emph{International {{Conference}} on {{Learning
  Representations}}}, 2021.

\bibitem[Lipman et~al.(2023)Lipman, Chen, {Ben-Hamu}, Nickel, and
  Le]{lipman_flow_2023}
Yaron Lipman, Ricky T.~Q. Chen, Heli {Ben-Hamu}, Maximilian Nickel, and Matthew
  Le.
\newblock Flow {{Matching}} for {{Generative Modeling}}.
\newblock In \emph{The {{Eleventh International Conference}} on {{Learning
  Representations}}}, 2023.

\bibitem[Song et~al.(2022)Song, Shen, Xing, and Ermon]{song_solving_2022}
Yang Song, Liyue Shen, Lei Xing, and Stefano Ermon.
\newblock Solving {{Inverse Problems}} in {{Medical Imaging}} with
  {{Score-Based Generative Models}}.
\newblock In \emph{International {{Conference}} on {{Learning
  Representations}}}, 2022.

\bibitem[Wildberger et~al.(2023)Wildberger, Dax, Buchholz, Green, Macke, and
  Sch{\"o}lkopf]{wildberger_flow_2023}
Jonas~Bernhard Wildberger, Maximilian Dax, Simon Buchholz, Stephen~R Green,
  Jakob~H. Macke, and Bernhard Sch{\"o}lkopf.
\newblock Flow {{Matching}} for {{Scalable Simulation-Based Inference}}.
\newblock In \emph{Thirty-seventh {{Conference}} on {{Neural Information
  Processing Systems}}}, 2023.

\bibitem[Adler and {\"O}ktem(2018)]{adler_deep_2018}
Jonas Adler and Ozan {\"O}ktem.
\newblock Deep {{Bayesian Inversion}}.
\newblock \emph{arXiv:1811.05910 [cs, math, stat]}, 2018.

\bibitem[Patel et~al.(2022)Patel, Ray, and Oberai]{patel_solution_2022a}
Dhruv~V. Patel, Deep Ray, and Assad~A. Oberai.
\newblock Solution of physics-based bayesian inverse problems with deep
  generative priors.
\newblock \emph{Computer Methods in Applied Mechanics and Engineering},
  400:\penalty0 115428, 2022.
\newblock \doi{10.1016/j.cma.2022.115428}.

\bibitem[Scholz et~al.(2025)Scholz, Zang, and Koutsourelakis]{scholz_weak_2025}
Vincent~C. Scholz, Yaohua Zang, and Phaedon-Stelios Koutsourelakis.
\newblock Weak neural variational inference for solving bayesian inverse
  problems without forward models: Applications in elastography.
\newblock \emph{Computer Methods in Applied Mechanics and Engineering},
  433:\penalty0 117493, 2025.
\newblock \doi{10.1016/j.cma.2024.117493}.

\bibitem[Cranmer et~al.(2020)Cranmer, Brehmer, and
  Louppe]{cranmer_frontier_2020}
Kyle Cranmer, Johann Brehmer, and Gilles Louppe.
\newblock The frontier of simulation-based inference.
\newblock \emph{Proceedings of the National Academy of Sciences}, 117\penalty0
  (48):\penalty0 30055--30062, 2020.
\newblock \doi{10.1073/pnas.1912789117}.

\bibitem[Kingma and Welling(2014)]{kingma_autoencoding_2014}
Diederik~P. Kingma and Max Welling.
\newblock Auto-{{Encoding Variational Bayes}}.
\newblock In \emph{2nd {{International Conference}} on {{Learning
  Representations}} ({{ICLR}} 2014)}, Banff, AB, Canada, 2014.

\bibitem[Rezende et~al.(2014)Rezende, Mohamed, and
  Wierstra]{rezende_stochastic_2014}
Danilo~Jimenez Rezende, Shakir Mohamed, and Daan Wierstra.
\newblock Stochastic backpropagation and approximate inference in deep
  generative models.
\newblock In \emph{International conference on machine learning}, pages
  1278--1286. PMLR, 2014.

\bibitem[Murphy(2023)]{murphy_probabilistic_2023}
Kevin~P. Murphy.
\newblock \emph{Probabilistic machine learning: advanced topics}.
\newblock Adaptive computation and machine learning series. The MIT Press,
  Cambridge, Massachusetts, 2023.

\bibitem[He et~al.(2019)He, Spokoyny, Neubig, and
  {Berg-Kirkpatrick}]{he_lagging_2019}
Junxian He, Daniel Spokoyny, Graham Neubig, and Taylor {Berg-Kirkpatrick}.
\newblock Lagging {{Inference Networks}} and {{Posterior Collapse}} in
  {{Variational Autoencoders}}.
\newblock In \emph{International {{Conference}} on {{Learning
  Representations}}}, 2019.

\bibitem[Goh et~al.(2022)Goh, Sheriffdeen, Wittmer, and
  {Bui-Thanh}]{goh_solving_2022}
Hwan Goh, Sheroze Sheriffdeen, Jonathan Wittmer, and Tan {Bui-Thanh}.
\newblock Solving {{Bayesian Inverse Problems}} via {{Variational
  Autoencoders}}.
\newblock In \emph{Proceedings of the 2nd {{Mathematical}} and {{Scientific
  Machine Learning Conference}}}, volume 145 of \emph{Proceedings of {{Machine
  Learning Research}}}, pages 386--425. PMLR, 2022.

\bibitem[Sutter et~al.(2020)Sutter, Daunhawer, and
  Vogt]{sutter_multimodal_2020a}
Thomas Sutter, Imant Daunhawer, and Julia Vogt.
\newblock Multimodal generative learning utilizing jensen-shannon-divergence.
\newblock \emph{Advances in neural information processing systems},
  33:\penalty0 6100--6110, 2020.

\bibitem[Kingma and Welling(2013)]{Kingma_2013_AutoEncodingVB}
Diederik~P. Kingma and Max Welling.
\newblock Auto-encoding variational bayes.
\newblock \emph{CoRR}, abs/1312.6114, 2013.
\newblock URL \url{https://api.semanticscholar.org/CorpusID:216078090}.

\bibitem[Arjovsky et~al.(2017)Arjovsky, Chintala, and
  Bottou]{arjovsky_wasserstein_2017}
Martin Arjovsky, Soumith Chintala, and L{\'e}on Bottou.
\newblock Wasserstein generative adversarial networks.
\newblock In \emph{International conference on machine learning}, pages
  214--223. PMLR, 2017.

\bibitem[Nielsen(2011)]{nielsen_family_2011}
Frank Nielsen.
\newblock A family of statistical symmetric divergences based on {{Jensen}}'s
  inequality, December 2011.

\bibitem[Dowson and Landau(1982)]{dowson_frechet_1982}
D~C Dowson and B~V Landau.
\newblock The fr\'echet distance between multivariate normal distributions.
\newblock \emph{Journal of multivariate analysis}, 12\penalty0 (3):\penalty0
  450--455, 1982.

\bibitem[Bhatia et~al.(2019)Bhatia, Jain, and Lim]{bhatia_bures_2019a}
Rajendra Bhatia, Tanvi Jain, and Yongdo Lim.
\newblock On the {{Bures}}--{{Wasserstein}} distance between positive definite
  matrices.
\newblock \emph{Expositiones Mathematicae}, 37\penalty0 (2):\penalty0 165--191,
  2019.
\newblock \doi{10.1016/j.exmath.2018.01.002}.

\bibitem[Villani(2008)]{villani_optimal_2008}
C{\'e}dric Villani.
\newblock \emph{Optimal {{Transport}}: {{Old}} and {{New}}}.
\newblock Springer Science \& Business Media, 2008.

\bibitem[Walter and Pronzato(1997)]{walter_identification_1997}
E.~Walter and Luc Pronzato.
\newblock \emph{Identification of Parametric Models from Experimental Data}.
\newblock Communications and control engineering. Springer ; Masson, Berlin ;
  New York : Paris, 1997.

\bibitem[Higgins et~al.(2017)Higgins, Matthey, Pal, Burgess, Glorot, Botvinick,
  Mohamed, and Lerchner]{higgins_betavae_2017}
Irina Higgins, Loic Matthey, Arka Pal, Christopher Burgess, Xavier Glorot,
  Matthew Botvinick, Shakir Mohamed, and Alexander Lerchner.
\newblock Beta-{{VAE}}: {{Learning Basic Visual Concepts}} with a {{Constrained
  Variational Framework}}.
\newblock In \emph{International {{Conference}} on {{Learning
  Representations}}}, 2017.

\bibitem[Kingma and Ba(2015)]{kingma_adam_2015}
Diederik~P. Kingma and Jimmy Ba.
\newblock Adam: A method for stochastic optimization.
\newblock In Yoshua Bengio and Yann LeCun, editors, \emph{3rd International
  Conference on Learning Representations, ICLR 2015, San Diego, CA, USA, May
  7-9, 2015, Conference Track Proceedings}, 2015.

\bibitem[Scroggs et~al.(2022)Scroggs, Dokken, Richardson, and
  Wells]{ScroggsEtal2022}
Matthew~W. Scroggs, J{\o}rgen~S. Dokken, Chris~N. Richardson, and Garth~N.
  Wells.
\newblock Construction of arbitrary order finite element degree-of-freedom maps
  on polygonal and polyhedral cell meshes.
\newblock \emph{ACM Transactions on Mathematical Software}, 48\penalty0
  (2):\penalty0 {18:1--18:23}, 2022.
\newblock \doi{10.1145/3524456}.

\bibitem[Smith(2013)]{smith_uncertainty_2013}
Ralph~C. Smith.
\newblock \emph{Uncertainty quantification: theory, implementation, and
  applications}.
\newblock Computational science and engineering series. {Society for Industrial
  and Applied Mathematics}, Philadelphia, 2013.

\bibitem[Bishop(2006)]{bishop_pattern_2006}
Christopher~M. Bishop.
\newblock \emph{Pattern Recognition and Machine Learning}.
\newblock Information science and statistics. Springer, New York, 2006.

\bibitem[Haario et~al.(2001)Haario, Saksman, and
  Tamminen]{haario_adaptive_2001}
Heikki Haario, Eero Saksman, and Johanna Tamminen.
\newblock An {{Adaptive Metropolis Algorithm}}.
\newblock \emph{Bernoulli}, 7\penalty0 (2):\penalty0 223, 2001.
\newblock \doi{10.2307/3318737}.

\bibitem[Gelman et~al.(1996)Gelman, Meng, and Stern]{gelman_posterior_1996}
Andrew Gelman, Xiao-Li Meng, and Hal Stern.
\newblock Posterior predictive assessment of model fitness via realized
  discrepancies.
\newblock \emph{Statistica sinica}, pages 733--760, 1996.

\bibitem[Vehtari et~al.(2021)Vehtari, Gelman, Simpson, Carpenter, and
  B{\"u}rkner]{vehtari2021rank}
Aki Vehtari, Andrew Gelman, Daniel Simpson, Bob Carpenter, and Paul-Christian
  B{\"u}rkner.
\newblock Rank-normalization, folding, and localization: An improved $\hat{R}$
  for assessing convergence of mcmc (with discussion).
\newblock \emph{Bayesian analysis}, 16\penalty0 (2):\penalty0 667--718, 2021.

\bibitem[Gelman et~al.(2014)Gelman, Carlin, Stern, Dunson, Vehtari, and
  Rubin]{gelman_bayesian_2014}
Andrew Gelman, John~B. Carlin, Hal~Steven Stern, David~B. Dunson, Aki Vehtari,
  and Donald~B. Rubin.
\newblock \emph{Bayesian data analysis}.
\newblock CRC Press, Boca Raton, third edition edition, 2014.

\bibitem[Kumar et~al.(2019)Kumar, Carroll, Hartikainen, and
  Martin]{kumar2019arviz}
Ravin Kumar, Colin Carroll, Ari Hartikainen, and Osvaldo Martin.
\newblock Arviz a unified library for exploratory analysis of bayesian models
  in python.
\newblock \emph{Journal of Open Source Software}, 4\penalty0 (33):\penalty0
  1143, 2019.

\bibitem[Barber(2012)]{barber_bayesian_2012}
David Barber.
\newblock \emph{Bayesian Reasoning and Machine Learning}.
\newblock Cambridge University Press, Cambridge ; New York, 2012.

\end{thebibliography}
\end{document}